\newtheorem{thm}{Theorem}
\newtheorem{lem}{Lemma}
\newtheorem{cor}{Corollary}
\newtheorem{aspt}{Assumption}
\newtheorem{defn}{Definition}
\newcommand{\genComment}[2]{\ifnum\comments=1{\textcolor{#1}{\textsf{\footnotesize #2}}}\fi}
\def\eqref#1{equation~\ref{#1}}
\def\1{\bm{1}}
\DeclareMathAlphabet{\mathsfit}{\encodingdefault}{\sfdefault}{m}{sl}
\SetMathAlphabet{\mathsfit}{bold}{\encodingdefault}{\sfdefault}{bx}{n}
\def\gA{{\mathcal{A}}}
\def\gD{{\mathcal{D}}}
\def\gH{{\mathcal{H}}}
\def\gI{{\mathcal{I}}}
\def\gJ{{\mathcal{J}}}
\def\gL{{\mathcal{L}}}
\def\gM{{\mathcal{M}}}
\def\gP{{\mathcal{P}}}
\def\gS{{\mathcal{S}}}
\def\gX{{\mathcal{X}}}
\def\prob{{\mathbb{P}}}
\def\R{{\mathbb{R}}}
\newcommand{\E}{\mathbb{E}}
\DeclareMathOperator*{\argmax}{arg\,max}
\newcommand{\off}{\operatorname{off}}
\newcommand{\on}{\operatorname{on}}
\definecolor{ballblue}{rgb}{0.13, 0.67, 0.8}
\definecolor{darkred}{RGB}{139,0,0}
\title{Hybrid Reinforcement Learning Breaks Sample Size Barriers \\ in Linear MDPs} 
\author{Kevin Tan}
\author{Wei Fan}
\author{Yuting Wei}
\affil{
  Department of Statistics and Data Science\\
  The Wharton School, University of Pennsylvania}
\date{}
\begin{document}

\maketitle

\begin{abstract}
    Hybrid Reinforcement Learning (RL), where an agent learns from both an offline dataset and online explorations in an unknown environment, has garnered significant recent interest. 
    A crucial question posed by \cite{xie2022policy} is whether hybrid RL can improve upon the existing lower bounds established in purely offline and purely online RL without relying on the single-policy concentrability assumption. 
    While \cite{li2023reward} provided an affirmative answer to this question in the tabular PAC RL case, the question remains unsettled for both the regret-minimizing RL case and the non-tabular case. 
    In this work,
    building upon recent advancements in offline RL and reward-agnostic exploration, we develop computationally efficient algorithms for both PAC and regret-minimizing RL with linear function approximation, without single-policy concentrability. 
    We demonstrate that these algorithms achieve sharper error or regret bounds that are no worse than, and can improve on, the optimal sample complexity in offline RL (the first algorithm, for PAC RL) and online RL (the second algorithm, for regret-minimizing RL) in linear Markov decision processes (MDPs), regardless of the quality of the behavior policy. 
    To our knowledge, this work establishes the tightest theoretical guarantees currently available for hybrid RL in linear MDPs.
\end{abstract}

\tableofcontents

\section{Introduction}

Reinforcement learning (RL) holds great promise in attaining reliable decision-making in adaptive environments for a broad range of modern applications. 
In these applications, typical RL algorithms often require an enormous number of training samples in order to reach the desired level of accuracy. 
This has motivated a line of recent efforts to study the sample efficiency of RL algorithms. 
There are two mainstream paradigms of RL, distinguished by how samples are collected: online RL and offline RL. 
In the setting of online RL, an agent learns in a real-time manner, exploring the environment to maximize her cumulative rewards by executing a sequence of adaptively chosen policies 
(e.g.~\cite{azar2017minimax,kearns2002near,jin2018q,sutton1998,zhang2023settling}).
These online RL algorithms often suffer from insufficient use of data samples due to a lack of a reference policy at the initial stage of the learning process. 
Whereas, in offline RL, an agent has only access to a pre-collected dataset, and tries to figure out how to perform well in a different environment without ever experiencing it 
(e.g.~\cite{levine2020offline,lange2012batch,jin2021pessimism,xie2022policy,li2024settling}). 
Offline methods therefore often impose stringent requirements on the quality of the pre-collected data.

To address limitations from both cases, the setting of hybrid RL \citep{xie2022policy, song2023hybrid} has recently received considerable attention from both theoretical and practical perspectives (see, e.g.~\cite{vecerik2017leveraging,nair2020awac,song2023hybrid, nakamoto2023calql, wagenmaker2023leveraging, li2023reward, ball2023efficient,zhou2023offline, amortila2024harnessing, tan2024natural, kausik2024leveragingofflinedatalinear} and references therein). 
In hybrid RL, an agent learns from a combination of both offline and online data, extracting information from offline data to enhance online exploration. 
The theoretical guarantees of hybrid RL algorithms can be categorized based on the following criteria:  
(1) the degree of function approximation considered, 
(2) the level of coverage required by the behavior policy, 
(3) whether it achieves an improvement over the \textit{minimax lower bounds} for online-only and offline-only learning, 
and (4) whether they attempt to perform regret minimization or to learn an $\epsilon$-optimal policy (often referred to as PAC learning).
We elaborate below, and summarize the prior art in Table \ref{tab:prev-lit}.
\begin{table}[htbp!]
\centering
\scalebox{0.85}{
\begin{tabular}{ccccc}
\toprule  Paper & Function Type & Concentrability? & Improvement? & Regret or PAC? \\
\toprule 
\cite{song2023hybrid}& General & Required & No & Regret\\
\cite{nakamoto2023calql}\\
\hline
\cite{tan2024natural}& General & Not Required & No & Regret\\
\cite{amortila2024harnessing}  
\\
\hline
\cite{wagenmaker2023leveraging}
& Linear & Not Required & No & PAC \\
\hline
\cite{li2023reward} & Tabular & Not Required & Yes & PAC
\\
\hline
\textcolor{blue}{This work} & Linear & Not Required & Yes & Regret, PAC\\
\toprule 
\end{tabular}
}
\caption{Comparison of our contributions to previous work in hybrid RL.}
\label{tab:prev-lit}
\end{table}


While most of the prior literature \citep{song2023hybrid, nakamoto2023calql, zhou2023offline, tan2024natural, amortila2024harnessing} have explored general function approximation in hybrid RL, they either require stringent concentrability assumptions on the quality of the behavior policy, or fail to obtain tight theoretical guarantees. 
Under such single-policy concentrability assumptions (explained below), it has been shown in \cite{xie2022policy} that the optimal policy learning algorithm is either a purely offline reduction or a purely online RL algorithm if the agent can choose the ratio of offline to online samples, rendering the benefits of hybrid RL questionable. 
In scenarios where this assumption is not satisfied \citep{li2023reward, wagenmaker2023leveraging, tan2024natural, amortila2024harnessing},  \cite{li2023reward} obtained theoretical guarantees for PAC RL that improve over lower bounds established for offline-only and online-only RL. However, the work of \cite{li2023reward} remains restricted to the tabular case with finite number of states and actions. 

To further explore the efficacy of hybrid RL, this paper focuses on obtaining sharper theoretical guarantees in the setting of linear function approximation, specifically in the case of \textit{linear MDPs}. First proposed in \cite{yang2019sampleoptimal, jin2019provably}, 
the linear MDP setting parameterizes the transition probability matrix and reward function by linear functions of known features. It has since been extensively studied due to its benefits in dimension reduction and mathematical traceability in both the online and offline settings (see, e.g.~\cite{yang2019sampleoptimal,qiao2022near,du2019good, li2021sample, jin2019provably, zanette2021provable,yin2022nearoptimal,xiong2023nearly, he2023nearly, hu2023nearlynot, min2021variance, duan2020minimaxoptimal}).
Despite these efforts, hybrid RL algorithms for linear MDPs \citep{wagenmaker2023leveraging, song2023hybrid, nakamoto2023calql, amortila2024harnessing, tan2024natural} have suboptimal worst-case guarantees (Table \ref{tab:best-bounds}), which raises the question:
\begin{center}
\emph{Is it possible to develop sample efficient RL algorithms in the setting of hybrid RL that are provably better than online-only and offline-only algorithms for linear MDPs?}
\end{center}

\subsection{Hybrid RL: two approaches}

Before answering the question above, we begin by introducing two types of approaches that are widely-adopted in hybrid RL. 

\paragraph{The offline-to-online approach:} Most of the current literature (e.g. \cite{song2023hybrid, nakamoto2023calql, amortila2024harnessing, tan2024natural}) adopts the approach of initializing the online dataset with offline samples, in order to perform regret-minimizing online RL. We shall refer to this as the \textit{offline-to-online} approach. This method is simple and natural, offering several additional benefits. 
For instance, the algorithm optimizes the reward received during each online episode, making it suitable when it is crucial for the agent to perform well on average during its online exploration.


\paragraph{The online-to-offline approach:} 
However, if our goal is to output a near-optimal policy, especially in real-world situations such as medical treatment and defense-related applications, it is not ideal and sometimes even unethical to provide a randomized policy with guarantees that hold only on average.
Recently, \cite{wagenmaker2023leveraging} and \cite{li2023reward} propose using reward-agnostic online exploration to explore the parts of the state space that are not well-covered by the behavior policy, thereby constructing a dataset that is especially amenable for offline RL. We refer to this as the \textit{online-to-offline approach}. This method allows leveraging the sharp performance guarantees of offline RL when the single-policy concentrability coefficient is low. While this approach does not optimize the ``true reward'' during online exploration, and  is therefore incompatible with the regret minimization framework, it avoids the need to deploy mixed policies to achieve a PAC bound, allowing for the deployment of fixed, and thus more interpretable, policies.




\subsection{Our contributions} 



In view of these two types of hybrid RL algorithms, focusing on the setting of linear MDPs, we answer the aforementioned question in the affirmative. 
We summarize our main contributions below. 

\begin{itemize}

    \item We propose an online-to-offline algorithm called \emph{Reward-Agnostic Pessimistic PAC Exploration-initialized Learning (RAPPEL)} in Algorithm \ref{alg:rappel}. This algorithm employs reward-agnostic online exploration to enhance the offline dataset, followed by a pessimistic offline RL algorithm to learn an optimal policy. We show that the sample complexity of Algorithm \ref{alg:rappel} significantly improves upon the only dedicated hybrid RL algorithm for linear MDPs \citep{wagenmaker2023leveraging} by a factor of at least $H^3$ (with $H$ the time horizon). 
    Additionally, this result demonstrates that hybrid RL can perform no worse than the offline-only minimax-optimal error bound from \cite{xiong2023nearly}, with the potential of significant gains if one has access to a large number of online samples. This is also the first work to explore the online-to-offline approach in linear MDPs.

    \item In addition, we propose an offline-to-online method called \emph{Hybrid Regression for Upper-Confidence Reinforcement Learning (HYRULE)} in Algorithm \ref{alg:hyrule}, where one warm-starts an online RL algorithm with parameters estimated from offline data.
    In addition to improving the ambient dimension dependence, this algorithm enjoys a regret (or sample-complexity) bound that is no worse than the online-only minimax optimal bound, with the potential of significant gains if the offline dataset is of high quality \citep{zhou2021nearly, he2023nearly, hu2023nearlynot, agarwal2022voql}. Our result demonstrates the provable benefits of hybrid RL in scenarios where offline samples are much cheaper or much easier to acquire. 
\end{itemize}

To the best of our knowledge, we are the first to show improvements over the aforementioned lower bounds of hybrid RL algorithms (in the same vein as \cite{li2023reward}) in the presence of function approximation, without any explicit requirements on the quality of the behavior policy, and with both the offline-to-online and online-to-offline approaches. Our results are also, at the point of writing, the best bounds available in the literature for hybrid RL in linear MDPs (see Table \ref{tab:best-bounds}).

\begin{table}[t]
\centering
\scalebox{0.85}{
\begin{tabular}{c|c|c}
\toprule  & Upper Bound & Lower Bound 
\tabularnewline
\toprule 
Offline (Error) & $\sqrt{d} \cdot \sum_{h=1}^H \mathbb{E}_{\pi^*}\left\|\phi\left(s_h, a_h\right)\right\|_{\Sigma_{\off,h}^{*-1}}$& $\sqrt{d} \cdot \sum_{h=1}^H \mathbb{E}_{\pi^*}\left\|\phi\left(s_h, a_h\right)\right\|_{\Sigma_{\off,h}^{*-1}}$
\tabularnewline
& $\tcbhighmath[colframe=ballblue]{\leq \sqrt{C^*d^2H^4/N_{\off}}}$ \citep{xiong2023nearly} & $\tcbhighmath[colframe=ballblue]{\geq \sqrt{C^*d^2H^2/N_{\off}}}$ \citep{xiong2023nearly} 
\tabularnewline \hline
Online (Regret) & $\tcbhighmath[colframe=red]{\sqrt{d^2H^3T}}$ \citep{he2023nearly} & $\tcbhighmath[colframe=red]{\sqrt{d^2H^3T}}$ \citep{zhou2021nearly}
\tabularnewline
\toprule
& \multicolumn{2}{c} {Result}
\tabularnewline
\toprule
Hybrid & \multicolumn{2}{c} {$\sqrt{d^2H^7/N}$\citep{wagenmaker2023leveraging}}
\tabularnewline
(Online-to-offline Error) & \multicolumn{2}{c}{$\tcbhighmath[colframe=ballblue]{\sqrt{c_{\off}(\gX_{\off})dH^{3}\min\{c_{\off}(\gX_{\off}),H\}/N_{\off}} + \sqrt{d_{\on}dH^{3}\min\{d_{\on},H\}/N_{\on}}}$ (\textcolor{blue}{Alg. \ref{alg:rappel}})} 
\tabularnewline\hline
Hybrid & \multicolumn{2}{c}{$C^*\sqrt{d^2H^6N_{\on}}$ \citep{song2023hybrid, nakamoto2023calql}}
\tabularnewline
(Offline-to-online Regret) & \multicolumn{2}{c}{$\sqrt{(C^*+c_{\on}(\gX))d^3H^6N_{\on}}$ \citep{amortila2024harnessing}} 
\tabularnewline
& \multicolumn{2}{c}{$\sqrt{c_{\off}(\gX_{\off})dH^5N_{\on}^2/N_{\off}} + \sqrt{d_{\on}dH^5N_{\on}}$ \citep{tan2024natural}} 
\tabularnewline
& \multicolumn{2}{c}{$\tcbhighmath[colframe=red]{\sqrt{c_{\off}(\gX_{\off})^2dH^3N_{\on}^2/N_{\off}} + \sqrt{d_{\on}dH^3N_{\on}}}$ (\textcolor{blue}{Alg. \ref{alg:hyrule}})}
\tabularnewline
\toprule 
\end{tabular}}
\medskip
\caption{Comparisons of our results to the best upper and lower bounds available, and existing results for hybrid RL, in linear MDPs. The inequalities in the offline row hold when the behavior policy satisfies $C^*$-single policy concentrability. Often, offline data is cheaper or easier to obtain. When this happens, $N_{\off} \gg N_{\on}$, and the second term (depending on $N_{\on}=T$) dominates. }
\label{tab:best-bounds}
\vspace{-5mm}
\end{table}

\paragraph{Technical contributions.} 
In this work, we build on recent advancements in offline and online RL with intuitive modifications, demonstrating that it is possible to achieve state-of-the-art sample complexity in a hybrid setting for linear MDPs. At a high level, our improvements in sample complexity are achieved by decomposing the error of interest into offline and online partitions, and optimizing them respectively, following the same idea in \cite{tan2024natural}.
Below, we summarize our specific technical contributions.
\begin{enumerate}
    \item We sharpen the dimensional dependence from $d$ to $d_{\text{on}}$ and $c_{\text{off}}(\mathcal{X}_{\text{off}})$ via projections onto those partitions. The former is accomplished in Algorithm \ref{alg:rappel} by Kiefer-Wolfowitz in Lemma \ref{lem:cov-linear}, and in Algorithm \ref{alg:hyrule} by proving a sharper variant of Lemma B.1 from Zhou and Gu (2022) in Lemma \ref{lem:zhou2022-B1-modified}, using this in Lemma \ref{lem:hybridonline-sumbonuses-xon} to reduce the dimensional dependence in the summation of bonuses, which helps achieve the desired result.
    \item We maintain a $H^3$ dependence for the error or regret for both algorithms, which is non-trivial. This is accomplished in Algorithm \ref{alg:rappel} and for the offline partition in Algorithm \ref{alg:hyrule}  by combining the total variance lemma with a novel truncation argument that rules out “bad” trajectories in Lemma \ref{lem:total-variance-concentration}. 
\end{enumerate}

\section{Preliminaries}
\label{sec:setting}

\vspace{-2mm}
\subsection{Basics of Markov decision processes} 
Consider an episodic MDP denoted by a tuple $\gM = \left(\gS, \gA, H, (\prob_h)_{h=1}^H, (r_h)_{h=1}^H\right)$, where $\gS$ is the state space, $\gA$ the action space, $H$ the horizon, $(\prob_h)_{h=1}^H$ the collection of transition probability kernels where each $\prob_h : \gS \times \gA \to \Delta(\gS)$, and $(r_h)_{h=1}^H$ the collection of reward functions where each $r_h : \gS \times \gA \to [0,1]$. We use $\Delta(\cdot)$ to denote the collection of distributions over a set, and write $[H]=1,...,H$. At each $h\in [H]$, an agent observes the current state $s_h \in \gS$, takes an action $a_h \in \gA$ according to a randomized decision rule $\pi_h : \gS \to \Delta(\gA)$, and observes the reward $r_h$. The next state $s_{h+1}$ then evolves according to $s_{h+1} \sim \prob_h(\cdot \mid s_h, a_h)$. A policy is given by the collection of policies at each horizon $h \in [H]$, $\pi = \{\pi_h\}_{1\leq h\leq H},$ and we write $\Pi$ for the set of all policies. 

We define the value function and Q-functions associated with each policy $\pi \in \Pi$ as follows:
\begin{align}
    \text{for every } (s, h) \in \gS \times [H]: ~~&V_h^\pi(s) := \E_\pi[\textstyle\sum\nolimits_{h'=h}^Hr_{h'}|s_{h}=s],\\
    \text{and for every } (s, a, h) \in \gS \times \gA \times [H]: 
    ~~& Q_h^\pi(s,a) := \E_\pi[\textstyle\sum\nolimits_{h'=h}^Hr _{h'}|s_{h}=s, a_h=a].
\end{align}
$\pi^* = \{\pi^*\}_{h=1}^H$ is the optimal policy attaining the highest value and Q-functions, and we write $V^* = \{V_h^*\}_{h=1}^H$ and $Q^* = \{Q_h^*\}_{h=1}^H$ for the optimal value and Q-functions. 

We consider the setting of hybrid RL, where an agent has access to two sources of data:
\begin{itemize}
    \item $N_{\off}$ independent episodes of length $H$ collected by a behavior policy $\pi_b$
where the $n$-th sample trajectory is a sequence of data $(s^{(n)}_1 , a^{(n)}_1 ,r_1^{(n)}, ..., s^{(n)}_H , a^{(n)}_H ,r^{(n)}_H, s^{(n)}_{H+1});$

\item $N_{\on}$ sequential episodes of online data, where at each episode $n=1,...,N_{\on}$, the algorithm has knowledge of the $N_{\off}$ offline episodes and the previous online episodes $1,...,n-1$. 
\end{itemize}
The quality of the behavior policy  $\pi_b$ is measured by the all-policy and single-policy concentrability coefficients proposed by \cite{xie2023bellmanconsistent, zhan2022offline}:
\begin{defn}[Occupancy Measure]
    For a policy $\pi = \{\pi_h\}_{h=1}^H$, its occupancy measure $d^\pi = \{d^\pi_h\}_{h=1}^H$ corresponds to the collection of distributions over states and actions induced by running $\pi$ within $\gM$, where for some initial distribution $\rho$ and $s_1 \sim \rho$, we have 
    \begin{align}
        d_h^\pi(s,a) := \mathbb{P}(s_h = s, a_h =a \mid s_1 \sim \rho, \pi).
    \end{align}
\end{defn}
\begin{defn}[Concentrability Coefficient]
    For a policy $\pi$, its all-policy and single-policy concentrability coefficients with regard to the occupancy measure of a behavior policy $\pi_b$ are 
    \begin{align}
    C_{\text{all}} :=\sup_\pi \sup_{h,s,a} \frac{d_h^\pi(s,a)}{\mu_h(s,a)}
    ~\text{ and }~ 
    C^*:=\sup_{h,s,a} \frac{d_h^*(s,a)}{\mu_h(s,a)},
    \end{align}
where we write $\mu=\{\mu_h\}_{h=1}^H$ for the occupancy measure of $\pi_b$. 
\end{defn}

\paragraph{Policy learning and regret minimization.} 
The recurring goal of hybrid RL is to either learn an $\epsilon$-optimal policy $\widehat{\pi}$ such that
$V^* - V^{\widehat{\pi}} \leq \epsilon \text{ with high probability},$
or to minimize the regret. 
Here, the regret of an online algorithm, i.e. a map from the history of all previous observations $\gH$ to the set of all policies $\Pi$, $\gL: \gH \to \Pi$ is defined as 
$\text{Reg}_\gL(T) = \E[\sum_{t=1}^{T} (V_1^*(s_1^{(t)}) - \sum_{h=1}^Hr_h^{(t)})].$
Throughout the paper, we shall write $T=N_{\on}$ interchangeably whenever we refer to the number of episodes taken by a regret-minimizing online RL algorithm.

\subsection{Linear MDPs}

Throughout this paper, we consider the setting of linear MDPs first proposed by \cite{yang2019sampleoptimal, jin2019provably}, and further studied in \cite{zanette2021provable, xiong2023nearly, he2023nearly, hu2023nearlynot, wagenmaker2023instancedependent, wagenmaker2023leveraging}. Informally, this is the set of MDPs where the transition probabilities and rewards are linearly parametrizable as functions of known features. 

\begin{aspt}[Linear MDP, \citet{jin2019provably}]
    A tuple $(\mathcal{S}, \mathcal{A}, H, \mathbb{P}, r)$ defines a linear MDP with a (known) feature map $\phi:$ $\mathcal{S} \times \mathcal{A} \rightarrow \mathbb{R}^d$, if for any $h \in[H]$, there exist $d$ unknown signed measures $\mu_h=\left(\mu_h^{(1)}, \cdots, \mu_h^{(d)}\right)$ over $\mathcal{S}$ and an unknown vector $\theta_h \in \mathbb{R}^d$, such that for any $(x, a) \in \mathcal{S} \times \mathcal{A}$, we have $\mathbb{P}_h(\cdot \mid x, a)=$ $\left\langle\phi(x, a), \mu_h(\cdot)\right\rangle, r_h(x, a)=\left\langle\phi(x, a), \theta_h\right\rangle$. Assume $\|\phi(x, a)\| \leq 1$ for all $(x, a) \in \mathcal{S} \times \mathcal{A}$, and $\max \left\{\left\|\mu_h(\mathcal{S})\right\|,\left\|\theta_h\right\|\right\} \leq \sqrt{d}$ for all $h \in[H]$.
    \label{aspt:linear-mdp}
\end{aspt} 

This setting allows for sample-efficient RL due to a collection of reasons. Firstly, linear MDPs are Bellman complete \citep{jin2021bellman}, a common assumption made to ensure sample-efficient RL in the literature \citep{munos2008finite, duan2020minimaxoptimal, fan2020theoretical}.
Secondly, the value and Q-functions are linearly parametrizable in the features, allowing one to learn them via ridge regression.
This allows for sample-efficient, and even minimax-optimal, online \citep{he2023nearly, hu2023nearlynot} and offline \citep{yin2022nearoptimal, xiong2023nearly} reinforcement learning in linear MDPs, despite the requirement of function approximation. However, existing guarantees for hybrid RL in linear MDPs \citep{wagenmaker2023leveraging} are loose \citep{li2023reward}, inspiring the focus of our work.

\paragraph{Further notation.} Write $\phi_{n,h} = \phi(s_h^{(n)}, a_h^{(n)})$ as shorthand for the observed feature vector at episode $n$ and horizon $h$. Let $\boldsymbol{\Lambda}_h = \sum_{n=1}^N \phi_{n,h}\phi_{n,h}^{\top}+\lambda\bf{I}$ and $\boldsymbol{\Lambda}_{\off, h} = \sum_{n=1}^{N_{\off}} \phi_{n,h}\phi_{n,h}^{\top}+\lambda\bf{I}$ be the covariance matrices of the entire dataset and the offline dataset respectively, and $\boldsymbol{\Omega}$ the set of all covariates. We consider two kinds of variance-weighted covariance matrices, namely $\boldsymbol{\Sigma}_{n,h}^{*} = \sum_{n=1}^N \phi_{n,h}\phi_{n,h}^{\top}/\left[\mathbb{V}_h V_{h+1}^*\right]\left(s_h^\tau, a_h^\tau\right)+\lambda\bf{I}$ and $\boldsymbol{\Sigma}_{n,h} = \sum_{n=1}^N \bar{\sigma}_{n,h}^{-2}\phi_{n,h}\phi_{n,h}^{\top}+\lambda\bf{I}$, where $\left[\mathbb{V}_h V_{h+1}^*\right]\left(s_h^\tau, a_h^\tau\right)=\max \left\{1,\left[\operatorname{Var}_h V_{h+1}^*\right](s, a)\right\}$ is the truncated variance of the optimal value function (where $s,a$ are random variables) and $\bar{\sigma}_{n,h}^{-2}$ is the variance estimator from \cite{he2023nearly}.

\subsection{Exploring the state-action space}

The goal of this paper is to develop efficient hybrid RL algorithms for linear MDPs that do not rely on the (full) single-policy concentrability condition, which entails that the behavior policy covers every state-action pair that $\pi^*$ visits.
A natural idea, from \cite{li2023reward, tan2024natural}, is to partition this space into a component that is well-covered by the behavior policy, which we call $\gX_{\off}$, and a component requiring further exploration, which we call $\gX_{\on}$. 
Based on this partition, similarly to  \cite{tan2024natural}, the estimation error or regret of a hybrid RL algorithm can be analyzed on each component  separately. 
We define $\gX_{\on} \cup \gX_{\off} = [H]\times \gS \times \gA$, with their images under the feature map $\Phi_{\off} = \text{Span}(\phi(\gX_{\off,h}))_{h\in [H]} \subseteq \R^d$ and $\Phi_{\on} = \text{Span}(\phi(\gX_{\on,h}))_{h\in [H]}\subseteq \R^d$ being subspaces of dimension $d_{\off}$ and $d_{\on}$ respectively. 
Write $\gP_{\off}, \gP_{\on}$ for the orthogonal projection operators onto these subspaces respectively. 
Let $\lambda_k(M)$ denote the $k$-th largest eigenvalue of a symmetric matrix $M.$
We borrow the definition of the partial offline all-policy concentrability coefficient,
\begin{equation}
    c_{\off}(\gX_{\off})  := \max_h \;{1}\big/{\lambda_{d_{\off}}(\E_{\mu_h}[(\gP_{\off}\phi_h)(\gP_{\off}\phi_h)^{\top}])},
    \label{eq:c_off}
\end{equation}
from \cite{tan2024natural}. This corresponds to the inverse of the $d_{\off}$-th largest eigenvalue of the covariance matrix of the projected feature maps.
Similarly, the partial all-policy analogue of the coverability coefficient from \cite{xie2022role} is
\begin{equation}
    c_{\on}(\gX_{\on}) := \inf_\pi \max_h \; 1\big/{\lambda_{d_{\on}}(\E_{d^\pi_h}[(\gP_{\on}\phi_h)(\gP_{\on}\phi_h)^{\top}])}.
    \label{eq:c_on}
\end{equation}
As we shall see, these quantities characterize the estimation error of our proposed algorithms.  

\section{Algorithms and main results}

We provide two algorithms with {improved} statistical guarantees to tackle the unsolved (Table \ref{tab:best-bounds}) problem of achieving sharp guarantees with hybrid RL in linear MDPs, with different approaches:

\begin{enumerate}
    \item Performing reward-agnostic online exploration \citep{wagenmaker2023leveraging} to augment the offline data, then invoking offline RL \citep{xiong2023nearly} to learn an $\epsilon$-optimal policy on the combined dataset, in the same vein of \cite{li2023reward}. The details can be found in Algorithm \ref{alg:rappel}.

    \item Warm-starting an online RL algorithm \citep{he2023nearly} with parameters estimated from an offline dataset to minimize regret, as in \cite{song2023hybrid}. 
    We include the details in Algorithm \ref{alg:hyrule}.
\end{enumerate}

\subsection{Offline RL after online exploration}


Our algorithm for the online-to-offline approach, Algorithm \ref{alg:rappel}, proceeds as follows. Given access to the offline dataset $\gD_{\off}$, we collect online samples informed by the degree of coverage (or lack thereof) of the offline dataset with a reward-agnostic online exploration algorithm called OPTCOV that was first proposed in \cite{wagenmaker2023instancedependent}. OPTCOV attempts to collect feature vectors such that the minimum eigenvalue of the feature covariance matrix, $\lambda_{\min}(\boldsymbol{\Lambda}_h)$, is bounded below by a tolerance parameter $1/\tau$, and terminates after this is accomplished. We then learn a policy from the combined dataset using a nearly minimax-optimal pessimistic offline RL algorithm from \cite{xiong2023nearly} called LinPEVI-ADV+. 

\begin{algorithm}[t]
    \caption{Reward-Agnostic Pessimistic PAC Exploration-initialized Learning (RAPPEL)}
    \begin{algorithmic}[1]
        \State {\bfseries Input:} Offline dataset $\gD_{\off}$, samples sizes $N_{\on}$, $N_{\off}$, feature maps $\phi_h$, tolerance parameter for reward-agnostic exploration $\tau$.
        \State {\bfseries Initialize:} $\gD_h^{(0)}\leftarrow \emptyset \;\;\forall h \in [H]$, $\lambda = 1/H^2$, $\beta_2 = \tilde{O}(\sqrt{d})$.
        \For{horizon $h=1,...,H$}
            \State Run an exploration algorithm (OPTCOV, \cite{wagenmaker2023instancedependent}) to collect covariates $\boldsymbol{\Lambda}_{h}$ such that $$\max_{\phi_h \in \Phi} \phi_h^{\top}(\boldsymbol{\Lambda}_{h} + \lambda \textbf{I} + \boldsymbol{\Lambda}_{\off,h})^{-1}\phi_h \leq \tau.$$
        \EndFor
        \State {\bfseries Output:} $\widehat{\pi}$ from running a pessimistic offline RL algorithm (LinPEVI-ADV+, \cite{xiong2023nearly}) with hyperparameters $\lambda, \beta_2$ on the combined dataset $\gD_{\off} \cup \{\gD^{(N_{\on})}_h\}_{h \in [H]}$.
    \end{algorithmic}
    \label{alg:rappel}
\end{algorithm}



To employ OPTCOV, one requires a similar assumption to the full-rank covariate assumption from \cite{wagenmaker2023leveraging} that ensures that the MDP is "explorable" enough, but modify it to consider the state-action space splitting framework. The assumption below is only imposed for Algorithm \ref{alg:rappel}.
\begin{aspt}[Full Rank Projected Covariates]
    \label{aspt:full-rank-proj-covariates}
    For any partition $\gX_{\on} \cup \gX_{\off} = [H]\times \gS \times \gA$,
    $$c_{\on}(\gX_{\on}) < \infty, \text{ or equivalently that }
    \inf_\pi \min_h \lambda_{d_{\on}}(\E_{d^\pi_h}[(\gP_{\on}\phi_h)(\gP_{\on}\phi_h)^{\top}]) = \lambda^*_{d_{\on}} > 0.$$
\end{aspt}
Informally, this states that the lowest (best) achievable partial all-policy concentrability coefficient on any online partition must be bounded. 
That is, for any partition, there exists some ``optimal exploration policy'' that ensures that the projected covariates onto the online partition have the same rank as the dimension of the online partition at every timestep.  It essentially requires that there is some policy that collects covariates that span the entire feature space. In practice, this is achievable for any linear MDP via a transformation of the features that amounts to a projection onto the eigenspace corresponding to the nonzero singular values. For example, this is performed for the numerical simulations in Section \ref{sec:sims} -- as in \cite{tan2024natural}, the feature vectors are generated by projecting the $640$-dimensional one-hot state-action encoding onto a $60$-dimensional subspace spanned by the top $60$ eigenvectors of the covariance matrix of the offline dataset. We can then establish the following:
\begin{lem}[Partial Coverability Is Bounded In Linear MDPs]
    \label{lem:cov-linear}
    For any partition $\gX_{\off}, \gX_{\on}$, it satisfies that $c_{\on}(\gX_{\on}) \leq d_{\on}$. Also, there exists at least one partition such that $c_{\off}(\gX_{\off}) = O(d)$.
\end{lem}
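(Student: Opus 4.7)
The plan is to handle the two assertions separately, starting with the simpler construction.

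For the second claim---that there exists a partition with $c_{\off}(\gX_{\off}) = O(d)$---the plan is to exhibit the partition explicitly through the eigendecomposition of the behavior-policy feature covariance. For each $h$, write $\E_{\mu_h}[\phi\phi^\top] = \sum_j \lambda_{h,j} v_{h,j} v_{h,j}^\top$, fix a threshold $\epsilon = 1/(cd)$ for a suitable absolute constant $c$, and let $V_h$ denote the span of those eigenvectors $v_{h,j}$ whose eigenvalue is at least $\epsilon$. Define $\gX_{\off,h} := \{(s,a) : \phi(s,a) \in V_h\}$, with $\gX_{\on,h}$ its complement. The trace bound $\Tr(\E_{\mu_h}[\phi\phi^\top]) = \E_{\mu_h}[\|\phi\|^2] \leq 1$ forces $\dim(V_h) \leq 1/\epsilon = cd$, so $d_{\off} = O(d)$. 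Moreover, since $\Phi_{\off,h} \subseteq V_h$ by construction, any unit vector $u \in \Phi_{\off,h}$ satisfies $u^\top \E_{\mu_h}[\phi\phi^\top] u \geq \epsilon$, and hence $\lambda_{d_{\off}}(\gP_{\off}\E_{\mu_h}[\phi\phi^\top]\gP_{\off}) \geq \epsilon$, giving $c_{\off}(\gX_{\off}) \leq 1/\epsilon = O(d)$.

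For the first claim---that $c_{\on}(\gX_{\on}) \leq d_{\on}$ for \emph{any} partition---the main tool is the Kiefer--Wolfowitz G-optimal design theorem applied inside the $d_{\on}$-dimensional subspace $\Phi_{\on}$. Since the projected feature set $\{\gP_{\on}\phi(s,a,h)\}$ lives in this subspace and has norm bounded by one, KW produces, for each $h$, a design distribution $\nu_h$ supported on at most $d_{\on}(d_{\on}+1)/2$ state-action pairs whose moment matrix $\Sigma_h = \E_{\nu_h}[(\gP_{\on}\phi)(\gP_{\on}\phi)^\top]$ satisfies $\max_v v^\top \Sigma_h^{-1} v \leq d_{\on}$ on its support. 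To lift $\nu_h$ into a policy statement, I would invoke Assumption~\ref{aspt:full-rank-proj-covariates} to obtain, for each support point of $\nu_h$, a reachability policy that visits it with non-vanishing probability; mixing over these reachability policies with weights proportional to $\nu_h$ and then uniformly averaging over horizons produces a single policy $\pi$ at which $\lambda_{d_{\on}}(\E_{d^\pi_h}[(\gP_{\on}\phi)(\gP_{\on}\phi)^\top]) \geq c'/d_{\on}$ for every $h$ and some absolute constant $c'$, which (absorbing the constant) yields the stated bound.

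The hard part will be Part 1: converting a KW design distribution, which is a priori an abstract probability measure on features, into the occupancy measure of a bona-fide policy without losing more than a constant factor in the spectral bound. Since the MDP dynamics forbid realizing arbitrary $(s,a)$-distributions as $d^\pi_h$, the reduction must be mediated by a reachability / policy-cover argument built on Assumption~\ref{aspt:full-rank-proj-covariates}, perhaps along the lines of the reward-agnostic exploration schemes in the Wagenmaker--Jamieson line of work. Controlling how a uniform mixture over horizons interacts with each per-horizon G-optimal design---without accumulating an $H$ factor and without degrading the minimum eigenvalue to a quantity below $1/d_{\on}$---is where I would expect to spend the most care.
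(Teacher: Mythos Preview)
Your treatment of the second claim (existence of a partition with $c_{\off}(\gX_{\off}) = O(d)$) matches the paper's: threshold the spectrum of $\E_{\mu_h}[\phi\phi^\top]$ at level $\Theta(1/d)$ and take $\Phi_{\off}$ to be the span of the large eigenvectors.

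For the first claim you have the right main tool (Kiefer--Wolfowitz in the $d_{\on}$-dimensional subspace), but there is a real gap in the lifting step you yourself flag as the hard part. You apply KW to the raw projected feature set $\{\gP_{\on}\phi(s,a)\}$, obtain a design on state--action pairs, and then need to realize that design as an occupancy measure. Assumption~\ref{aspt:full-rank-proj-covariates} only asserts existence of \emph{some} policy with full-rank projected covariance; it does not give per-state-action reachability with probability bounded away from zero. Your mixing-over-reachability-policies construction could therefore lose an uncontrolled factor (the reachability probabilities can be arbitrarily small), so you would not recover the exact constant $d_{\on}$, and it is not clear the argument closes at all under the stated assumption.

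The paper avoids this entirely with a one-line Jensen step (following Lemma~B.10 of \cite{wagenmaker2023instancedependent}). For any mixture policy $\pi\sim\rho$, writing $\psi_h(\pi) := \E_{d^\pi_h}[U_{\on}^\top\phi_h]\in\R^{d_{\on}}$,
\[
\E_{d^{\text{mix}}_h}\bigl[(U_{\on}^\top\phi_h)(U_{\on}^\top\phi_h)^\top\bigr]
\;=\;\E_{\pi\sim\rho}\,\E_{d^\pi_h}\bigl[(U_{\on}^\top\phi_h)(U_{\on}^\top\phi_h)^\top\bigr]
\;\succeq\;\E_{\pi\sim\rho}\bigl[\psi_h(\pi)\psi_h(\pi)^\top\bigr].
\]
Now apply KW not to the raw feature set but to the \emph{policy-indexed} set $\{\psi_h(\pi):\pi\in\Pi\}\subset\R^{d_{\on}}$. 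The resulting G-optimal design $\rho$ is already a distribution over policies, and a mixture of policies is itself a policy, so no reachability lift is ever needed; KW delivers $\sup_v v^\top(\E_{\pi\sim\rho}[\psi_h(\pi)\psi_h(\pi)^\top])^{-1}v\le d_{\on}$ directly. This is the key idea your proposal is missing: apply the design theorem \emph{over policies via their mean features}, not over raw state--action pairs.
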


The proof of this lemma is deferred to Appendix \ref{app:coverability}. This result  allows us to bound the error on the offline and online partitions by the dimensionality of the partitions, instead of the coverability coefficient.
More specifically, define $\alpha_{\off} := \frac{N_{\off}}{N}$, $\alpha_{\on} := \frac{N_{\on}}{N}$, and the minimal online samples for exploration 
        $$N^*(\tau) := \min _N N \quad\text { s.t. } \inf _{\boldsymbol{\Lambda} \in \boldsymbol{\Omega}} \max _{\boldsymbol{\phi} \in \Phi} \boldsymbol{\phi}^{\top}\left(N(\boldsymbol{\Lambda}+\bar{\lambda} I)+\boldsymbol{\Lambda}_{\mathrm{off}}\right)^{-1} \boldsymbol{\phi} \leq \tau.$$
We now have, with full proof in Appendix \ref{app:hybridoffline} and proof sketch at the end of the subsection, the following:
\begin{thm}[Error Bound for RAPPEL, Algorithm \ref{alg:rappel}]
        \label{thm:hybridoffline}
        For every $\delta \in (0,1)$ and any partition $\gX_{\off}, \gX_{\on}$, when choosing $\tau \leq {\tilde{O}(\max\{d_{\on}/N_{\on}, c_{\off}(\gX_{\off})/N_{\off}\})}$, RAPPEL achieves with probability at least $1-\delta$:
        \begin{align}
        \label{eqn:rappel-res1}
        V_1^*(s)-V_1^{\widehat{\pi}}(s) &\lesssim \sqrt{d}\sum_{h=1}^H \E_{\pi^*}||\phi(s_h, a_h)||_{(\boldsymbol{\Sigma}_{\off,h}^{*}+ \boldsymbol{\Sigma}_{\on,h}^*)^{-1}}\leq \sqrt{d}\sum_{h=1}^H \E_{\pi^*}||\phi(s_h, a_h)||_{\boldsymbol{\Sigma}_{\off,h}^{*-1}},\\
        \label{eqn:rappel-res2}
        V_1^*(s)-V_1^{\widehat{\pi}}(s) &\lesssim \min\left\{\sqrt{\frac{c_{\off}(\gX_{\off})dH^{4}}{N_{\off}}} + \sqrt{\frac{d_{\on}dH^{4}}{N_{\on}}}, \sqrt{\frac{c_{\off}(\gX_{\off})^2dH^3}{N_{\off}\alpha_{\off}}} + \sqrt{\frac{d_{\on}^2dH^3}{N_{\on}\alpha_{\on}}}\right\},
        \end{align}
        given 
        $N \geq \max\left\{{\alpha_{\on}^{4}}{d_{\on}^{-4}},{\alpha_{\off}^{4}}{c_{\off}(\gX_{\off})^{-4}}\right\}\max\{N^*(\tau),\text{poly}(d, H,c_{\off}(\gX_{\off}), \log 1/\delta)\}$. 
\end{thm}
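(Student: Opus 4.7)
The plan is to combine the offline pessimistic error bound of \cite{xiong2023nearly} with the covariance guarantee of OPTCOV, partitioned according to the subspace decomposition $\phi = \gP_{\off}\phi + \gP_{\on}\phi$. The first inequality in \eqref{eqn:rappel-res1} follows from directly invoking the LinPEVI-ADV+ error bound on the combined dataset $\gD_{\off}\cup\gD^{(N_{\on})}$, whose variance-weighted covariance is exactly $\boldsymbol{\Sigma}^*_{\off,h}+\boldsymbol{\Sigma}^*_{\on,h}$; the second inequality is immediate since adding PSD matrices makes the inverse smaller in the Loewner order. The hard work is in turning this weighted-norm bound into the polynomial guarantee \eqref{eqn:rappel-res2}.

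First, I would split $\|\phi(s_h,a_h)\|_{(\boldsymbol{\Sigma}_{\off,h}^*+\boldsymbol{\Sigma}_{\on,h}^*)^{-1}}$ along the two partitions using $\phi = \gP_{\off}\phi + \gP_{\on}\phi$ and the triangle inequality, so that the $\gP_{\off}$ component is controlled by $\boldsymbol{\Sigma}_{\off,h}^{*-1}$ and the $\gP_{\on}$ component is controlled by $\boldsymbol{\Sigma}_{\on,h}^{*-1}$. On the offline piece, I would use a matrix-Bernstein / Chernoff argument to show that $\boldsymbol{\Sigma}_{\off,h}^*$ concentrates around its population counterpart $\E_{\mu_h}[\phi\phi^\top/\bar V_h]$; on the online piece, the OPTCOV stopping rule directly guarantees $\phi^\top(\boldsymbol{\Lambda}_h+\lambda\mathbf{I}+\boldsymbol{\Lambda}_{\off,h})^{-1}\phi\le \tau$, and an upper bound on the truncated variance by $H^2$ converts this into control on the variance-weighted matrix at the cost of a single $H$ factor. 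The dimension reduction from $d$ to $d_{\off}$ (resp.\ $d_{\on}$) comes from Lemma~\ref{lem:cov-linear}: Cauchy--Schwarz in the expectation over $\pi^*$ pulls out $\sqrt{\E_{\pi^*}\|\gP_\bullet\phi\|^2_{\boldsymbol{\Sigma}^{*-1}_{\bullet,h}}}$, which by cyclicity of the trace equals $\sqrt{\mathrm{Tr}(\gP_\bullet \E_{\pi^*}[\phi\phi^\top]\gP_\bullet \boldsymbol{\Sigma}^{*-1}_{\bullet,h})}\le \sqrt{d_\bullet/N_\bullet}$ times the relevant concentrability factor, using that the rank of $\gP_\bullet$ is $d_\bullet$ and Kiefer--Wolfowitz on the online partition.

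Next, the $H$-dependence. Naively summing $\sqrt{d}\sum_h \E_{\pi^*}\|\phi\|_{\boldsymbol{\Sigma}^{*-1}}$ with each term scaled by $\sqrt{H^2/N}$ would give $H^2$ inside the square root, i.e.\ an overall $H^4$. To improve this to $H^3$, I would apply the total variance lemma (Law of Total Variance for $V^*$) under $\pi^*$, which replaces a factor of $H^2$ on the per-step variances with $H$ in the sum. The technical subtlety, which is the main obstacle, is that the total variance lemma bounds $\sum_h \E_{\pi^*}[\mathbb{V}_h V_{h+1}^*]$ only in expectation: for the pessimism bound we need the variance-truncated quantities appearing in $\boldsymbol{\Sigma}_{\off/\on,h}^*$ to be small along each trajectory in $\pi^*$. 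This is where Lemma~\ref{lem:total-variance-concentration} comes in: it rules out ``bad'' trajectories (those with atypically large cumulative variance) via a truncation/union-bound argument, so that one can apply Cauchy--Schwarz in a trajectory-dependent way and pay only $H^3$ rather than $H^4$ inside the square root.

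Finally, the two alternatives inside the $\min$ in \eqref{eqn:rappel-res2} correspond to two ways of bounding the weighted norms: the first by combining Cauchy--Schwarz with $\E_{\pi^*}[\phi\phi^\top]\preceq\mathbf{I}$ and the dimension-reduction step, the second by using the full $c_{\off}(\gX_{\off})$-concentrability bound $\E_{\pi^*}[\phi\phi^\top]\preceq c_{\off}(\gX_{\off})\E_{\mu_h}[\phi\phi^\top]$ on the offline piece, and the analogous bound via OPTCOV on the online piece, then pulling $\alpha_{\off},\alpha_{\on}$ outside. The sample-size requirement $N\ge \max\{\alpha_{\on}^{-4} d_{\on}^{-4},\dots\}\cdot\max\{N^*(\tau),\mathrm{poly}(\cdot)\}$ arises from two sources: OPTCOV must be able to reach the target tolerance $\tau$ (hence the $N^*(\tau)$), and the matrix concentration step above requires enough samples to invert the population covariance, which introduces the polynomial lower-order terms. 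Plugging in the prescribed $\tau\lesssim \max\{d_{\on}/N_{\on},c_{\off}(\gX_{\off})/N_{\off}\}$ balances the online and offline error contributions and completes the proof.
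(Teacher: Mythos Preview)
Your high-level plan is right---invoke \cite{xiong2023nearly} for \eqref{eqn:rappel-res1}, then control the weighted-norm sum via the partition and OPTCOV---but two of your mechanisms are off and would not give the stated bound as written. First, the decomposition $\phi=\gP_{\off}\phi+\gP_{\on}\phi$ is not the one the paper uses and is not valid in general: $\Phi_{\off}$ and $\Phi_{\on}$ are spans of overlapping feature sets and need not be orthogonal complements, so the identity fails. The paper instead splits the \emph{expectation} via the indicator $\mathbbm{1}_{\gX_{\off}}+\mathbbm{1}_{\gX_{\on}}=1$ on state-action pairs, then crudely replaces each conditional expectation by a supremum, $\E_{\pi^*}[\|\phi\|_{\Sigma_h^{*-1}}\mathbbm{1}_{\gX_{\bullet}}]\le\max_{\phi\in\Phi_\bullet}\|\phi\|_{\Sigma_h^{*-1}}$. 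This bypasses your Cauchy--Schwarz/trace step entirely; there is no need to relate $\E_{\pi^*}[\phi\phi^\top]$ to the data covariance (which would reintroduce a single-policy concentrability assumption you do not have on the online side). The dimension reduction to $d_{\on}$ comes only through Lemma~\ref{lem:cov-linear} (Kiefer--Wolfowitz gives $c_{\on}(\gX_{\on})\le d_{\on}$), which feeds into the choice of OPTCOV tolerance, not through a rank argument on $\gP_\bullet$.

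Second, your account of the two branches of the $\min$ in \eqref{eqn:rappel-res2} and of the truncation is inverted. The two alternatives differ in how the \emph{variance weights} in $\Sigma_h^*$ are handled, not in how $\E_{\pi^*}[\phi\phi^\top]$ is bounded. The $H^4$ branch uses the blunt inequality $\Sigma_h^{*-1}\preceq H^2\Lambda_h^{-1}$ and then reads off $\max_{\phi}\phi^\top\Lambda_h^{-1}\phi\le\tau$ directly from OPTCOV (Lemma~\ref{lem:cov-bound}). The $H^3$ branch avoids that $H^2$ loss by applying Lemma~\ref{lem:total-variance-concentration} to $\Sigma_h^*$ itself: the truncation there discards \emph{data} samples $n$ with large weight $\sigma_{h,n}$ (a Markov-inequality count against the total-variance budget $\sum_h\sum_n\sigma_{h,n}\lesssim NH$), in order to lower-bound $\min_\phi\phi^\top\Sigma_h^*\phi$; it is not a union bound over $\pi^*$-trajectories. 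This is what trades the extra $H$ for an extra factor of $c_{\off}(\gX_{\off})$ or $d_{\on}$ and introduces the $\alpha_{\off},\alpha_{\on}$ in the denominators.
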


This result, when applied to tabular MDPs with finite states and actions, 
leads to the following:
\begin{cor}\label{cor:rappel-tabular}
    In tabular MDPs, for every $\delta\in (0,1)$, it satisfies that with probability at least $1-\delta$,
    \begin{align}
         V_1^{\star}(s) - V_1^{\widehat{\pi}}(s) \lesssim 
   \sqrt{H^3|\mathcal{S}|^2|\mathcal{A}|}\left(\sqrt{\frac{c_{\off}(\gX_{\off})}{N_{\off}}} + \sqrt{\frac{d_{\on}}{N_{\on}}}\right).
     \end{align} 
\end{cor}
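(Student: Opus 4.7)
Since Corollary~\ref{cor:rappel-tabular} is a specialization of Theorem~\ref{thm:hybridoffline}, the plan is a short reduction rather than a new argument.

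First I would realize the tabular MDP as a linear MDP under the one-hot embedding $\phi(s,a) = e_{(s,a)} \in \mathbb{R}^{|\mathcal{S}||\mathcal{A}|}$, so $d = |\mathcal{S}||\mathcal{A}|$. Assumption~\ref{aspt:linear-mdp} is immediate: $\|\phi(s,a)\| = 1$, and taking the signed-measure components $\mu_h^{(s',a')}(\cdot) := \mathbb{P}_h(\cdot \mid s',a')$ together with $\theta_h$ equal to the concatenation of per-$(s,a)$ rewards gives $\|\mu_h(\mathcal{S})\|,\|\theta_h\| \le \sqrt{d}$.

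Next, I would translate the partition-dependent quantities $c_{\off}(\gX_{\off})$ and $d_{\on}$ into tabular language. Because $\mathbb{E}_{\mu_h}[\phi_h \phi_h^{\top}] = \operatorname{diag}(\mu_h(\cdot,\cdot))$ under the one-hot encoding, the projection $\gP_{\off}$ simply restricts coordinates to the set $\gX_{\off,h}$, and the smallest nonzero eigenvalue of the projected covariance at step $h$ is $\min_{(s,a)\in\gX_{\off,h}} \mu_h(s,a)$. Hence $c_{\off}(\gX_{\off}) = \max_h 1/\min_{(s,a)\in\gX_{\off,h}} \mu_h(s,a)$ and $d_{\on}$ equals the number of distinct $(s,a)$ pairs appearing across $\bigcup_h \gX_{\on,h}$. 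In particular, Assumption~\ref{aspt:full-rank-proj-covariates} is automatic once there exists a policy placing mass on every online coordinate.

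Finally, I would plug $d = |\mathcal{S}||\mathcal{A}|$ directly into the first branch of inequality~\eqref{eqn:rappel-res2}, obtaining
\[
V_1^{\star}(s) - V_1^{\widehat{\pi}}(s) \;\lesssim\; \sqrt{\frac{c_{\off}(\gX_{\off})\,|\mathcal{S}||\mathcal{A}|\,H^{4}}{N_{\off}}} \;+\; \sqrt{\frac{d_{\on}\,|\mathcal{S}||\mathcal{A}|\,H^{4}}{N_{\on}}},
\]
and then factor out a common prefactor of $\sqrt{H^{3}|\mathcal{S}|^{2}|\mathcal{A}|}$, using the tabular convention $H \le |\mathcal{S}|$ to absorb one extra power of $H$ into $|\mathcal{S}|$; this brings the bound to the form stated in Corollary~\ref{cor:rappel-tabular}. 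I do not expect a real obstacle here: the only bookkeeping is to verify that Theorem~\ref{thm:hybridoffline}'s sample-size prerequisite specializes to $\operatorname{poly}(|\mathcal{S}|,|\mathcal{A}|,H,\log(1/\delta))$ and that the high-probability statement is preserved under the reduction.
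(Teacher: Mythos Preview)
Your reduction works up until the last step, but the appeal to a ``tabular convention $H \le |\mathcal{S}|$'' is not legitimate: no such assumption is made anywhere in the paper, and in general the horizon can exceed the number of states. Consequently, plugging $d = |\mathcal{S}||\mathcal{A}|$ into the $H^4$ branch of~\eqref{eqn:rappel-res2} only yields $\sqrt{|\mathcal{S}||\mathcal{A}|H^4}$ inside each square root, which does \emph{not} match the claimed $\sqrt{|\mathcal{S}|^2|\mathcal{A}|H^3}$ when $H > |\mathcal{S}|$. The $|\mathcal{S}|^2$ in the corollary is not a disguised $|\mathcal{S}|H$; it arises for a different reason.

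The paper does not derive the corollary from~\eqref{eqn:rappel-res2} at all. It goes back to the sharper instance-dependent bound~\eqref{eqn:rappel-res1} and exploits the tabular structure directly: under the one-hot embedding $\Sigma_h^{*}$ is diagonal with entries $N_h(s,a)/[\mathbb{V}_h V_{h+1}^*](s,a)$, so the bound becomes $\sqrt{|\mathcal{S}||\mathcal{A}|}\sum_{h}\sum_{s,a} d_h^{\star}(s,a)\sqrt{[\mathbb{V}_h V_{h+1}^*](s,a)/N_h(s,a)}$. After bounding $1/N_h(s,a)$ via the OPTCOV tolerance, one uses that $\pi^{\star}$ is deterministic (so only $|\mathcal{S}|$ state-action pairs contribute per step), applies Cauchy--Schwarz over the $H|\mathcal{S}|$ surviving terms, and invokes the total variance lemma $\sum_h \mathbb{E}_{\pi^{\star}}[\mathbb{V}_h V_{h+1}^*] \le H^2$. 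This produces $\sqrt{|\mathcal{S}||\mathcal{A}|}\cdot\sqrt{H|\mathcal{S}|}\cdot\sqrt{H^2}$, i.e.\ the stated $\sqrt{H^3|\mathcal{S}|^2|\mathcal{A}|}$, with the extra $|\mathcal{S}|$ coming from the Cauchy--Schwarz step rather than from trading $H$ for $|\mathcal{S}|$.
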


In words, Theorem~\ref{thm:hybridoffline} guarantees that with a burn-in cost polynomial in dimension $d$ and time horizon $H$ and no smaller than $N^*$ (the minimal online samples for any algorithm to achieve our choice of OPTCOV tolerance), we learn an $\epsilon$-optimal policy in at most $$\frac{c_{\off}(\gX_{\off})dH^3\min\{c_{\off}(\gX_{\off}), H\}}{\epsilon^2} + \frac{d_{\on}dH^3\min\{d_{\on}, H\}}{\epsilon^2}$$ trajectories. $N^*$, from \cite{wagenmaker2023leveraging}, is essentially unavoidable in reward-agnostic exploration for linear MDPs.

To compare with prior literature, our result leads to a better worst-case guarantee than the error bound $\sqrt{d^2H^7/N}$ attained in  \cite{wagenmaker2023leveraging} (by at least a factor of $H^{3/2}$), the only other work on hybrid RL in linear MDPs thus far.
While we employ the same online exploration procedure, we combine our exploration phase with an offline learning algorithm LinPEVI-ADV+ from \cite{xiong2023nearly} and conduct a careful analysis.
When comparing with the offline-only and online-only settings, 
Theorem~\ref{thm:hybridoffline} improves upon the offline-only minimax-optimal error bound of $\sqrt{d}\sum_{h=1}^H \E_{\pi^*}||\phi(s_h, a_h)||_{\Sigma_{\off,h}^{*-1}}$ from \cite{xiong2023nearly} as a consequence of  $\Sigma_{\off,h}^{*}+ \Sigma_{\on,h}^* \succeq \Sigma_{\off,h}^{*}$; 
the best offline-only error bound is   $\sqrt{d^2H^4/N_{\off}}$ obtained under the ``well-covered'' assumption (Corollary 4.6, \cite{jin2021pessimism}) that $\lambda_{\min}(\boldsymbol{\Lambda}_{h,\off})\geq\Omega(1/d)$, Theorem~\ref{thm:hybridoffline} enjoys better dimension and horizon dependence
as there is always a partition such that $d_{\on}, c_{\off}(\gX_{\off}) \leq d$ and $d_{\on}H^3\min\{d_{\on}, H\}\leq d^2H^4$. 

We remark that the literature has experienced considerable difficulty in sharpening the horizon dependence to $H^3$ in offline RL for linear MDPs. While \cite{yin2022nearoptimal} and \cite{xiong2023nearly} provide minimax-optimal algorithms for offline RL in linear MDPs, both only manage to achieve a $H^3$ horizon dependence in the special case of tabular MDPs, even under the ``well-covered'' assumption mentioned earlier. We provide the same result in Corollary \ref{cor:rappel-tabular} with proof deferred to Appendix \ref{app:proof-cor-rappel-tab}, but we note that encouragingly, hybrid RL lets us bypass the ``well-covered'' assumption. In Appendix \ref{app:hybridoffline} and \ref{app:misc}, we use a novel truncation argument and the total variance lemma (Lemma C.5 of \cite{jin2018q}) to improve the dependence on $H$, but our result still falls slightly short of a $\sqrt{c_{\off}(\gX_{\off})dH^3/N_{\off}}+\sqrt{d_{\on}dH^3/N_{\on}}$ bound.

\paragraph{Computational efficiency.} In terms of computational efficiency, Algorithm \ref{alg:rappel} inherits the computational costs of the previous proposed algorithms OPTCOV and LinPEVI-ADV+ (\cite{wagenmaker2023instancedependent,xiong2023nearly}. OPTCOV runs in polynomial time $\text{poly}(d,H,c_{\on}(\gX_{\on}), \log 1/\delta)$, and LinPEVI-ADV+ runs in $\tilde{O}(d^3HN|\gA|)$ time when the action space is discrete. Algorithm~\ref{alg:rappel} therefore remains computationally efficient in this case. 

\paragraph{Requirement of choosing $d_{\on}$.} There is the caveat that we require the user to choose the tolerance for OPTCOV. In practice, one can achieve this by performing SVD on the offline dataset and looking at the plot of eigenvalues. One can also choose a tolerance of $O(d/\min\{N_{\off},N_{\on}\})$, but this would not achieve the reduction in the dependence on dimension from $d^2$ to $c_{\off}(\gX_{\off})d, d_{\on}d$. 

\paragraph{Practical benefits of the online-to-offline approach.} Algorithm \ref{alg:rappel} outputs a fixed policy that satisfies a PAC bound. This enables learned policies to be deployed in critical real-world applications, such as in medicine or defense, where randomized policies (as a regret-minimizing online algorithm would provide) are often unacceptable. 

\paragraph{Reward-agnostic hybrid RL.} Secondly, the use of reward-agnostic online exploration in Algorithm \ref{alg:rappel} enables one to use the combined dataset $\gD$ to learn policies for different reward functions offline. As the online exploration is not influenced by any single reward function, the resulting dataset collected satisfies good coverage for any possible reward function even if it is revealed only after exploration, enabling one to use a single dataset to achieve success on many different tasks. This therefore also serves as an algorithm for the related setting of \textit{reward-agnostic hybrid RL}, where the reward function is unknown during online exploration and only revealed to the agent after it.


\paragraph{Proof sketch.}
    The relation~(\ref{eqn:rappel-res1}) in Theorem \ref{thm:hybridoffline} follows from invoking Theorem 2 from \cite{xiong2023nearly} with $N > \Omega(d^2H^6), \lambda=1/H^2, \beta_1 = O(\sqrt{d})$. 
    To establish relation~(\ref{eqn:rappel-res2}), the idea is to first bound the quantity of interest as 
    \begin{align*}
        V_1^*(s)-V_1^{\widehat{\pi}}(s) \leq \sqrt{d} \sum\nolimits_{h=1}^H \max_{\phi_h \in \Phi_{\on}} \sqrt{\phi_h^{\top}\boldsymbol{\Sigma}_h^{*-1}\phi_h} + \sqrt{d} \sum\nolimits_{h=1}^H  \max_{\phi_h \in \Phi_{\off}}\sqrt{ \phi_h^{\top}\boldsymbol{\Sigma}_h^{*-1}\phi_h}.
    \end{align*}
    As $\boldsymbol{\Sigma}_h^{*-1} \preceq H^2 \boldsymbol{\Lambda}_h^{-1}$ (see \cite{xiong2023nearly}), it therefore boils down to controlling $\max_{\phi_h \in \Phi} \phi_h^{\top}\boldsymbol{\Lambda}_{h}^{-1}\phi_h$. 
    Towards this, first, we make the observation that Lemma \ref{lem:cov-linear} suggests that 
    $c_{\on}(\gX_{\on})\leq d_{\on}$. 
    If we run OPTCOV with tolerance $\tilde{O}(\max\{d_{\on}/N_{\on}, c_{\off}(\gX_{\off})/N_{\off}\})$
    on partitions where the above hold, 
    in Lemma \ref{lem:cov-bound},
    we prove that
    $\max_{\phi_h \in \Phi} \phi_h^{\top}\boldsymbol{\Lambda}_{h}^{-1}\phi_h 
    \lesssim \max\left\{c_{\off}(\gX_{\off})/N_{\off}, d_{\on}/N_{\on}\right\}.$ This yields the $c_{\off}(\gX_{\off})dH^4, d_{\on}dH^4$ result. 
    
    To tighten the horizon dependence to $H^3$, we employ an useful truncation argument. More specifically, from the total variance lemma (Lemma C.5 of \cite{jin2018q}), the average variance $\mathbb{V}_h V_{h+1}^*$ is asymptotically on the order of $H$. We therefore define the sets of trajectories $\mathcal{E}_h(\delta_h) = \{\tau\in\mathcal{D}: \left[\mathbb{V}_h V_{h+1}^*\right]\left(s_h^\tau, a_h^\tau\right)\geq H^{1+\delta_h}\}$. The cardinality of each set can be bounded by $|\mathcal{E}_h(\delta_h)|\lesssim {N H^{1-\delta_h}}$, and so truncating at the level where $N H^{1-\delta_h} \approx \min(\frac{N_{\off}}{c_{\off}(\gX_{\off})},\frac{N_{\on}}{d_{\on}})$ leads to $ \min_{\phi_h\in\Phi}\phi_h^{\top}{\Sigma_h^{\star}}\phi_h\gtrsim \frac{1}{N H^2}\min(\frac{N_{\off}}{c_{\off}(\gX_{\off})},\frac{N_{\on}}{d_{\on}})^2$. 
    Putting things together yields the last $c_{\off}(\gX_{\off})^2dH^3, d^2_{\on}dH^3$ result needed, and the theorem then follows. 



\subsection{Online regret minimization}

Thus far, we described an online-to-offline strategy which collects online samples to augment the offline dataset. 
However, in certain critical cases, such as with a doctor treating patients, performance-agnostic online exploration is untenable. One may wish to minimize the regret of the online actions taken while learning a nearly optimal policy. In light of this, we explore another approach inspired by the work of \cite{song2023hybrid, tan2024natural} -- that of warm-starting an online RL algorithm with parameters estimated from an offline dataset. 
We describe this algorithm as \emph{Hybrid Regression for Upper-Confidence Reinforcement Learning (HYRULE)} in Algorithm~\ref{alg:hyrule}. 
We demonstrate that hybrid RL enables provable gains over minimax-optimal online-only regret bounds in the offline-to-online case as well.



In order to warm-start an online RL algorithm with offline dataset, we modify LSVI-UCB++ from \cite{he2023nearly} to take in an offline dataset $\gD_{\off}$ by estimating its parameters from $\gD_{\off}$ with the same formulas it would use as if it had experienced the $N_{\off}$ offline episodes itself. As \cite{tan2024natural} suggest, this can be understood as including the offline episodes in the ``experience replay buffer'' that the algorithm uses to learn parameters. The full version can be found in Appendix \ref{app:hybridonline} as Algorithm \ref{alg:hyrule-full}. 
Doing so allows us prove a regret bound depending on the partial all-policy concentrability coefficient. 

Below we state our theoretical guarantees for this algorithm.  
The proof of this result is deferred to Appendix~\ref{app:hybridonline}, and  
a brief proof sketch is provided at the end of this subsection.
\begin{algorithm}[t]
    \caption{Hybrid Regression for Upper-Confidence Reinforcement Learning (HYRULE)}
    \begin{algorithmic}[1]
        \State {\bfseries Input:} Offline dataset $\gD_{\off}$, samples sizes $N_{\on}$, $N_{\off}$, feature maps $\phi_h$. Regularization parameter $\lambda>0$, confidence radii $\beta, \bar\beta, \tilde\beta$, $t_{\text{last}}=0$.
        \State {\bfseries Initialize:} For $h \in[H]$, estimate $\widehat{\mathbf{w}}_{1, h}, \widecheck{\mathbf{w}}_{1, h}, Q_{1,h}, \widecheck{Q}_{1,h}, \sigma_{1,h}, \bar{\sigma}_{1,h}$ from $\gD_{\off}$, and assign $\boldsymbol{\Sigma}_{0, h}=\boldsymbol{\Sigma}_{1, h}=\boldsymbol{\Sigma}_{\off} + \lambda\mathbf{I} = \sum_{n=1}^{N_{\off}}\bar{\sigma}_{n,h}^{-2}\phi_{n,h}\phi_{n,h}^{\top}+\lambda\mathbf{I}$.
        \For{episodes $t=1,...,T$}
            \State Update optimistic and pessimistic weights $\widehat{\mathbf{w}}_{t, h}, \widecheck{\mathbf{w}}_{t, h}$ for all $h$. 
            \If{there exists a stage $h^{\prime} \in[H]$ such that $\operatorname{det}\left(\boldsymbol{\Sigma}_{t, h^{\prime}}\right) \geq 2 \operatorname{det}\left(\boldsymbol{\Sigma}_{t_{\text {last }}, h^{\prime}}\right)$}
            \State Update optimistic and pessimistic Q-functions $Q_{t, h}(s, a), \widecheck{Q}_{t, h}(s, a)$, set $t_{\text {last }}=t$.
            \EndIf
        \For{horizon $h=1,...,H$}
            \State Play action $a_h^{(t)} \leftarrow \argmax_a Q_{t,h}(s_h^{(t)}, a)$, receive reward $r_h^{(t)}$, next state $s_{h+1}^{(t)}$
            \State Estimate $\sigma_{t, h}$, ${\bar{\sigma}}_{t, h} \leftarrow \max \{\sigma_{t, h}, \sqrt{H}, 2 d^3 H^2||\boldsymbol{\phi}(s_h^{(t)}, a_h^{(t)})||_{\boldsymbol{\Sigma}_{t, h}^{-1}}^{1 / 2}\}$\footnotemark, update $\boldsymbol{\Sigma}_{t+1, h}$.
        \EndFor
        \EndFor
        \State {\bfseries Output:} Greedy policy $\widehat{\pi} = \pi^{Q_{T,h}}$, $\text{Unif}(\pi^{Q_{1,h}},...,\pi^{Q_{T,h}})$ for PAC guarantee. 
    \end{algorithmic}
    \label{alg:hyrule}
\end{algorithm}
\footnotetext{\cite{he2023nearly} write $\bar{\sigma}_{t, h} \leftarrow \max \{\sigma_{t, h}, H,...\}$ instead of $\sqrt{H}$. We believe that this is a typo in their paper, given that in the proof of Lemma B.1, they state right after equation D.7 that $0 \leq \bar{\sigma}_{i h}^{-1} \leq 1 / \sqrt{H}$. Moreover, in the proof of Lemma B.5 the array of equations right after equation D.22, particularly $\left\|\bar{\sigma}_{i, h}^{-1} \phi\left(s_h^i, a_h^i\right)\right\|_2 \leq\left\|\phi\left(s_h^i, a_h^i\right)\right\|_2 / \sqrt{H}$, only holds true if this is $\sqrt{H}$.}


\begin{thm}[Regret Bound for HYRULE, Algorithm \ref{alg:hyrule}]
    \label{thm:hybridonline}
    Given any $\delta \in (0,1)$, for every partition $\gX_{\off}, \gX_{\on}$, if $N_{\on}, N_{\off} = \tilde{\Omega}(d^{13}H^{14})$, the regret of HYRULE is bounded by
    \begin{align*}
        \text{Reg}(N_{\on}) \lesssim  \inf_{\gX_{\off}, \gX_{\on}}\sqrt{c_{\off}(\gX_{\off})^{2}dH^3{N_{\on}^2}/{N_{\off}}} + \sqrt{d_{\on}d H^3N_{\on}},
    \end{align*}
    with probability at least $1-\delta$. 
\end{thm}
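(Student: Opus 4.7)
The argument adapts the regret analysis of LSVI-UCB++ \citep{he2023nearly} to exploit the partition structure and the warm-start from offline data. The key structural observation is that, because HYRULE initializes the variance-weighted empirical covariance with the offline samples, $\boldsymbol{\Sigma}_{t,h} \succeq \boldsymbol{\Sigma}_{\off,h}$ holds throughout the online phase, so every per-step bonus automatically benefits from the offline contribution. My first step would be to establish a high-probability regret decomposition of the form
\begin{align*}
\text{Reg}(N_{\on}) \;\lesssim\; \sum_{t=1}^{N_{\on}} \sum_{h=1}^H \beta\,\|\phi_{t,h}\|_{\boldsymbol{\Sigma}_{t,h}^{-1}} + \text{(lower order in } d, H, \log N_{\on}),
\end{align*}
with $\beta = \tilde{O}(\sqrt{d})$. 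Verifying optimism of $\widehat Q_{t,h}$ and pessimism of $\widecheck Q_{t,h}$ requires checking that the self-normalized Bernstein concentration of He et al.\ carries over when $\widehat{\mathbf{w}}_{1,h},\widecheck{\mathbf{w}}_{1,h}$ are initialized from $\gD_{\off}$; since every regression quantity is formed by pooling offline and online data, the standard martingale argument goes through once one conditions on a high-probability event for the offline batch.

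Next I would partition each $(t,h)$ pair according to whether $(s_h^{(t)},a_h^{(t)}) \in \gX_{\off,h}$ or $\gX_{\on,h}$ and bound the two resulting sums separately. For the \emph{online partition}, the goal is a bound of order $\sqrt{d_{\on} d H^3 N_{\on}}$. The key tool is Lemma \ref{lem:zhou2022-B1-modified}, a refinement of Lemma B.1 of Zhou and Gu (2022) that replaces the ambient dimension by $d_{\on}$ when the feature vectors lie in $\Phi_{\on}$; combining this with Lemma \ref{lem:hybridonline-sumbonuses-xon} yields $\sum_{(t,h)\in\on} \bar{\sigma}_{t,h}^{-2}\|\phi_{t,h}\|_{\boldsymbol{\Sigma}_{t,h}^{-1}}^2 \lesssim d_{\on} H \log(\cdot)$. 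A variance-weighted Cauchy--Schwarz splits
\begin{align*}
\sum \|\phi_{t,h}\|_{\boldsymbol{\Sigma}_{t,h}^{-1}} \;=\; \sum \bar{\sigma}_{t,h}\cdot \bar{\sigma}_{t,h}^{-1}\|\phi_{t,h}\|_{\boldsymbol{\Sigma}_{t,h}^{-1}} \;\leq\; \sqrt{\textstyle\sum \bar{\sigma}_{t,h}^2}\,\sqrt{\textstyle\sum \bar{\sigma}_{t,h}^{-2}\|\phi_{t,h}\|_{\boldsymbol{\Sigma}_{t,h}^{-1}}^2},
\end{align*}
and the total variance lemma (Lemma C.5 of \cite{jin2018q}) controls $\sum \bar{\sigma}_{t,h}^2 \lesssim N_{\on} H^2$. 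Multiplying by $\beta = \tilde{O}(\sqrt{d})$ assembles the online piece into $\sqrt{d_{\on} d H^3 N_{\on}}$.

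For the \emph{offline partition}, I would first show via matrix Bernstein that $\gP_{\off}\boldsymbol{\Sigma}_{\off,h}\gP_{\off} \succeq \Omega\!\bigl(N_{\off}/(H c_{\off}(\gX_{\off}))\bigr)\gP_{\off}$ with high probability, so that for any $\phi \in \Phi_{\off}$ one has $\|\phi\|_{\boldsymbol{\Sigma}_{t,h}^{-1}}^2 \lesssim H c_{\off}(\gX_{\off})/N_{\off}$. A na\"ive Cauchy--Schwarz then yields a bound of order $\sqrt{c_{\off}(\gX_{\off})^2 d H^4 N_{\on}^2/N_{\off}}$, which is an $H$ factor away from the claim. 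To recover the $H^3$ scaling, I would invoke the truncation argument of Lemma \ref{lem:total-variance-concentration}, exactly paralleling the sketch given after Theorem \ref{thm:hybridoffline}: partition the trajectories by the size of $[\mathbb{V}_h V_{h+1}^{\pi^{(t)}}](s_h^{(t)},a_h^{(t)})$, absorb the atypical ones (whose count decays by Markov's inequality applied to the total-variance lemma) into lower-order terms, and handle the bulk with the tightened variance bound. The second factor of $c_{\off}(\gX_{\off})$ enters from the joint application of the concentrability-driven lower bound on $\boldsymbol{\Sigma}_{\off,h}$ and the variance-weighted Cauchy--Schwarz step. Combining the two partition-wise bounds and infimizing over admissible partitions $\gX_{\off}, \gX_{\on}$ yields the stated regret.

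The main obstacle is the offline partition: obtaining the $H^3$ (rather than $H^4$) scaling requires the novel truncation in Lemma \ref{lem:total-variance-concentration} to dispose of atypically high-variance trajectories without paying an extra $H$, while simultaneously respecting the variance weighting used to define $\boldsymbol{\Sigma}_{t,h}$. A secondary difficulty is confirming that the warm-started confidence ellipsoids remain valid: the martingale filtrations used in the online Bernstein arguments must be extended to include the offline batch without breaking the optimism/pessimism invariants that drive the LSVI-UCB++ regret decomposition.
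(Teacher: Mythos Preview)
Your proposal follows essentially the same route as the paper: the optimism-based regret decomposition of \cite{he2023nearly}, the split of the bonus sum over $\gX_{\off}$ and $\gX_{\on}$, the online bound via Lemmas~\ref{lem:zhou2022-B1-modified} and~\ref{lem:hybridonline-sumbonuses-xon} together with the total variance lemma, and the offline bound via $\boldsymbol{\Sigma}_{t,h}\succeq\boldsymbol{\Sigma}_{\off,h}$ combined with the truncation of Lemma~\ref{lem:total-variance-concentration}.

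One point on the offline partition deserves correction. The truncation is applied to the \emph{offline} estimated variances $\sigma_{n,h}^2$ for $n\in[N_{\off}]$ (the data that builds $\boldsymbol{\Sigma}_{\off,h}$), not to the online quantities $[\mathbb{V}_h V_{h+1}^{\pi^{(t)}}](s_h^{(t)},a_h^{(t)})$ as your description suggests. After replacing $\boldsymbol{\Sigma}_{t,h}^{-1}$ by $\boldsymbol{\Sigma}_{\off,h}^{-1}$, the per-episode offline bonus is controlled by $\max_{\phi_h\in\Phi_{\off}}\phi_h^{\top}\boldsymbol{\Sigma}_{\off,h}^{-1}\phi_h$, which is independent of $t$; truncating online episodes therefore cannot tighten it. What is needed is a sharper eigenvalue lower bound on $\boldsymbol{\Sigma}_{\off,h}$ itself, and that is exactly what Lemma~\ref{lem:total-variance-concentration} delivers when fed the offline $\sigma_{n,h}^2$'s and the total variance bound $\sum_{n,h}\sigma_{n,h}^2=\tilde{O}(N_{\off}H^2)$. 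Relatedly, your claimed preliminary bound $\gP_{\off}\boldsymbol{\Sigma}_{\off,h}\gP_{\off}\succeq\Omega(N_{\off}/(Hc_{\off}))\gP_{\off}$ is not available directly from matrix concentration, because $\bar{\sigma}_{n,h}^{-2}$ can be as small as $1/H^2$ pointwise; the paper first discards the ``bad'' offline episodes where $\bar{\sigma}_{n,h}^2$ is dominated by the exploration term $4d^6H^4\|\phi_{n,h}\|_{\boldsymbol{\Sigma}_{n,h}^{-1}}$ (an elliptical-potential count shows there are at most $\tilde{O}(d^{13}H^5)$ of them, negligible under the burn-in), and only then applies the truncation. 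This is precisely the ``respecting the variance weighting'' difficulty you flag at the end.
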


\begin{cor}
    By the regret-to-PAC conversion, for any $\delta\in (0,1)$, with probability $1-\delta$, Algorithm \ref{alg:hyrule} achieves a sub-optimality gap of
    \begin{align*}
        V_1^*(s)-V_1^{\widehat{\pi}}(s)\lesssim \inf_{\gX_{\off}, \gX_{\on}}\sqrt{{c_{\off}(\gX_{\off})^2dH^3}/{N_{\off}}} + \sqrt{{d_{\on}d H^3}/{N_{\on}}}.
    \end{align*}
\end{cor}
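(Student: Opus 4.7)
The plan is to apply the standard online-to-batch (regret-to-PAC) conversion to the high-probability regret bound already established in Theorem \ref{thm:hybridonline}. First, I would invoke that theorem under the same burn-in $N_{\on}, N_{\off} = \tilde{\Omega}(d^{13}H^{14})$ to obtain, with probability at least $1-\delta$,
\[
\text{Reg}(N_{\on}) \;=\; \sum_{t=1}^{N_{\on}} \bigl(V_1^{*}(s_1^{(t)}) - V_1^{\pi_t}(s_1^{(t)})\bigr) \;\lesssim\; \inf_{\gX_{\off}, \gX_{\on}} \sqrt{\tfrac{c_{\off}(\gX_{\off})^{2} d H^3 N_{\on}^2}{N_{\off}}} + \sqrt{d_{\on} d H^3 N_{\on}},
\]
where $\pi_t$ denotes the greedy policy played by Algorithm \ref{alg:hyrule} at episode $t$.

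Next, I would use the PAC output rule of Algorithm \ref{alg:hyrule}, which returns the uniform mixture $\widehat{\pi} = \text{Unif}(\pi^{Q_{1,h}}, \ldots, \pi^{Q_{N_{\on}, h}})$. Since the value function of a uniform-mixture policy equals the average of the component value functions at every fixed state, we obtain the pointwise identity
\[
V_1^{*}(s) - V_1^{\widehat{\pi}}(s) \;=\; \frac{1}{N_{\on}} \sum_{t=1}^{N_{\on}} \bigl(V_1^{*}(s) - V_1^{\pi_t}(s)\bigr).
\]
Specialized to the common initial state (or averaged against the initial distribution $\rho$), the right-hand side equals $\text{Reg}(N_{\on})/N_{\on}$, so dividing the regret bound through by $N_{\on}$ and using $\sqrt{c_{\off}(\gX_{\off})^2 d H^3 N_{\on}^2/N_{\off}}/N_{\on} = \sqrt{c_{\off}(\gX_{\off})^2 d H^3/N_{\off}}$ together with $\sqrt{d_{\on}d H^3 N_{\on}}/N_{\on} = \sqrt{d_{\on} d H^3/N_{\on}}$ immediately yields the claimed sub-optimality bound.

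The only subtlety, which I do not anticipate being a genuine obstacle, is that Theorem \ref{thm:hybridonline} controls a sum evaluated at the (possibly random) initial states $s_1^{(t)}$, whereas the corollary states the gap at a generic $s$. Under the standard assumption that initial states are drawn iid from a fixed distribution $\rho$, I would either (a) interpret the corollary in expectation over $s \sim \rho$, in which case the averaged identity above suffices verbatim, or (b) apply Azuma--Hoeffding to the bounded sequence $\{V_1^{*}(s_1^{(t)}) - V_1^{\pi_t}(s_1^{(t)})\}_t$ to transfer the high-probability bound from the realized sum to its population counterpart $\sum_t \mathbb{E}_{s \sim \rho}[V_1^{*}(s) - V_1^{\pi_t}(s)]$, at the cost of an additive $\sqrt{H^2 N_{\on} \log(1/\delta)}$ term that is dominated by the $\sqrt{d_{\on}d H^3 N_{\on}}$ component of the rate after division by $N_{\on}$. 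No novel technical machinery is needed beyond Theorem \ref{thm:hybridonline}; the entire argument is a one-line conversion plus arithmetic.
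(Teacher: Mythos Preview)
Your proposal is correct and matches the paper's approach: the corollary is stated as an immediate consequence of Theorem \ref{thm:hybridonline} via the standard regret-to-PAC conversion, and the paper provides no further proof beyond that one-line invocation. Your handling of the uniform-mixture output and the division by $N_{\on}$ is exactly the intended argument, and the initial-state subtlety you flag is a standard technicality that the paper elides.
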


To understand this result, we first note that bounding the regret over all possible partitions yields an improvement over the $\sqrt{d^2H^3N_{\on}}$ regret bound originally obtained by \cite{he2023nearly}, as we can simply take $\gX_{\on}=\gX_{\on}'=\gX$ to recover this result. 
In the scenario where offline samples are largely available (where $N_{\off} \gg N_{\on}$), it is possible to achieve significant improvements over online-only learning. 
Furthermore, in view of Lemma \ref{lem:cov-linear}, there always exists a partition such that $c_{\off}(\gX_{\off}), d_{\on} \leq d$. This result therefore yields provable improvements over the minimax-optimal online regret bound in linear MDPs \citep{zhou2021nearly, he2023nearly, hu2023nearlynot, agarwal2022voql}.

Additionally, Theorem \ref{thm:hybridonline} shows that Algorithm \ref{alg:hyrule} attains the best known regret bound in hybrid RL for linear MDPs, as we illustrate in Table \ref{tab:best-bounds}. The current best known result is that of \cite{tan2024natural}, with a dependence of $\sqrt{c_{\mathrm{off}}\left(\mathcal{X}_{\mathrm{off}}\right) d H^5 N_{\mathrm{on}}^2 / N_{\mathrm{off}}}+\sqrt{d_{\mathrm{on}} d H^5 N_{\mathrm{on}}}$. Notably, we achieve the same a reduction in the dimension dependence on the online partition from $d^2$ to $d_{\on}d$ that \cite{tan2024natural} do by proving a sharper variant of Lemma B.1 from Zhou and Gu (2022) in Lemma \ref{lem:zhou2022-B1-modified}, using this in Lemma \ref{lem:hybridonline-sumbonuses-xon} to reduce the dimensional dependence in the summation of bonuses. \cite{song2023hybrid} and \cite{amortila2024harnessing}, on the other hand, have bounds on the order of $C^*\sqrt{d^2H^6N_{\on}}$ and $\sqrt{(C^*+c_{\on}(\gX))d^3H^6N_{\on}}$ respectively. We produce a better bound than \cite{tan2024natural, song2023hybrid, amortila2024harnessing} by at least a factor of $H^2$ by combining the total variance lemma and a novel truncation argument that rules out “bad” trajectories in Lemma \ref{lem:total-variance-concentration}, which allows us to maintain a desirable $H^3$ dependence on both partitions. 

\paragraph{Computational efficiency.} In terms of computational efficiency, when the action space is finite and of cardinality $|\mathcal{A}|$ the computational complexity of Algorithm \ref{alg:hyrule} is of order $\widetilde{O}\left(d^4 H^3 N |\mathcal{A}| \right)$, as outlined in \cite{he2023nearly}. Algorithm \ref{alg:hyrule} is therefore computationally efficient and runs in polynomial time in this case. When the action space is continuous, one may need to solve an optimization problem over the continuous action space, making the computational complexity highly problem-dependent.

\paragraph{Algorithm \ref{alg:hyrule} is unaware of the partition.} 
Unlike Algorithm \ref{alg:rappel}, Algorithm \ref{alg:hyrule} is fully unaware of the choice of partition, and there is therefore no need to estimate $d_{\on}$ or any relevant analogue to the choice of tolerance for OPTCOV. The regret bound therefore automatically adapts to the best possible partition, even though Algorithm \ref{alg:hyrule} is unaware of it.

\paragraph{Practical benefits of the offline-to-online approach.} While Algorithm \ref{alg:hyrule} only satisfies a PAC bound with a randomized policy, it minimizes the regret of the actions it takes. This enables the algorithm to be deployed in situations where its performance during online exploration is of critical importance, e.g. in applications like mobile health \citep{NahumShani2017JustinTimeAI}.


\paragraph{Technical challenges.} Although Algorithm \ref{alg:hyrule} is a straightforward generalization of LSVI-UCB++ in \cite{he2023nearly}, with $\Sigma_0$ initialized with the offline dataset, we had to decompose the regret into the regret on the offline and online partitions to achieve the regret guarantee in Theorem 2. In the process, we faced the following challenges:
\begin{itemize}
    \item Bounding the regret on the offline partition was challenging, as we were not able to utilize the technique that was used in He et. al (2023). Instead, we bounded the regret with the maximum eigenvalue of $\Sigma_{off,h}^{-1}$. To maintain a $H^3$ dependence on the offline partition, we had to use a truncation argument in Lemma \ref{lem:total-variance-concentration} that we also deployed in proving the regret guarantee of Algorithm \ref{alg:rappel}.
    \item Bounding the regret on the online partition allowed us to use an analysis that was close to that of \cite{he2023nearly}. However, directly following the argument of \cite{he2023nearly} would have left us with a $d^2H^3$ dependence in Theorem \ref{thm:hybridonline}. To reduce the dimensional dependence to $d_{\on}d$, we prove a sharper variant of Lemma B.1 from Zhou and Gu (2022) in Lemma \ref{lem:zhou2022-B1-modified}, using this in Lemma \ref{lem:hybridonline-sumbonuses-xon} to reduce the dimensional dependence in the summation of bonuses enough to achieve the desired result. Without the above two techniques, one could have used a simpler analysis to achieve a far looser $\sqrt{c_{\off}(\gX_{\off})^2d^6H^8 N_{\on}^2/N_{\off}} + \sqrt{d^2H^3}$ regret bound by using the maximum magnitude of the variance weights for the offline partition and the analysis from \cite{he2023nearly} verbatim for the online partition, but this would not have yielded the same improvement.
\end{itemize}
We accordingly provide a proof sketch below.

\paragraph{Proof sketch.} We first adopt the regret decomposition as in \cite{he2023nearly} and bound  
    $$\text{Reg}(T)\lesssim \sqrt{H^3 T} + \textstyle{\sum}_{h,t }\beta\|\boldsymbol{\Sigma}_{t,h}^{-1/2}\phi_h(s_h^{(t)},a_h^{(t)})\mathbbm{1}_{\gX_{\off}}\|_2 + \textstyle{\sum}_{h,t }\beta\|\boldsymbol{\Sigma}_{t,h}^{-1/2}\phi_h(s_h^{(t)},a_h^{(t)})\mathbbm{1}_{\gX_{\on}}\|_2.$$
    It then boils down to controlling the second and the third term separately. 
    We prove in Lemma~\ref{lem:hybridoffline-sumbonuses} that the sum of bonuses on the offline partition can be bounded by $\sum_h \sqrt{dN_{\on}\frac{N_{\on}}{N_{\off}}\max_{\phi_h \in \Phi_{\off}} \phi_h^{\top}\bar{\bf{\Sigma}}_{\off,h}^{-1}\phi_h}.$ 
    To further control this term, we then show in Lemma \ref{lem:hybridoffline-concentrability} that, for any partition $\gX_{\off}, \gX_{\on}$, $\sum_h\max_{\phi_h \in \Phi_{\off}} \sqrt{\phi_h^{\top}\bar{\bf{\Sigma}}_{\off,h}^{-1}\phi_h} \lesssim {c_{\off}(\gX_{\off})^2H^3}$. Putting things together, the second term can be controlled as 
    $$\beta\textstyle{\sum}_{h,t }\|\boldsymbol{\Sigma}_{t,h}^{-1/2}\phi_h(s_h^{(t)},a_h^{(t)})\mathbbm{1}_{\gX_{\off}}\|_2 \lesssim \sqrt{c_{\off}(\gX_{\off})^2dH^3N_{\on}^2/{N_{\off}}}.$$
    
    With respect to the third term, Lemma \ref{lem:hybridonline-sumbonuses-xon} (a sharpened version of Lemma E.1 in \cite{he2023nearly}), combined with the Cauchy-Schwartz inequality, yields
    $$\beta\textstyle{\sum}_{h,t }\|\boldsymbol{\Sigma}_{t,h}^{-1/2}\phi_h(s_h^{(t)},a_h^{(t)})\mathbbm{1}_{\gX_{\on}}\|_2 \lesssim d^4 H^8+\beta d^7 H^5+\beta \sqrt{d_{\on} H T+d_{\on} H \sum\nolimits_{h,t} \sigma_{t, h}^2}.$$
    Lastly, the total variance lemma (Appendix B, \cite{he2023nearly}) further suggests $\sum_{h,t} \sigma_{t, h}^2 \leq \widetilde{O}\left(H^2 T+d^{10.5} H^{16}\right)$. 
    Taking everything collectively establishes the desired result.

\section{Numerical experiments}
\label{sec:sims}

To demonstrate the benefits of hybrid RL in the offline-to-online and online-to-offline settings, we implement Algorithms \ref{alg:rappel} and \ref{alg:hyrule} on the scaled-down Tetris environment (as in \cite{tan2024natural}). This is a $6$-piece wide Tetris board with pieces no larger than $2 \times 2$, where the action space consists of four actions, differentiated by the degree of rotation in 90 degree intervals and  
the reward is given by penalizing any increases in the height of the stack from a tolerance of $2$ blocks. 
The offline dataset consists of $200$ trajectories generated from a uniform behavior policy. 
As in \cite{tan2024natural}, the feature vectors are  generated by projecting the $640$-dimensional one-hot state-action encoding onto a $60$-dimensional subspace spanned by the top $60$ eigenvectors of the covariance matrix of the offline dataset.\footnote{For simplicity in implementation, we implement LSVI-UCB++ \citep{he2023nearly} for Algorithm \ref{alg:hyrule} as-is, while substituting LSVI-UCB \citep{jin2019provably} for FORCE \citep{wagenmaker2022firstorder} within OPTCOV and LinPEVI-ADV for LinPEVI-ADV+ \citep{xiong2023nearly}.}

\begin{figure}[H]
    \centering
    \includegraphics[width=\textwidth]{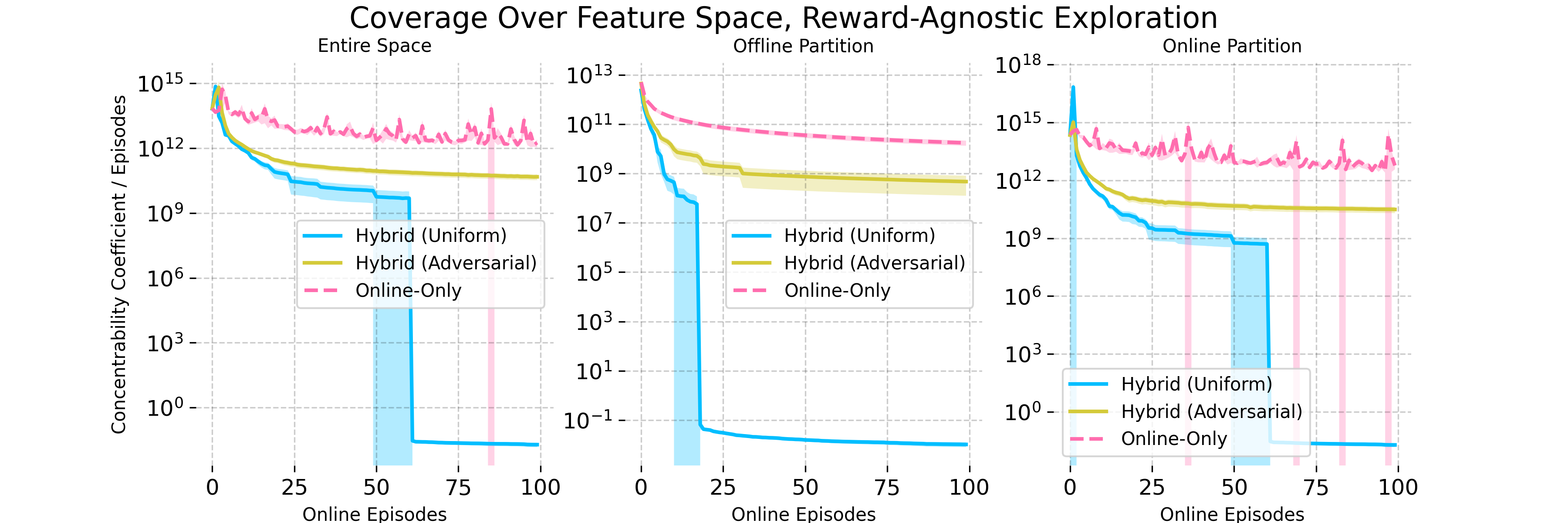}
    \caption{Coverage achieved by OPTCOV with 200 trajectories of offline data collected under a uniform and an adversarial behavior policy, and with no offline data. Results averaged over $30$ trials, with the shaded area depicting $1.96$-standard errors. Lower is better.}
    \label{fig:cov-agnostic}
\end{figure}

Figure \ref{fig:cov-agnostic} depicts the coverage (defined by $1/\lambda_{\min}(\mathbf{\Lambda}), 1/\lambda_{d_{\off}}(\mathbf{\Lambda}_{\off}), 1/\lambda_{d_{\on}}(\mathbf{\Lambda}_{\on})$) achieved by the reward-agnostic exploration algorithm, OPTCOV, when initialized respectively with 200 trajectories from (1) a uniform behavioral policy, (2) an adversarial behavior policy obtained by the negative of the weights of a fully-trained agent under Algorithm \ref{alg:rappel}, and (3) no offline trajectories at all for fully online learning. It shows that although hybrid RL with the uniform behavior policy achieves the best coverage throughout as expected, even hybrid RL with adversarially collected offline data achieves better coverage than online-only exploration. This demonstrates the potential of hybrid RL as a tool for taking advantage of poor quality offline data. 

\begin{figure}[H]
    \centering
    \includegraphics[width=\textwidth/\real{1.9}]{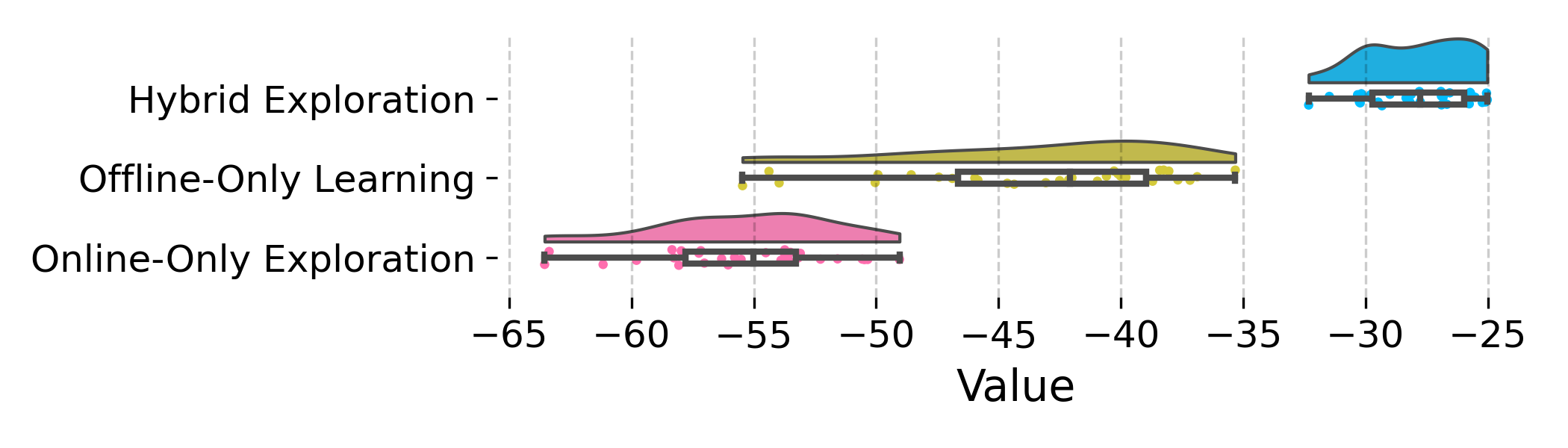}
    \caption{Value of policies learned by applying LinPEVI-ADV to the hybrid, offline, and online datasets, with an adversarial behavior policy. The reward is negative as it is the negative of the excess height. Results over $30$ trials. Higher is better.}
    \label{fig:reward-agnostic}
\end{figure}

In Figure \ref{fig:reward-agnostic}, one can observe that hybrid RL demonstrates strong benefits in the online-to-offline setting when the behavior policy is of poor quality. When applying LinPEVI-ADV to the hybrid dataset of $200$ trajectories and $100$ online trajectories, $300$ trajectories of adversarially collected offline data, and $300$ trajectories of online data under reward-agnostic exploration, we see that the hybrid dataset is most conducive for learning. Additionally, without a warm-start from offline data, online-only reward-agnostic exploration performs worse than the adversarially collected offline data due to significant burn-in costs. Hybrid RL therefore, in this instance, performs better than both offline-only and online-only learning alone.

\begin{figure}[H]
    \centering
    \includegraphics[width=0.8\textwidth]{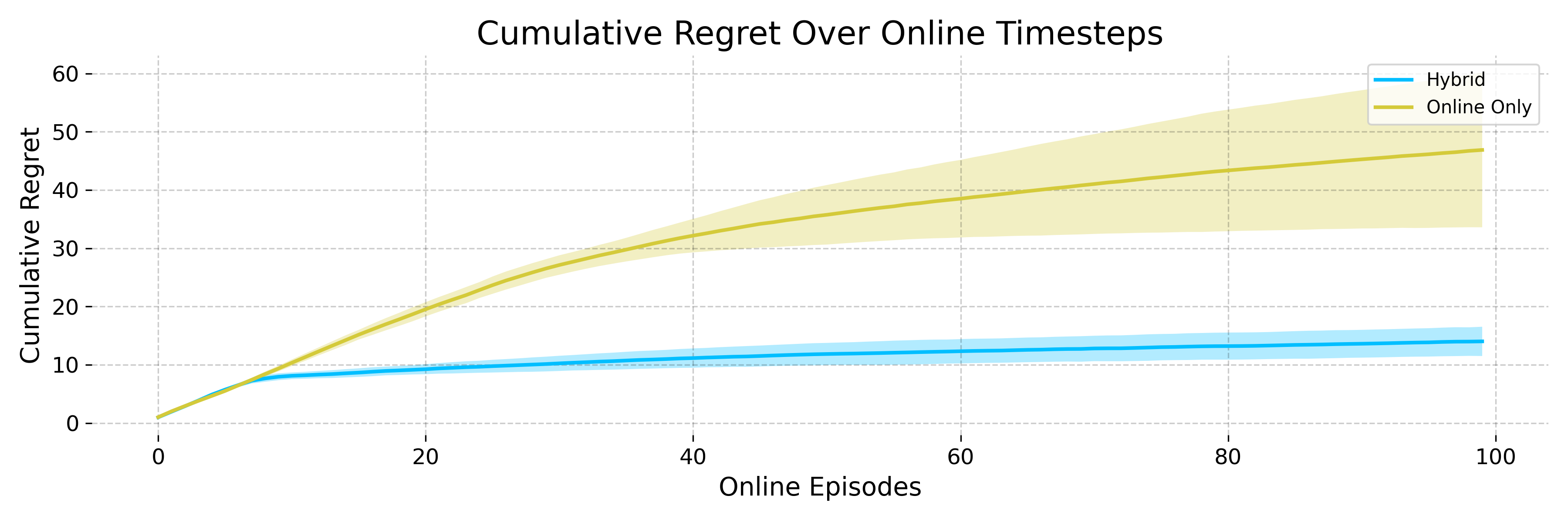}
    \includegraphics[width=0.8\textwidth]{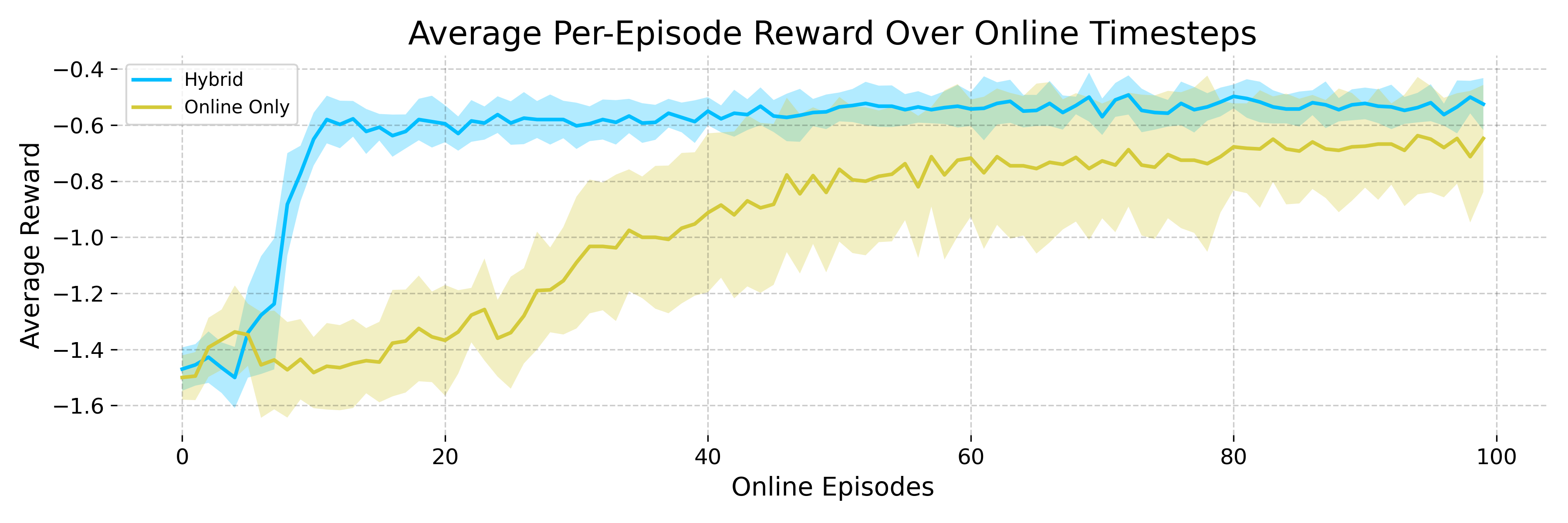}
    \caption{Comparison of LSVI-UCB++ and Algorithm \ref{alg:hyrule}. Results averaged over 10 trials, with $1$-standard deviation error bars over 10 trials. }
    \label{fig:regret}
\end{figure}

In Figure \ref{fig:regret}, we compare the performances of LSVI-UCB++ and Algorithm \ref{alg:hyrule}. 
It can be seen from the figure that initializing a regret-minimizing online algorithm (LSVI-UCB++, \citep{he2023nearly}) with an offline dataset as in Algorithm \ref{alg:hyrule} yields lower regret than the same algorithm without an offline dataset. This shows that even a nearly minimax-optimal online learning algorithm can stand to benefit from being initialized with offline data.

\section{Discussion, limitations and future work}
\label{sec:conclusions-limitations}

In this paper, we develop two hybrid RL algorithms for linear MDPs with desirable statistical guarantees. The first performs reward-agnostic online exploration to fill in gaps in the offline dataset before using offline RL to learn an $\epsilon$-optimal policy from the combined dataset, while the second warm-starts online RL with parameters estimated from an offline dataset. Both algorithms demonstrate provable gains over the minimax-optimal rates in offline or online-only reinforcement learning, and provide the sharpest worst-case bounds for hybrid RL in linear MDPs thus far. 

Throughout this paper, we have used both optimism and pessimism in our algorithm design. Other work in hybrid RL \citep{song2023hybrid, nakamoto2023calql, li2023reward, tan2024natural, amortila2024harnessing, wagenmaker2023leveraging} uses optimism, pessimism, or sometimes even neither. We conjecture that optimism is still helpful in aiding online exploration within hybrid RL and that pessimism helps in hybrid RL when learning from a combined dataset. However, determining if or when optimism or pessimism is beneficial in hybrid RL remains an open question.

Achieving a $H^3$ horizon dependence in offline RL for linear MDPs has proven challenging. Even under strong coverage assumptions, \cite{yin2022nearoptimal} and \cite{xiong2023nearly} only manage to achieve a $H^3$ horizon dependence for tabular MDPs. Obtaining a $\sqrt{d^2H^3/N}$ bound is an open problem. 

Furthermore, while Algorithm \ref{alg:rappel} improves upon the offline-only error bound in \cite{xiong2023nearly} and Algorithm \ref{alg:hyrule} improves upon the online-only regret bound in \cite{he2023nearly, zhou2021nearly}, we still desire a single algorithm that improves upon both the best possible offline-only and online-only rates at once. Additionally, the burn-in costs for Algorithms \ref{alg:rappel} and \ref{alg:hyrule} are nontrivial. The former is inherited from the OPTCOV algorithm of \cite{wagenmaker2023instancedependent}, while the latter is inherited from \cite{he2023nearly} and the truncation argument. Improving the former by devising new reward-agnostic exploration algorithms for linear MDPs, perhaps in the vein of \cite{li2023minimaxoptimal}, would be welcome. 

While we tackle the setting of linear MDPs, it remains a first step towards showing that hybrid RL breaks minimax-optimal barriers in the presence of function approximation. Further work in this vein on other types of function approximation would be an interesting contribution to the literature. 

\section*{Acknowledgements}

Y.~Wei is supported in part by the NSF grants CAREER award DMS-2143215, CCF-2106778, CCF-2418156, and the Google Research Scholar Award.

\bibliographystyle{apalike}
\bibliography{ref}

\appendix

 \newpage

\section{Unabridged versions of our algorithms}

\begin{algorithm}[H]
    \caption{Reward-Agnostic Exploration-initialized Pessimistic PAC Learning (RAPPEL, Full)}
    \begin{algorithmic}[1]\State {\bfseries Input:} Offline dataset $\gD_{\off}$, samples sizes $N_{\on}$, $N_{\off}$, feature maps $\phi_h$, , tolerance parameter for reward-agnostic exploration $\tau$.
        \State {\bfseries Initialize:} $\gD_h^{(0)}\leftarrow \emptyset \;\;\forall h \in [H]$, $\lambda = 1/H^2$, $\beta_2 = \tilde{O}(\sqrt{d})$. Set functions to optimize $
f_i(\boldsymbol{\Lambda})=\eta_i^{-1} \log\left(\textstyle\sum_{\phi \in \Phi} \exp({\eta_i\|\boldsymbol{\phi}\|_{\mathbf{A}_i(\boldsymbol{\Lambda})^{-1}}^2})\right), \mathbf{A}_i(\boldsymbol{\Lambda})=\boldsymbol{\Lambda}+(T_i K_i)^{-1}( \boldsymbol{\Lambda}_{0, i}+ \boldsymbol{\Lambda}_{\off})
$
for some $\boldsymbol{\Lambda}_{0, i}$ satisfying $\boldsymbol{\Lambda}_{0, i} \succeq \boldsymbol{\Lambda}_0$ for all $i$, and $\eta_i=2^{2 i / 5}$.

        \textcolor{blue}{\textbf{Exploration Phase:} Run an exploration algorithm (OPTCOV, \cite{wagenmaker2023instancedependent}) to collect covariates $\boldsymbol{\Lambda}_{h}$ such that $\max_{\phi_h \in \Phi} \phi_h^{\top}(\boldsymbol{\Lambda}_{h} + \lambda \textbf{I} + \boldsymbol{\Lambda}_{\off,h})^{-1}\phi_h \leq \tau.$}
        \For{$i=1,2,3,...$}
            \State Set the number of iterates $T_i \leftarrow 2^i$, episodes per iterate $K_i \leftarrow 2^i$.
            \State Play any policy for $K_i$ episodes to collect covariates $\boldsymbol{\Gamma}_0$ and data $\mathfrak{D}_0$.
            \State Initialize covariance matrix $\boldsymbol{\Lambda}_1 \gets \boldsymbol{\Gamma}_0/K$.
            \For{$t=1,...,T_i$}
                \If{$\sum_{j=1}^i T_j K_j \geq N_{\on}$}
                    \State \textbf{break}
                \EndIf
                \State Run FORCE \citep{wagenmaker2022firstorder} or another regret-minimizing algorithm on the exploration-focused synthetic reward $g_h^{(t)}(s,a) \propto \text{tr}(-\nabla_{\boldsymbol{\Lambda}} f_i(\boldsymbol{\Lambda})|_{\boldsymbol{\Lambda}=\boldsymbol{\Lambda}_t \phi(s,a)\phi(s,a)^{\top}})$.
                \State Collect covariates $\boldsymbol{\Gamma}_t$, data $\mathfrak{D}_t$. 
                \State Perform Frank-Wolfe update: $\boldsymbol{\Gamma}_{t+1} \gets (1-\frac{1}{t+1})\boldsymbol{\Lambda}_{t} + \frac{1}{t+1}\boldsymbol{\Gamma}_{t}/{K_i}$.
            \EndFor
            \State Assign $\widehat{\boldsymbol{\Lambda}_{i,h}} \gets \boldsymbol{\Lambda}_{T_i+1}, \mathfrak{D}_i \gets \cup_{t=0}^{T_i}\mathfrak{D}_t$.
            \State Set $\boldsymbol{\Lambda}_h = \widehat{\boldsymbol{\Lambda}_{i,h}}, \gD_{\on} = \mathfrak{D}_i$.
            \If{$f_i(\widehat{\boldsymbol{\Lambda}_i})\leq K_iT_i\tau$}
                \State \textbf{break}
            \EndIf
        \EndFor

    \textcolor{blue}{\textbf{Planning Phase:} Estimate $\widehat{\pi}$ using a pessimistic offline RL algorithm (LinPEVI-ADV+, \cite{xiong2023nearly}) with hyperparameters $\lambda, \beta_2$ on the combined dataset $\gD_{\off} \cup \{\gD^{(N_{\on})}_h\}_{h \in [H]}$.}
    \State Split the dataset $\gD_{\off} \cup \{\gD^{(N_{\on})}_h\}_{h \in [H]}$into $\gD$ and $\gD'$. Estimate, on $\gD'$, $$
        \begin{aligned}
        & \widetilde{\beta}_{h, 2}=\underset{\beta \in \mathbb{R}^d}{\operatorname{argmin}} \sum_{\tau \in \mathcal{D}^{\prime}}\Big[\left\langle\phi\left(s_h^\tau, a_h^\tau\right), \beta\right\rangle-\big(\widehat{V}_{h+1}^{\prime}\big)^2\left(s_{h+1}^\tau\right)\Big]^2+\lambda\|\beta\|_2^2, \\
        & \widetilde{\beta}_{h, 1}=\underset{\beta \in \mathbb{R}^d}{\operatorname{argmin}} \sum_{\tau \in \mathcal{D}^{\prime}}\left[\left\langle\phi\left(s_h^\tau, a_h^\tau\right), \beta\right\rangle-\widehat{V}_{h+1}^{\prime}\left(s_{h+1}^\tau\right)\right]^2+\lambda\|\beta\|_2^2.\\
        &\widehat{\sigma}_h^2(s, a):=\max \Big\{1,\big[\phi(s, a)^{\top} \widetilde{\beta}_{h, 2}\big]_{\left[0, H^2\right]}-\big[\phi(s, a)^{\top} \widetilde{\beta}_{h, 1}\big]_{[0, H]}^2-\tilde{O}\Big(\frac{d H^3}{\sqrt{N \kappa}}\Big)\Big\}.
        \end{aligned}
        $$
    \For{$h=1,...,H$}
        \State Compute covariance matrix $\boldsymbol{\Sigma}_h=\sum_{\tau \in \mathcal{D}} \phi\left(s_h^\tau, a_h^\tau\right) \phi\left(s_h^\tau, a_h^\tau\right)^{\top} / \widehat{\sigma}_h^2\left(s_h^\tau, a_h^\tau\right)+\lambda \bf{I}_d.$
        \State Compute weights $\widehat{w}_h=\boldsymbol{\Sigma}_h^{-1}\Big(\sum_{\tau \in \mathcal{D}} \phi\left(s_h^\tau, a_h^\tau\right) \frac{r_h^\tau+\widehat{V}_{h+1}\left(s_{h+1}^\tau\right)}{\widehat{\sigma}_h^2\left(s_h^\tau, a_h^\tau\right)}\Big).$
        \State Compute pessimistic penalty $\Gamma_h(\cdot, \cdot) \leftarrow \beta_2\|\phi(\cdot, \cdot)\|_{\boldsymbol{\Sigma}_h^{-1}}.$
       \State Compute pessimistic Q-function $\widehat{Q}_h(\cdot, \cdot) \leftarrow\big\{\phi(\cdot, \cdot))^{\top} \widehat{w}_h-\Gamma_h(\cdot, \cdot)\big\}_{[0, H-h+1]}.$
        \State Set $\widehat{\pi}_h(\cdot \mid \cdot) \leftarrow \arg \max _{\pi_h}\big\langle\widehat{Q}_h(\cdot, \cdot), \pi_h(\cdot \mid \cdot)\big\rangle_{\mathcal{A}}$, $\widehat{V}_h(\cdot) \leftarrow\big\langle\widehat{Q}_h(\cdot, \cdot), \widehat{\pi}_h(\cdot \mid \cdot)\big\rangle_{\mathcal{A}}.$
    \EndFor
    \State {\bfseries Output:} $\widehat{\pi}$.
    \end{algorithmic}
    \label{alg:rappel-full}
\end{algorithm}
\begin{algorithm}[H]
    \caption{Hybrid Regression for Upper-Confidence Reinforcement Learning (HYRULE, Full)}
    \begin{algorithmic}[1]
        \State {\bfseries Input:} Offline dataset $\gD_{\off}$, samples sizes $N_{\on}$, $N_{\off}$, feature maps $\phi_h$. Regularization parameter $\lambda>0$, confidence radii $\beta, \bar\beta, \tilde\beta$, $t_{\text{last}}=0$.
        \State {\bfseries Initialize:} For $h \in[H]$, estimate $\widehat{\mathbf{w}}_{1, h}, \widecheck{\mathbf{w}}_{1, h}, Q_{1,h}, \widecheck{Q}_{1,h}, \sigma_{1,h}, \bar{\sigma}_{1,h}$ from $\gD_{\off}$ with the same formulas outlined below, and assign $\boldsymbol{\Sigma}_{0, h}=\boldsymbol{\Sigma}_{1, h}=\boldsymbol{\Sigma}_{\off} + \lambda\mathbf{I} = \sum_{n=1}^{N_{\off}}\bar{\sigma}_{n,h}^{-2}\phi_{n,h}\phi_{n,h}^{\top}+\lambda\mathbf{I}$.
        \For{episodes $t=1,...,T$}
        \State Receive the initial state $s_{1}^{(t)}$.
        \For{horizon $h=1,...,H$}
            \State $\widehat{\mathbf{w}}_{k, h}=\mathbf{\Sigma}_{t, h}^{-1} \sum_{i=1}^{t-1} \bar{\sigma}_{i, h}^{-2} \boldsymbol{\phi}(s_h^{(i)}, a_h^{(i)}) V_{t, h+1}(s_{h+1}^{(i)})$.
            \State $\widecheck{\mathbf{w}}_{t, h}=\boldsymbol{\Sigma}_{t, h}^{-1} \sum_{i=1}^{t-1} \bar{\sigma}_{i, h}^{-2} \boldsymbol{\phi}(s_h^{(i)}, a_h^{(i)}) \widecheck{V}_{t, h+1}(s_{h+1}^{(i)})$.
            \If{there exists a stage $h^{\prime} \in[H]$ such that $\operatorname{det}\left(\boldsymbol{\Sigma}_{t, h^{\prime}}\right) \geq 2 \operatorname{det}\left(\boldsymbol{\Sigma}_{t_{\text {last }}, h^{\prime}}\right)$}
            \State $Q_{t, h}(s, a)=\min \left\{r_h(s, a)+\widehat{\mathbf{w}}_{t, h}^{\top} \boldsymbol{\phi}(s, a)+\beta \sqrt{\boldsymbol{\phi}(s, a)^{\top} \boldsymbol{\Sigma}_{t, h}^{-1} \boldsymbol{\phi}(s, a)}, Q_{t-1, h}(s, a), H\right\}$.
            \State $\widecheck{Q}_{t, h}(s, a)=\max \left\{r_h(s, a)+\widecheck{\mathbf{w}}_{t, h}^{\top} \boldsymbol{\phi}(s, a)-\bar{\beta} \sqrt{\boldsymbol{\phi}(s, a)^{\top} \boldsymbol{\Sigma}_{t, h}^{-1} \boldsymbol{\phi}(s, a)}, \widecheck{Q}_{t-1, h}(s, a), 0\right\} $.
            \State  Set the last updating episode $t_{\text {last }}=t$.
            \Else 
            \State $ Q_{t, h}(s, a)=Q_{t-1, h}(s, a)$, $\widecheck{Q}_{t, h}(s, a)=\widecheck{Q}_{t-1, h}(s, a)$.
            \EndIf
            \State $V_{t, h}(s)=\max _a Q_{t, h}(s, a)$, $\widecheck V_{t, h}(s)=\max _a \widecheck Q_{t, h}(s, a)$.
        \EndFor
        \For{horizon $h=1,...,H$}
            \State Play action $a_h^{(t)} \leftarrow \argmax_a Q_{t,h}(s_h^{(t)}, a)$.
            \State Estimate $\sigma_{t, h}=\sqrt{\left[\overline{\mathbb{V}}_{t, h} V_{t, h+1}\right]\left(s_h^{(t)}, a_h^{(t)}\right)+E_{t, h}+D_{t, h}+H},$ setting  $E_{t,h}$ and $D_{t,h}$:
            \begin{align*}
E_{t, h}= & \min \left\{\widetilde{\beta}\left\|\boldsymbol{\Sigma}_{t, h}^{-1 / 2} \boldsymbol{\phi}(s_h^{(t)}, a_h^{(t)})\right\|_2, H^2\right\}+\min \left\{2 H \bar{\beta}\left\|\boldsymbol{\Sigma}_{t, h}^{-1 / 2} \boldsymbol{\phi}(s_h^{(t)}, a_h^{(t)})\right\|_2, H^2\right\}, \\
D_{t, h}= & \min \Bigg\{4 d ^ { 3 } H ^ { 2 } \Bigg(\widehat{\mathbf{w}}_{t, h}^{\top} \boldsymbol{\phi}(s_h^{(t)}, a_h^{(t)})-\widecheck{\mathbf{w}}_{t, h}^{\top} \boldsymbol{\phi}(s_h^{(t)}, a_h^{(t)}) \\
&\qquad + 2 \bar{\beta} \sqrt{\boldsymbol{\phi}(s_h^{(t)}, a_h^{(t)})^{\top} \boldsymbol{\Sigma}_{t, h}^{-1} \boldsymbol{\phi}(s_h^{(t)}, a_h^{(t)})}\Bigg), d^3 H^3\Bigg\}.
\end{align*}
            \State ${\bar{\sigma}}_{t, h} \leftarrow \max \left\{\sigma_{t, h}, \sqrt{H}, 2 d^3 H^2\left\|\boldsymbol{\phi}\left(s_h^{(t)}, a_h^{(t)}\right)\right\|_{\boldsymbol{\Sigma}_{t, h}^{-1}}^{1 / 2}\right\}$\footnotemark.
           \State $\boldsymbol{\Sigma}_{t+1, h}=\boldsymbol{\Sigma}_{t, h}+\bar{\sigma}_{t, h}^{-2} \boldsymbol{\phi}\left(s_h^{(t)}, a_h^{(t)}\right) \boldsymbol{\phi}\left(s_h^{(t)},a_h^{(t)}\right)^{\top}$.
            \State Receive reward $r_h^{(t)}$, next state $s_{h+1}^{(t)}$.
        \EndFor
        \EndFor
        \State {\bfseries Output:} Greedy policy $\widehat{\pi} = \pi^{Q_{T,h}}$, $\text{Unif}(\pi^{Q_{1,h}},...,\pi^{Q_{T,h}})$ for PAC guarantee. 
    \end{algorithmic}
    \label{alg:hyrule-full}
\end{algorithm}
\footnotetext{\cite{he2023nearly} write $\bar{\sigma}_{t, h} \leftarrow \max \{\sigma_{t, h}, H,...\}$ instead of $\sqrt{H}$. We believe that this is a typo in their paper, given that in the proof of Lemma B.1, they state right after equation D.7 that $0 \leq \bar{\sigma}_{i,h}^{-1} \leq 1 / \sqrt{H}$. Moreover, in the proof of Lemma B.5 the array of equations right after equation D.22, particularly $\left\|\bar{\sigma}_{i, h}^{-1} \phi\left(s_h^i, a_h^i\right)\right\|_2 \leq\left\|\phi\left(s_h^i, a_h^i\right)\right\|_2 / \sqrt{H}$, only holds true if this is $\sqrt{H}$.}

\newpage
\section{Proofs for Theorem \ref{alg:rappel}}
\label{app:hybridoffline}

The proof of Theorem \ref{thm:hybridoffline} follows from a series of distinct results, presented as three lemmas below. The first lemma demonstrates that RAPPEL achieves no higher error than LinPEVI-ADV+ itself, the second produces a $d_{\on}dH^4$ error bound, while the third produces a $d_{\on}^2dH^3$ error bound via a slightly different truncation argument. We will prove Equation~\ref{eqn:rappel-res1} in Lemma~\ref{lem:hybridoffline-instance}, which act as a general statistical guarantee for RAPPEL. We show the validity of the instance-dependent bound developed from Equation~\ref{eqn:rappel-res1} in Lemmas~\ref{lem:hybridoffline-H4} and~\ref{lem:hybridoffline-H3}.  We  observe that Theorem \ref{thm:hybridoffline} follows immediately after.


\begin{lem}[General Statistical Guarantee for RAPPEL, Algorithm \ref{alg:rappel}]
        \label{lem:hybridoffline-instance}
        For every $\delta \in (0,1)$ and any partition $\gX_{\off}, \gX_{\on}$, with probability at least $1-\delta$, RAPPEL achieves
        $$V_1^*(s)-V_1^{\widehat{\pi}}(s) \lesssim \sqrt{d}\sum_{h=1}^H \E_{\pi^*}||\phi(s_h, a_h)||_{(\Sigma_{\off,h}^{*}+ \Sigma_{\on,h}^*)^{-1}}\leq \sqrt{d}\sum_{h=1}^H \E_{\pi^*}||\phi(s_h, a_h)||_{\Sigma_{\off,h}^{*-1}}.$$
\end{lem}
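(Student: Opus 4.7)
The strategy is to reduce the statement to a direct application of Theorem 2 of \cite{xiong2023nearly}, whose sharp instance-dependent guarantee for LinPEVI-ADV+ is essentially already the first form on the right-hand side. Since Algorithm \ref{alg:rappel} runs LinPEVI-ADV+ on the combined dataset $\gD := \gD_{\off} \cup \bigcup_h \gD_h^{(N_{\on})}$ with the hyperparameters $\lambda = 1/H^2$ and $\beta_2 = \tilde O(\sqrt d)$ already prescribed by the algorithm, the main work is to read off their bound and rewrite it in the offline-plus-online decomposition.

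First, I would verify that these hyperparameter choices meet the preconditions of Xiong et al.'s Theorem 2, and that its sample-size requirement ($N \gtrsim d^2 H^6$) is already implied by the burn-in assumption of the parent Theorem \ref{thm:hybridoffline}. Invoking that theorem on $\gD$ would then yield, with probability at least $1-\delta$,
\[
V_1^*(s) - V_1^{\widehat\pi}(s) \lesssim \sqrt d \sum_{h=1}^H \E_{\pi^*} \|\phi(s_h, a_h)\|_{\Sigma_h^{*-1}},
\]
where $\Sigma_h^* = \sum_{\tau \in \gD} \phi(s_h^\tau, a_h^\tau) \phi(s_h^\tau, a_h^\tau)^\top / [\mathbb V_h V_{h+1}^*](s_h^\tau, a_h^\tau) + \lambda \mathbf I$. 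Since $\gD$ partitions as $\gD_{\off} \sqcup \gD_{\on}$, the quadratic sum splits additively, so $\Sigma_{\off,h}^* + \Sigma_{\on,h}^*$ differs from $\Sigma_h^*$ only by a single extra copy of $\lambda \mathbf I$. Because $\Sigma_h^* \succeq \lambda \mathbf I$, we have $\Sigma_{\off,h}^* + \Sigma_{\on,h}^* = \Sigma_h^* + \lambda \mathbf I \preceq 2 \Sigma_h^*$, so $\|\phi\|_{\Sigma_h^{*-1}} \le \sqrt{2}\,\|\phi\|_{(\Sigma_{\off,h}^* + \Sigma_{\on,h}^*)^{-1}}$ pointwise; absorbing the $\sqrt 2$ into the $\lesssim$ gives the first displayed inequality.

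The second inequality is then immediate from matrix monotonicity of the inverse on the Loewner cone: since $\Sigma_{\on,h}^* \succeq 0$ one has $\Sigma_{\off,h}^* + \Sigma_{\on,h}^* \succeq \Sigma_{\off,h}^*$, whence $(\Sigma_{\off,h}^* + \Sigma_{\on,h}^*)^{-1} \preceq \Sigma_{\off,h}^{*-1}$, and the induced quadratic forms (and therefore their $\pi^*$-expectations) obey the claimed ordering.

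The main obstacle I anticipate is a measurability issue: Xiong et al.\ state Theorem 2 under i.i.d.\ offline data, whereas the online portion of $\gD$ here is produced by OPTCOV, an adaptive rule that depends on both the offline data and its own history. I would therefore audit the two concentration steps in their proof that actually consume independence — the self-normalized Bernstein/Freedman inequalities used to control the weighted ridge estimator for $\widehat w_h$ and for the plug-in variance estimator — and confirm, as is standard, that each is either stated for, or admits the immediate extension to, adapted martingale-difference sequences $(\phi_{n,h}, s_{h+1}^{(n)})$. Granting this upgrade, the rest of Xiong et al.'s argument is measurability-insensitive (it only uses the Bellman linear structure from Assumption \ref{aspt:linear-mdp} and deterministic algebra on the weighted covariance) and transfers to the hybrid dataset verbatim, completing the reduction.
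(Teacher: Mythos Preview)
Your proposal is correct and follows essentially the same route as the paper: invoke Theorem~2 of \cite{xiong2023nearly} on the combined dataset to get the $\Sigma_h^{*-1}$ bound, identify $\Sigma_h^*$ with $\Sigma_{\off,h}^*+\Sigma_{\on,h}^*$, and then use Loewner monotonicity of the inverse for the second inequality. You are in fact more careful than the paper on two points it glosses over --- the extra $\lambda\mathbf I$ when summing the two regularized covariances (the paper simply writes equality), and the adaptivity of the OPTCOV-collected online data vis-\`a-vis the i.i.d.\ hypothesis in \cite{xiong2023nearly} --- and your proposed fixes for both are sound.
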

\begin{proof}
    Before we proof the desired result, we first recall that 
    \begin{align}\label{equ:unweighted-cov}
        \Lambda_h&=\sum_{\tau \in \mathcal{D}} \phi\left(s_h^\tau, a_h^\tau\right) \phi\left(s_h^\tau, a_h^\tau\right)^{\top}+I_d,\\
        \label{equ:weighted-cov}
        \Sigma_h^*&=\sum_{\tau \in \mathcal{D}} \phi\left(s_h^\tau, a_h^\tau\right) \phi\left(s_h^\tau, a_h^\tau\right)^{\top} /\left[\mathbb{V}_h V_{h+1}^*\right]\left(s_h^\tau, a_h^\tau\right)+\lambda I_d.
    \end{align}
    Then, by invoking Theorem 2 from \cite{xiong2023nearly} with $N > \Omega(d^2H^6), \lambda=1/H^2, \beta_1 = O(\sqrt{d})$, we see that 
    \begin{align*}
        V_1^*(s)-V_1^{\widehat{\pi}}(s) 
       & \lesssim \sqrt{d} \sum_{h=1}^H \mathbb{E}_{\pi^*}\left[\|\phi(s_h, a_h)\|_{\Sigma_h^{*-1}} \mid s_1=s\right] \\
        & = \sqrt{d} \sum_{h=1}^H \mathbb{E}_{\pi^*}\left[\|\phi(s_h, a_h)\|_{(\Sigma_{\off,h}^{*}+ \Sigma_{\on,h}^*)^{-1}} \mid s_1=s\right],
    \end{align*}
    as $\Sigma_h = \Sigma_{\off,h}^{*}+ \Sigma_{\on,h}^*$. Noting that $\Sigma_{\on,h}^*$ is positive semi-definite, it then follows $\Sigma_{\off,h}^{*} \preceq \Sigma_{\off,h}^{*}+ \Sigma_{\on,h}^*$. Therefore, 
    $$\sqrt{d}\sum_{h=1}^H \E_{\pi^*}||\phi(s_h, a_h)||_{(\Sigma_{\off,h}^{*}+ \Sigma_{\on,h}^*)^{-1}}\leq \sqrt{d}\sum_{h=1}^H \E_{\pi^*}||\phi(s_h, a_h)||_{\Sigma_{\off,h}^{*-1}},$$
    and the inequality holds. 
\end{proof}

\begin{lem}[First Error Bound for RAPPEL, Algorithm \ref{alg:rappel}]
        \label{lem:hybridoffline-H4}
        For every $\delta \in (0,1)$ and any partition $\gX_{\off}, \gX_{\on}$, with probability at least $1-\delta$, RAPPEL achieves
        $$V_1^*(s)-V_1^{\widehat{\pi}}(s) \lesssim \sqrt{\frac{c_{\off}(\gX_{\off})dH^{4}}{N_{\off}}} + \sqrt{\frac{d_{\on}dH^{4}}{N_{\on}}}, \text{ where }$$
        $N \geq \max\left\{{\alpha_{\on}^{4}}{d_{\on}^{-4}},{\alpha_{\off}^{4}}{c_{\off}(\gX_{\off})^{-4}}\right\}\max\{N^*,\text{poly}(d, H,c_{\off}(\gX_{\off}), \log 1/\delta)\}$, where we define the quantities
        $\alpha_{\off} = \frac{N_{\off}}{N}$, $\alpha_{\on} = \frac{N_{\on}}{N}$, and the minimal samples for coverage is
        $$N^* = \min _N C \cdot N \text { s.t. } \inf _{\boldsymbol{\Lambda} \in \boldsymbol{\Omega}} \max _{\boldsymbol{\phi} \in \Phi} \boldsymbol{\phi}^{\top}\left(N(\boldsymbol{\Lambda}+\bar{\lambda} I)+\boldsymbol{\Lambda}_{\mathrm{off}}\right)^{-1} \boldsymbol{\phi} \leq {\tilde{O}(\max\{d_{\on}/N_{\on}, c_{\off}(\gX_{\off})/N_{\off}\})}.$$
\end{lem}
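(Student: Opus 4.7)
The plan is to build directly on Lemma~\ref{lem:hybridoffline-instance}, which already reduces the sub-optimality to controlling $\sqrt{d}\sum_{h=1}^H \E_{\pi^*}\|\phi(s_h,a_h)\|_{\Sigma_h^{*-1}}$. My first step would be to pass from the expectation under $\pi^*$ to a worst-case bound over each feature-space partition: since every $(h,s,a)$ lies in exactly one of $\gX_{\off}$ or $\gX_{\on}$, its feature lies in $\Phi_{\off}$ or $\Phi_{\on}$, so
\begin{equation*}
\E_{\pi^*}\|\phi(s_h,a_h)\|_{\Sigma_h^{*-1}} \;\le\; \max_{\phi\in\Phi_{\off}}\sqrt{\phi^\top \Sigma_h^{*-1}\phi} \;+\; \max_{\phi\in\Phi_{\on}}\sqrt{\phi^\top \Sigma_h^{*-1}\phi}.
\end{equation*}

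Next, I would exchange the variance-weighted inverse covariance for the unweighted one via the bound $\Sigma_h^{*-1}\preceq H^2 \Lambda_h^{-1}$ from \cite{xiong2023nearly}. This pulls out an extra factor of $H$ per stage (giving the $H^2$ under the square root that will ultimately become $H^4$ in the squared error) and reduces the task to bounding $\max_{\phi\in\Phi_\star}\sqrt{\phi^\top \Lambda_h^{-1}\phi}$ separately on the two partitions.

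The core step is then to cash in the OPTCOV guarantee. Because Algorithm~\ref{alg:rappel} runs OPTCOV until $\max_{\phi\in\Phi}\phi^\top(\Lambda_h+\lambda I+\Lambda_{\off,h})^{-1}\phi\le\tau$ with $\tau\le \tilde{O}(\max\{d_{\on}/N_{\on},\,c_{\off}(\gX_{\off})/N_{\off}\})$, I can combine this with Lemma~\ref{lem:cov-linear} (which gives $c_{\on}(\gX_{\on})\le d_{\on}$) and Lemma~\ref{lem:cov-bound} to obtain partition-wise leverage estimates of the form
\begin{equation*}
\max_{\phi\in\Phi_{\off}}\phi^\top \Lambda_h^{-1}\phi \;\lesssim\; \frac{c_{\off}(\gX_{\off})}{N_{\off}}, \qquad \max_{\phi\in\Phi_{\on}}\phi^\top \Lambda_h^{-1}\phi \;\lesssim\; \frac{d_{\on}}{N_{\on}}.
\end{equation*}
Substituting back into the decomposition above and summing over $h\in[H]$ then gives the claimed bound $\sqrt{c_{\off}(\gX_{\off})dH^4/N_{\off}}+\sqrt{d_{\on}dH^4/N_{\on}}$.

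The hard part will be certifying that OPTCOV actually attains its target tolerance within the allotted online budget of $N_{\on}$ episodes; this is exactly what forces the stated sample-size requirement, which combines the unavoidable reward-agnostic burn-in $N^*(\tau)$ from \cite{wagenmaker2023instancedependent} with a $\mathrm{poly}(d,H,c_{\off}(\gX_{\off}),\log 1/\delta)$ overhead inherited from Lemma~\ref{lem:hybridoffline-instance}. A secondary technical point is that the random covariate matrices $\Lambda_h$ must satisfy the partition-wise leverage bounds simultaneously across all horizons $h\in[H]$ and jointly with the OPTCOV high-probability event, which I would handle with a standard union bound absorbed into $\log(1/\delta)$.
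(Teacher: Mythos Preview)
Your proposal is correct and follows essentially the same route as the paper: start from Lemma~\ref{lem:hybridoffline-instance}, split the expectation over $\gX_{\off}$ and $\gX_{\on}$, replace $\Sigma_h^{*-1}$ by $H^2\Lambda_h^{-1}$, invoke Lemma~\ref{lem:cov-linear} and Lemma~\ref{lem:cov-bound} with the OPTCOV tolerance, and finish with the burn-in accounting from Lemma~\ref{lem:optcov-correct}. One small point: Lemma~\ref{lem:cov-bound} actually gives the single bound $\max_{\phi\in\Phi}\phi^\top\Lambda_h^{-1}\phi\lesssim \max\{c_{\off}(\gX_{\off})/N_{\off},\,d_{\on}/N_{\on}\}$ rather than two separate partition-wise estimates, but since $\sqrt{\max\{a,b\}}\le\sqrt{a}+\sqrt{b}$ this yields the same final inequality.
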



\begin{proof}
    
    Let $\gX_{\off}, \gX_{\on}$ be an arbitrary partition of $\gS \times \gA \times [H]$. Let us leave the choice of OPTCOV tolerance unspecified for the moment, and simply assume for now that we have data $\gD$ collected under the success event of Lemma \ref{lem:optcov-correct}. 
    
    We now invoke Theorem 2 from \cite{xiong2023nearly} on this dataset. As we choose $N > \Omega(d^2H^6)$, $\lambda=1/H^2$ and $\beta_1 = O(\sqrt{d})$, 
    we obtain the suboptimality gap decomposition below:
    \begin{align*}
        V_1^*(s)-V_1^{\widehat{\pi}}(s)
        \lesssim \sqrt{d} \sum_{h=1}^H \mathbb{E}_{\pi^*}\left[\|\phi(s_h, a_h)\|_{\Sigma_h^{*-1}} \mid s_1=s\right].
    \end{align*}

    This decomposition can be further decomposed into the sum of bonuses on the offline and online partitions $\gX_{\off}$ and $\gX_{\on}$, respectively:
    \begin{align*}
        & \sqrt{d} \sum_{h=1}^H \mathbb{E}_{\pi^*}\left[\|\phi(s_h, a_h)\|_{\Sigma_h^{*-1}} \mid s_1=s\right] \\
        &= \sqrt{d} \sum_{h=1}^H \left(\mathbb{E}_{\pi^*}\left[\|\phi(s_h, a_h)\|_{\Sigma_h^{*-1}} \mathbbm{1}_{\gX_{\on}}\mid s_1=s\right] + \mathbb{E}_{\pi^*}\left[\|\phi(s_h, a_h)\|_{\Sigma_h^{*-1}} \mathbbm{1}_{\gX_{\off}}\mid s_1=s\right]\right) \\
        &= \sqrt{d} \sum_{h=1}^H \mathbb{E}_{\pi^*}\left[\sqrt{\phi(s_h, a_h)^{\top}{\Sigma_h^{*-1}}\phi(s_h, a_h) }\mathbbm{1}_{\gX_{\on}}\mid s_1=s\right]\\
        &\qquad + \sqrt{d} \sum_{h=1}^H \mathbb{E}_{\pi^*}\left[\sqrt{\phi(s_h, a_h)^{\top}{\Sigma_h^{*-1}}\phi(s_h, a_h)}\mathbbm{1}_{\gX_{\off}}\mid s_1=s\right].
        \end{align*}

        We can further upper bound the above expectations under the optimal policy $\pi^*$ by taking the maximum of the quadratic form over each partition, yielding
        \begin{align*}
        & \sqrt{d} \sum_{h=1}^H \mathbb{E}_{\pi^*}\left[\|\phi(s_h, a_h)\|_{\Sigma_h^{*-1}} \mid s_1=s\right] \\
        &=\sqrt{d} \sum_{h=1}^H \max_{\phi_h \in \Phi_{\on}} \sqrt{\phi_h^{\top}\Sigma_h^{*-1}\phi_h}\mathbbm{1}_{\gX_{\on}} + \sqrt{d} \sum_{h=1}^H  \max_{\phi_h \in \Phi_{\off}} \sqrt{\phi_h^{\top}\Sigma_h^{*-1}\phi_h}\mathbbm{1}_{\gX_{\off}}\\
        &\leq \sqrt{d} \sum_{h=1}^H \max_{\phi_h \in \Phi_{\on}} \sqrt{\phi_h^{\top}\Sigma_h^{*-1}\phi_h} + \sqrt{d} \sum_{h=1}^H  \max_{\phi_h \in \Phi_{\off}}\sqrt{ \phi_h^{\top}\Sigma_h^{*-1}\phi_h}.
    \end{align*}
    
    From \cite{xiong2023nearly}, as $\left[\mathbb{V}_h V_{h+1}^*\right](\cdot, \cdot) \in\left[1, H^2\right]$, the weighted covariance matrix is uniformly upper bounded by the unweighted covariance matrix in the following manner: $$ \Sigma_h^{*-1} \preceq H^2 \Lambda_h^{-1},$$ 
    which leads to our conclusion that 
    $$V_1^*(s) - V_1^{\widehat{\pi}}(s) \lesssim \sqrt{d} \sum_{h=1}^H \max_{\phi_h \in \Phi_{\on}} \sqrt{H^2\phi_h^{\top}\Lambda_h^{-1}\phi_h} + \sqrt{d} \sum_{h=1}^H  \max_{\phi_h \in \Phi_{\off}}\sqrt{H^2 \phi_h^{\top}\Lambda_h^{-1}\phi_h}.$$
    
    We now further bound the above two quadratic forms over the online and offline partitions respectively. By Lemma \ref{lem:cov-linear}, the partial online coverage coefficient is bounded by the dimensionality of the online partition:
    $$c_{\on}(\gX_{\on}) = \inf_\pi \max_{\phi_h \in \Phi_{\on}} \phi_h^{\top}\E_{\bar\phi_h \sim d_h^\pi}[\bar\phi_h\bar\phi_h^{\top}]^{-1}\phi_h \leq d_{\on}.$$

    As we have $N_{\on}$ online episodes, the optimal covariates for online exploration would then yield
    $$\inf_{\boldsymbol{\Lambda}} \max_{\phi_h \in \Phi_{\on}} \phi_h^\top \boldsymbol{\Lambda}^{-1} \phi_h \lesssim c_{\on}(\gX_{\on})/N_{\on} \leq d_{\on}/N_{\on}.$$

    Conversely, we also have access to $N_{\off}$ episodes of offline data with the following guarantee that follows from an application of Matrix Chernoff:
    $$\max_{\phi_h \in \Phi_{\off}} \phi_h^\top \boldsymbol{\Lambda}_{\off}^{-1} \phi_h \lesssim c_{\off}(\gX_{\off})/N_{\off}.$$

    Therefore, by Lemma \ref{lem:cov-bound}, we can conclude that on its success event, running OPTCOV with tolerance $\tilde{O}(\max\{d_{\on}/N_{\on}, c_{\off}(\gX_{\off})/N_{\off}\}),$
    provides us covariates such that
    $$\max_{\phi_h \in \Phi} \phi_h^{\top}\boldsymbol{\Lambda}_{h}^{-1}\phi_h 
        \lesssim \max\left\{c_{\off}(\gX_{\off})/N_{\off}, d_{\on}/N_{\on}\right\},$$
    yielding the desired result.

    It now remains to work out the burn-in cost from running OPTCOV. The following quantity of the minimal online samples any algorithm requires to establish coverage was first proposed in \cite{wagenmaker2023leveraging}:
    $$N^* = \min _N C \cdot N \text { s.t. } \inf _{\boldsymbol{\Lambda} \in \boldsymbol{\Omega}} \max _{\phi \in \Phi} \boldsymbol{\phi}^{\top}\left(N(\boldsymbol{\Lambda}+\bar{\lambda} I)+\boldsymbol{\Lambda}_{\mathrm{off}}\right)^{-1} \boldsymbol{\phi} \leq \frac{\tilde{O}(\max\{d_{\on}/N_{\on}, c_{\off}(\gX_{\off})/N_{\off}\})}{6}.$$
    
    We can use this as follows. Invoking Lemma \ref{lem:optcov-correct}, we see that OPTCOV incurs 
    $$\max\left\{\left(\frac{N_{\off}}{c_{\off}(\gX_{\off})}\right)^{4/5}, \;\; \left(\frac{N_{\on}}{d_{\on}}\right)^{4/5}\right\}\max\{N^*,\text{poly}(d, H,c_{\off}(\gX_{\off}), \log 1/\delta)\}$$ episodes of online exploration, for an overall burn-in cost of
    $$N_{\off} + N_{\on} \geq \max\left\{\frac{\alpha_{\on}^4}{d_{\on}^4},\;\;\frac{\alpha_{\off}^4}{c_{\off}(\gX_{\off})^4}\right\}\max\{N^*,\text{poly}(d, H,c_{\off}(\gX_{\off}), \log 1/\delta)\}$$
    episodes, where $\alpha_{\off} = \frac{N_{\off}}{N_{\off} + N_{\on}}$ and $\alpha_{\on} = \frac{N_{\on}}{N_{\off} + N_{\on}}$. 
    
    Note that the more even the proportion of offline to online samples, the smaller $\alpha_{\off}, \alpha_{\on}$ are. In fact, as $\alpha_{\off}^4, \alpha_{\on}^4 \in [0.0625,1]$, this term contributes no more than a constant factor that is no greater than $1$ to the final sample complexity.

    We then have that 
    $$V_1^*(s)-V_1^{\widehat{\pi}}(s) \lesssim \inf_{\gX_{\off}, \gX_{\on}} \left(\sqrt{\frac{c_{\off}(\gX_{\off})dH^4}{N_{\off}}} + \sqrt{\frac{d_{\on}dH^4}{N_{\on}}}\right)$$
    with probability at least $1-\delta$, when $N \geq \max\left\{\frac{\alpha_{\on}^{4}}{d_{\on}^{4}},\;\;\frac{\alpha_{\off}^{4}}{c_{\off}(\gX_{\off})^{4}}\right\}\max\{N^*,\text{poly}(d, H,c_{\off}(\gX_{\off}), \log 1/\delta)\}$.
\end{proof}

\begin{lem}[Second Error Bound for RAPPEL, Algorithm \ref{alg:rappel}]
        \label{lem:hybridoffline-H3}
        For every $\delta \in (0,1)$ and any partition $\gX_{\off}, \gX_{\on}$, with probability at least $1-\delta$, RAPPEL achieves
        $$V_1^*(s)-V_1^{\widehat{\pi}}(s) \lesssim \sqrt{\frac{c_{\off}(\gX_{\off})^2dH^3}{N_{\off}\alpha_{\off}}} + \sqrt{\frac{d_{\on}^2dH^3}{N_{\on}\alpha_{\on}}}, \text{ where }$$
        $N \geq \max\left\{{\alpha_{\on}^{4}}{d_{\on}^{-4}},{\alpha_{\off}^{4}}{c_{\off}(\gX_{\off})^{-4}}\right\}\max\{N^*,\text{poly}(d, H,c_{\off}(\gX_{\off}), \log 1/\delta)\}$, we define the quantities
        $\alpha_{\off} = \frac{N_{\off}}{N}$, $\alpha_{\on} = \frac{N_{\on}}{N}$, and the minimal samples for coverage is
        $$N^* = \min _N C \cdot N \text { s.t. } \inf _{\boldsymbol{\Lambda} \in \boldsymbol{\Omega}} \max _{\boldsymbol{\phi} \in \Phi} \boldsymbol{\phi}^{\top}\left(N(\boldsymbol{\Lambda}+\bar{\lambda} I)+\boldsymbol{\Lambda}_{\mathrm{off}}\right)^{-1} \boldsymbol{\phi} \leq {\tilde{O}(\max\{d_{\on}/N_{\on}, c_{\off}(\gX_{\off})/N_{\off}\})}.$$
\end{lem}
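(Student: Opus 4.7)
The plan is to refine the argument of Lemma \ref{lem:hybridoffline-H4} so as to shave one power of $H$, at the cost of squaring the partial concentrability coefficients and picking up the $\alpha_\star$ factors. Starting from the instance-dependent bound of Lemma \ref{lem:hybridoffline-instance} and splitting by partition gives
$V_1^*(s)-V_1^{\widehat{\pi}}(s)\lesssim \sqrt{d}\sum_{h=1}^{H}\bigl(\max_{\phi\in\Phi_{\off}}\|\phi\|_{\Sigma_h^{*-1}}+\max_{\phi\in\Phi_{\on}}\|\phi\|_{\Sigma_h^{*-1}}\bigr),$
so it suffices to control $\sum_h \max_{\phi\in\Phi_\star}\|\phi\|_{\Sigma_h^{*-1}}$ for each $\star\in\{\off,\on\}$. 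Lemma \ref{lem:hybridoffline-H4} merely used $\Sigma_h^{*-1}\preceq H^2\Lambda_h^{-1}$ uniformly, which costs an extra $H$ upon summation; instead, I would directly lower bound $\lambda_{\min}(\Sigma_h^*\lvert_{\Phi_\star})$ via a truncation of the variance weights, targeting the bound $\min_{\phi\in\Phi_\star}\phi^{\top}\Sigma_h^*\phi\gtrsim \text{cov}_\star^2/(NH^2)$ indicated in the proof sketch after Theorem \ref{thm:hybridoffline}.

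Write $\text{cov}_\off\asymp N_\off/c_\off(\gX_\off)$ and $\text{cov}_\on\asymp N_\on/d_\on$ for the subspace coverage provided respectively by offline matrix Chernoff and by OPTCOV at tolerance $\tau\leq\tilde O(\max\{d_\on/N_\on,c_\off(\gX_\off)/N_\off\})$, as established through Lemma \ref{lem:cov-bound} and Lemma \ref{lem:cov-linear} during the proof of Lemma \ref{lem:hybridoffline-H4}. Define the per-horizon variance budget $S_h:=\sum_{\tau\in\mathcal{D}}[\mathbb{V}_h V_{h+1}^*](s_h^\tau,a_h^\tau)$ and the bad-trajectory set $\mathcal{E}_h(T_h):=\{\tau:[\mathbb{V}_h V_{h+1}^*](s_h^\tau,a_h^\tau)\geq T_h\}$, so Markov gives $|\mathcal{E}_h(T_h)|\leq S_h/T_h$. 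Choosing $T_h = 2 S_h/\text{cov}_\star$ (capped at $H^2$) ensures $|\mathcal{E}_h(T_h)|\leq \text{cov}_\star/2$, and a Weyl-type perturbation bound -- each removed rank-one term has operator norm at most $1$ -- preserves at least half of the minimum eigenvalue on $\Phi_\star$. Since each surviving summand satisfies $\phi\phi^{\top}/[\mathbb{V}_h V_{h+1}^*]\succeq \phi\phi^{\top}/T_h$, we get $\lambda_{\min}(\Sigma_h^*\lvert_{\Phi_\star})\gtrsim \text{cov}_\star/T_h\gtrsim \text{cov}_\star^2/(2S_h)$ and hence $\max_{\phi\in\Phi_\star}\|\phi\|_{\Sigma_h^{*-1}}^2\lesssim S_h/\text{cov}_\star^2$.

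Summing over $h$ and invoking the total variance lemma (Lemma C.5 of \cite{jin2018q}) together with its empirical concentration (Lemma \ref{lem:total-variance-concentration}), one obtains $\sum_{h=1}^H S_h\lesssim NH^2$ with probability at least $1-\delta$. Cauchy-Schwarz then yields
$\sum_{h=1}^{H}\max_{\phi\in\Phi_\star}\|\phi\|_{\Sigma_h^{*-1}}\leq \sqrt{H\textstyle\sum_h\max_{\phi\in\Phi_\star}\|\phi\|_{\Sigma_h^{*-1}}^2}\lesssim \sqrt{NH^3}\,/\,\text{cov}_\star.$
Substituting $\text{cov}_\off$, $\text{cov}_\on$ and using $N\alpha_\star=N_\star$, after multiplying by $\sqrt{d}$, reproduces the two summands in the statement. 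The burn-in requirement on $N$ is inherited verbatim from Lemma \ref{lem:hybridoffline-H4} via Lemma \ref{lem:optcov-correct}.

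The main obstacle I anticipate is the coverage-retention step after trajectory pruning: a crude Weyl bound only guarantees $\lambda_{\min}(\sum_{\tau\notin\mathcal{E}_h}\phi\phi^{\top}\lvert_{\Phi_\star})\geq \lambda_{\min}(\Lambda_h\lvert_{\Phi_\star})-|\mathcal{E}_h|$, and one must ensure this remains $\gtrsim \text{cov}_\star$ for every $h\in[H]$ simultaneously and for every data-dependent partition considered. This is precisely what Lemma \ref{lem:total-variance-concentration} is designed to deliver: a high-probability uniform bound on $\sum_{h,\tau}[\mathbb{V}_h V_{h+1}^*]$ that keeps each $T_h$ small enough that $|\mathcal{E}_h(T_h)|\ll\text{cov}_\star$, so the OPTCOV/Chernoff coverage survives the pruning. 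The only other subtlety is the edge case $T_h = H^2$, where no trajectory is excluded and the naive bound $\|\phi\|_{\Sigma_h^{*-1}}^2\leq H^2/\text{cov}_\star$ suffices; since $\text{cov}_\star\leq N$, this contribution is dominated by the $NH^2/\text{cov}_\star^2$ term after summation over $h$.
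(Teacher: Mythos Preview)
Your approach is essentially the paper's: the same truncation of high-variance trajectories, the same choice of threshold so that the number of removed rank-one terms is a constant fraction of the coverage, and the same use of the total variance lemma plus Cauchy--Schwarz to collapse the $H$-sum; the paper simply packages your inline truncation argument into Lemma~\ref{lem:total-variance-concentration} and applies it over all of $\Phi$ at once (with $\gamma=\max\{d_{\on}/N_{\on},c_{\off}(\gX_{\off})/N_{\off}\}$) rather than splitting into $\Phi_{\off}$ and $\Phi_{\on}$.

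One clarification: you mischaracterize Lemma~\ref{lem:total-variance-concentration} twice, calling it an ``empirical concentration'' result and later saying it ``delivers a high-probability uniform bound on $\sum_{h,\tau}[\mathbb{V}_h V_{h+1}^*]$.'' It is neither --- it is the purely deterministic truncation/eigenvalue lemma whose content you already reproduced in your second paragraph. The bound $\sum_h S_h\lesssim NH^2$ comes from Lemma~C.5 of \cite{jin2018q} alone (this is what the paper invokes as $\bar\sigma\lesssim H+H^2/N$), not from Lemma~\ref{lem:total-variance-concentration}. Your argument is fine once you drop the redundant appeal to that lemma.
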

\begin{proof}

    First, we set up some preliminaries. Following the same argument as the proof of Lemma \ref{lem:hybridoffline-H4}, we can establish that, for arbitrary partition $\gX = \gX_{\on}\cup\gX_{\off}$, we have
    $$c_{\on}(\gX_{\on}) \leq d_{\on},$$
    and running OPTCOV with tolerance $\tilde{O}(\max\{d_{\on}/N_{\on}, c_{\off}(\gX_{\off})/N_{\off}\}),$
    yields:
    $$\max_{\phi_h \in \Phi} \phi_h^{\top}\Lambda_{h}^{-1}\phi_h 
        \lesssim \max\left\{c_{\off}(\gX_{\off})/N_{\off}, d_{\on}/N_{\on}\right\}.$$
    This incurs
    $$\max\left\{\left(\frac{N_{\off}}{c_{\off}(\gX_{\off})}\right)^{4/5}, \;\; \left(\frac{N_{\on}}{d_{\on}}\right)^{4/5}\right\}\max\{N^*,\text{poly}(d, H,c_{\off}(\gX_{\off}), \log 1/\delta)\}$$ episodes of online exploration, for an overall burn-in cost of
    $$N_{\off} + N_{\on} \geq \max\left\{\frac{\alpha_{\on}^4}{d_{\on}^4},\;\;\frac{\alpha_{\off}^4}{c_{\off}(\gX_{\off})^4}\right\}\max\{N^*,\text{poly}(d, H,c_{\off}(\gX_{\off}), \log 1/\delta)\}$$
    episodes.
    
    
    To tighten the horizon dependence even further from the result of Lemma \ref{lem:hybridoffline-H4}, we turn to the total variance lemma. 
    i.e. Lemma C.5 in \cite{jin2018q}, indicating that
    $$\frac{1}{NH}\sum_{\tau \in \mathcal{D}}\sum_{h=1}^H \left[\mathbb{V}_h V_{h+1}^*\right]\left(s_h^\tau, a_h^\tau\right) \lesssim \tilde{O}\left(H+\frac{H^2}{N}\right).$$
    Then, we directly apply Lemma~\ref{lem:total-variance-concentration} with $\gamma = \max\left\{d_{\on}/N_{\on},c_{\off}(\gX_{\off})/N_{\off}\right\}$ and $\bar{\sigma} = H + H^2/N$, we will then obtain that

     \begin{align}
         \sum_{h=1}^{H}\max_{\phi_h\in\Phi}\sqrt{\phi_h^{\top}{\Sigma_h^{\star}}^{-1}\phi_h}
         & \leq \bigg(\frac{d_{\on}}{N_{\on}}+\frac{c_{\off}(\gX_{\off})}{N_{\off}}\bigg)H\sqrt{N\bigg(H+\frac{H^2}{N}\bigg)}\nonumber\\
         &\leq \bigg(\frac{d_{\on}}{N_{\on}}+\frac{c_{\off}(\gX_{\off})}{N_{\off}}\bigg)\sqrt{NH^3 + H^4}\nonumber\\
            &\leq \sqrt{\frac{c_{\off}(\gX_{\off})^2H^3}{N_{\off}\alpha_{\off}} + \frac{c_{\off}(\gX_{\off})^2 H^4}{N_{\off}^2}} + \sqrt{\frac{d_{\on}^2H^3}{N_{\on}\alpha_{\on}} + \frac{d_{\on}^2 H^4}{N_{\on}^2}} \\
     &\lesssim \sqrt{\frac{c_{\off}(\gX_{\off})^2H^3}{N_{\off}\alpha_{\off}}} + \sqrt{\frac{d_{\on}^2H^3}{N_{\on}\alpha_{\on}}},
     \end{align}
     which leads to our final result:
     $$V_1^*(s)-V_1^{\widehat{\pi}}(s) \lesssim \inf_{\gX_{\off}, \gX_{\on}} \left(\sqrt{\frac{c_{\off}(\gX_{\off})^2dH^3}{N_{\off}\alpha_{\off}}} + \sqrt{\frac{d_{\on}^2dH^3}{N_{\on}\alpha_{\on}}}\right),$$
     where $\alpha_{\off} = N_{\off}/N$ and $\alpha_{\on} = N_{\on}/N$.
\end{proof}
\section{Proof of Corollary~\ref{cor:rappel-tabular}}
\label{app:proof-cor-rappel-tab}
\begin{proof}
    In tabular case, we set $\phi(s,a)={\bm 1}_{s,a}$ and $d=|\mathcal{S}|\cdot |\mathcal{A}|$. Let $N_h(s,a)$ be the number of visits to a specific state-action pair $(s,a,h)$. As the exploration algorithm OPTCOV ensures that 
    $$\max_{s,a,h} \frac{1}{N_h(s,a)}\leq \max\left(\frac{d_{\on}}{N_{\on}},\frac{c_{\off}(\gX_{\off})}{N_{\off}}\right),$$
    we bound the error in the following way follows from Lemma~\ref{lem:hybridoffline-instance},
    \begin{align}
      V_1^*(s)-V_1^{\widehat{\pi}}(s) & \lesssim \sqrt{d}\sum_{h=1}^H \E_{\pi^*}||\phi(s_h, a_h)||_{(\Sigma_{\off,h}^{*}+ \Sigma_{\on,h}^*)^{-1}}\nonumber\\
      & \leq \sqrt{|\mathcal{S}||\mathcal{A}|}\sum_{h=1}^{H}\sum_{s,a}d_{h}^{\star}(s,a)\sqrt{\frac{\left[\mathbb{V}_h V_{h+1}^*\right]\left(s, a\right)}{N_h(s,a)}},\nonumber
    \end{align}
    where the last inequality follows from the fact that ${\Sigma_h^{\star}}=\text{diag}\big(N_h(s,a)/\left[\mathbb{V}_h V_{h+1}^*\right](s,a)\big)_{s\in\mathcal{S},a\in\mathcal{A}}$. We will then decompose the state-action space into $\gX_{\off}$ and $\gX_{\on}$, and bound the two parts seperately based on the tolerance level of OPTCOV,
    \begin{align}
       V_1^*(s)-V_1^{\widehat{\pi}}(s) & \lesssim\sqrt{|\mathcal{S}||\mathcal{A}|}\sum_{h=1}^{H}\sum_{s,a}d_{h}^{\star}(s,a)\sqrt{\frac{\left[\mathbb{V}_h V_{h+1}^*\right]\left(s, a\right)}{N_h(s,a)}}\mathbbm{1}_{\gX_{\off}} \nonumber\\
      &\qquad + \sqrt{|\mathcal{S}||\mathcal{A}|}\sum_{h=1}^{H}\sum_{s,a}d_{h}^{\star}(s,a)\sqrt{\frac{\left[\mathbb{V}_h V_{h+1}^*\right]\left(s, a\right)}{N_h(s,a)}}\mathbbm{1}_{\gX_{\on}}\nonumber\\
      &\leq \sqrt{\frac{|\mathcal{S}||\mathcal{A}|c_{\off}(\gX_{\off})}{N_{\off}}}\sum_{h=1}^{H}\sum_{s,a}d_{h}^{\star}(s,a)\sqrt{\left[\mathbb{V}_h V_{h+1}^*\right]\left(s, a\right)}\mathbbm{1}_{\gX_{\off}}\nonumber\\
      &\qquad + \sqrt{\frac{|\mathcal{S}||\mathcal{A}|d_{\on}}{N_{\on}}}\sum_{h=1}^{H}\sum_{s,a}d_{h}^{\star}(s,a)\sqrt{\left[\mathbb{V}_h V_{h+1}^*\right]\left(s, a\right)}\mathbbm{1}_{\gX_{\on}}\nonumber\\
      &\leq \sqrt{|\mathcal{S}||\mathcal{A}|}\left(\sqrt{\frac{c_{\off}(\gX_{\off})}{N_{\off}}} + \sqrt{\frac{d_{\on}}{N_{\on}}}\right) \sum_{h=1}^{H}\sum_{s,a}\sqrt{d_{h}^{\star}(s,a)\left[\mathbb{V}_h V_{h+1}^*\right]\left(s, a\right)}.\nonumber
\end{align}
As the optimal policy $\pi^{\star}$ executes a deterministic action $\pi^{\star}(s)$ for any state $s$, the inequality can be further bounded as
\begin{align}
      V_1^*(s)-V_1^{\widehat{\pi}}(s)  &  \lesssim \sqrt{|\mathcal{S}||\mathcal{A}|}\left(\sqrt{\frac{c_{\off}(\gX_{\off})}{N_{\off}}} + \sqrt{\frac{d_{\on}}{N_{\on}}}\right) \sum_{h=1}^{H}\sum_{s}\sqrt{d_{h}^{\star}(s,\pi^{\star}(s))\left[\mathbb{V}_h V_{h+1}^*\right]\left(s, \pi^{\star}(s)\right)}\nonumber\\
      &\leq \sqrt{H|\mathcal{S}|^2|\mathcal{A}|}\left(\sqrt{\frac{c_{\off}(\gX_{\off})}{N_{\off}}} + \sqrt{\frac{d_{\on}}{N_{\on}}}\right) \sqrt{\sum_{h=1}^{H}\sum_{s}d_{h}^{\star}(s,\pi^{\star}(s))\left[\mathbb{V}_h V_{h+1}^*\right]\left(s, \pi^{\star}(s)\right)}\nonumber\\
      &\leq \sqrt{H|\mathcal{S}|^2|\mathcal{A}|}\left(\sqrt{\frac{c_{\off}(\gX_{\off})}{N_{\off}}} + \sqrt{\frac{d_{\on}}{N_{\on}}}\right)\sqrt{\sum_{h=1}^{H}\mathbb{E}_{(s,a)\sim d_{\pi^{\star}}}\left[\mathbb{V}_h V_{h+1}^*\right]\left(s, a\right)}\nonumber\\
      &\leq\sqrt{H^3|\mathcal{S}|^2|\mathcal{A}|}\left(\sqrt{\frac{c_{\off}(\gX_{\off})}{N_{\off}}} + \sqrt{\frac{d_{\on}}{N_{\on}}}\right),
    \end{align}
    where the last inequality follows from the proof of Lemma C.5. in~\cite{jin2018q}.
\end{proof}
\section{On concentrability and coverability}
\label{app:coverability}
    
{\bf Lemma \ref{lem:cov-linear}.}{\it \quad 
    For any partition $\gX_{\off}, \gX_{\on}$, we have that $c_{\on}(\gX_{\on}) \leq d_{\on}$. Similarly, there exists a partition such that $c_{\off}(\gX_{\off}) = O(d)$. 
}

\begin{proof}
    This proof follows a similar strategy to that of Lemma B.10 in \cite{wagenmaker2023instancedependent}, except that we exploit the projections onto $d_{\on}$ to get a bound that depends on $d_{\on} \leq d$, instead of $d$. We wish to bound 
    $$c_{\on}(\gX_{\on}) = \inf_\pi \max_h\frac{1}{\lambda_{d_{\on}}(\E_{d^\pi_h}[(\gP_{\on}\phi_h)(\gP_{\on}\phi_h)^{\top}])}.$$

    $\gP_{\on} \in \R^{d\times d}$ has rank $d_{\on} \leq d$, so we can decompose this with the thin SVD into $\gP_{\on} = U_{\on}U_{\on}^{\top}$, where $U_{\on} \in \R^{d \times d_{\on}}$. It then holds that
    $$\lambda_{d_{\on}}(\E_{d^\pi_h}[(\gP_{\on}\phi_h)(\gP_{\on}\phi_h)^{\top}]) = \lambda_{\min}(\E_{d^\pi_h}[(U_{\on}^{\top}\phi_h)(U_{\on}^{\top}\phi_h)^{\top}]),$$
    and from Lemma \ref{lem:hole-digging} that 
    $$c_{\on}(\gX_{\on}) = \inf_\pi \sup_{v_h \in \Phi_{\on}} v_h^{\top}U_{\on}E_{d^\pi_h}[(U_{\on}^{\top}\phi_h)(U_{\on}^{\top}\phi_h)^{\top}]^{-1} U_{\on}^{\top}v_h.$$


    Apply Jensen's inequality to find that for any $v_h \in \Phi_{\on}$,
     $$v_h^{\top}U_{\on}E_{d^\pi_h}[(U_{\on}^{\top}\phi_h)(U_{\on}^{\top}\phi_h)^{\top}] U_{\on}^{\top}v_h \geq v_h^{\top}U_{\on}\E_{\phi_h \sim d^\pi_h}[U_{\on}^{\top}\phi_h]\E_{\phi_h \sim d^\pi_h}[U_{\on}^{\top}\phi_h]^{\top}U_{\on}^{\top}v_h.$$

    Then, we can bound
    \begin{align*}
        c_{\on}(\gX_{\on}) 
        &= \inf_\pi \sup_{v_h \in \Phi_{\on}} v_h^{\top}U_{\on}E_{d^\pi_h}[(U_{\on}^{\top}\phi_h)(U_{\on}^{\top}\phi_h)^{\top}]^{-1} U_{\on}^{\top}v_h \\
        &\leq \inf_{\rho} \sup_{v_h \in \Phi_{\on}} v_h^{\top}U_{\on} \left(\E_{\pi \sim \rho} \left[\E_{\phi_h \sim d^\pi_h}[U_{\on}^{\top}\phi_h]E_{\phi_h \sim d^\pi_h}[U_{\on}^{\top}\phi_h^{\top}]\right]\right)^{-1}U_{\on}^{\top}v_h.
    \end{align*}
    By Kiefer-Wolfowitz \citep{lattimore2020learning}, this is bounded by $d_{\on}$.

    Similarly, 
    \begin{align*}
        \inf_{\gX_{\off}, \gX_{\on}} c_{\off}(\gX_{\off})
        &= \inf_{\gX_{\off}, \gX_{\on}} \max_h\frac{1}{\lambda_{d_{\off}}(\E_{\mu_h}[(\gP_{\off}\phi_h)(\gP_{\off}\phi_h)^{\top}])} \\
        &= \inf_{\gX_{\off}, \gX_{\on}} \max_h\frac{1}{\lambda_{\min}(\E_{\mu_h}[(U^{\top}_{\off}\phi_h)(U^{\top}_{\off}\phi_h)^{\top}])} \\
        &\leq O(d).
    \end{align*}
    where the upper bound is achieved when, for instance, we choose $\gX_{\off}$ such that $\Phi_{\off} = \text{Span}\left((v_{h,1},...,v_{h,k_h})_{h\in[H]}\right)$, where $v_{h,i}$ is the $i$-th largest eigenvector of $\E_{\mu}[\phi_h\phi_h^{\top}]\approx \frac{1}{N_{\off}}\sum_{\tau \in \gD_{\off}} \phi_h(s_h^\tau, a_h^\tau)\phi_h(s_h^\tau, a_h^\tau)^{\top}$, and $v_{h,k_h}$ is the eigenvector corresponding to the largest eigenvalue $\lambda_{h, k_h} \geq \Omega(1/k_h)$. The largest eigenvalue $\lambda_{h,1}$ is always $\Omega(1/d)$ for non-null features, so there always exists such a partition where $d_{\off}$ is at least 1.

\end{proof}

    Informally, one can choose the offline partition to be the span of the large eigenvectors of the covariance matrix, so the smallest eigenvalue of the projected covariance matrix, i.e. the partial all policy concentrability coefficient, is no larger than the dimension of the partition.

\begin{lem}[Maximum Eigenvalue Bound with OPTCOV]
    \label{lem:cov-bound}
    On any partition $\gX_{\off}, \gX_{\on}$, if we run OPTCOV with tolerance $\tilde{O}(\max\{d_{\on}/N_{\on}, c_{\off}(\gX_{\off})/N_{\off}\}),$
    on this partition we also have that
    $$\max_{\phi_h \in \Phi} \phi_h^{\top}\Lambda_{h}^{-1}\phi_h 
        \lesssim \max\left\{c_{\off}(\gX_{\off})/N_{\off}, d_{\on}/N_{\on}\right\}.$$
\end{lem}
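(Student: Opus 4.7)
The plan is to directly leverage OPTCOV's termination guarantee together with the partition-based coverability bound in Lemma~\ref{lem:cov-linear}. Recall that OPTCOV is specifically designed so that when it terminates, the combined covariance satisfies $\max_{\phi_h \in \Phi} \phi_h^{\top}\Lambda_h^{-1}\phi_h \leq \tau$, provided $\tau$ is actually achievable with $N_{\on}$ online episodes (i.e., $N^*(\tau) \leq N_{\on}$). Thus the conclusion follows immediately once we verify that the chosen tolerance $\tau = \tilde{O}(\max\{d_{\on}/N_{\on},\, c_{\off}(\gX_{\off})/N_{\off}\})$ is indeed within reach.

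The first step is to decompose the feature space along the partition. For any $\phi_h \in \Phi$, write $\phi_h = \gP_{\off}\phi_h + \gP_{\on}\phi_h$, and use the inequality
\begin{equation*}
    \phi_h^{\top}\Lambda_h^{-1}\phi_h \;\lesssim\; (\gP_{\off}\phi_h)^{\top}\Lambda_h^{-1}(\gP_{\off}\phi_h) + (\gP_{\on}\phi_h)^{\top}\Lambda_h^{-1}(\gP_{\on}\phi_h),
\end{equation*}
so it suffices to bound each piece. For the offline piece, I use $\Lambda_h \succeq \Lambda_{\off,h}$ together with a Matrix Chernoff bound on the i.i.d.\ offline trajectories, combined with the definition~(\ref{eq:c_off}) of $c_{\off}(\gX_{\off})$: the $d_{\off}$-th largest eigenvalue of the population projected covariance $\E_{\mu_h}[(\gP_{\off}\phi_h)(\gP_{\off}\phi_h)^{\top}]$ is at least $1/c_{\off}(\gX_{\off})$, so after $N_{\off}$ samples we have $\max_{\phi_h\in \Phi_{\off}} \phi_h^{\top}\Lambda_{\off,h}^{-1}\phi_h \lesssim c_{\off}(\gX_{\off})/N_{\off}$ with high probability, as long as $N_{\off}$ exceeds a poly-logarithmic burn-in.

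For the online piece, the key input is Lemma~\ref{lem:cov-linear}, which gives $c_{\on}(\gX_{\on}) \leq d_{\on}$. This means there exists an exploration policy whose induced projected covariance on $\Phi_{\on}$ has minimum eigenvalue at least $1/d_{\on}$. Running such a policy for $N_{\on}$ episodes therefore achieves $\max_{\phi_h \in \Phi_{\on}} \phi_h^{\top}(\cdot)^{-1}\phi_h \lesssim d_{\on}/N_{\on}$ in the online-only covariance. Since OPTCOV is guaranteed, within polynomial burn-in, to match the best achievable tolerance for any feasible covariate design (this is its defining property in \cite{wagenmaker2023instancedependent}), the minimum samples $N^*(\tau)$ required to hit $\tau = \tilde{O}(\max\{d_{\on}/N_{\on}, c_{\off}(\gX_{\off})/N_{\off}\})$ is at most $N_{\on}$ up to the stated burn-in, and OPTCOV will terminate with this guarantee.

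The main obstacle is the online piece: translating the population-level statement from Lemma~\ref{lem:cov-linear} (existence of a policy with good expected coverage) into a finite-sample guarantee that OPTCOV can match. This requires care because OPTCOV is a no-regret Frank--Wolfe style procedure over nonstationary policies rather than a single fixed policy, so the effective achievable tolerance is the \emph{infimum} over the feasible set of covariate designs, which is precisely what Lemma~\ref{lem:cov-linear} upper-bounds. Once that piece is in place, combining the offline Matrix Chernoff bound with OPTCOV's termination guarantee yields the claimed $\max_{\phi_h \in \Phi}\phi_h^{\top}\Lambda_h^{-1}\phi_h \lesssim \max\{c_{\off}(\gX_{\off})/N_{\off},\, d_{\on}/N_{\on}\}$.
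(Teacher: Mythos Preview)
Your approach is essentially the paper's: invoke Lemma~\ref{lem:cov-linear} for $c_{\on}(\gX_{\on}) \leq d_{\on}$, apply Matrix Chernoff to the offline covariance for the $\Phi_{\off}$ piece, and then appeal to OPTCOV's termination guarantee to certify that the chosen tolerance is feasible.

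One small gap worth flagging: the decomposition $\phi_h = \gP_{\off}\phi_h + \gP_{\on}\phi_h$ is not valid in general, because $\Phi_{\off}$ and $\Phi_{\on}$ are defined as the spans of the feature vectors on each half of the partition and need not be orthogonal complements (they may overlap, or fail to span $\Phi$). The paper avoids this by noting that $\max_{\phi_h \in \Phi}$ ranges over the raw feature vectors $\phi(s,a)$, and since $\gX_{\off} \cup \gX_{\on} = [H]\times\gS\times\gA$, every such vector already lies in $\Phi_{\off}$ or in $\Phi_{\on}$. Hence one can write $\max_{\phi_h \in \Phi} = \max\{\max_{\phi_h \in \Phi_{\off}},\,\max_{\phi_h \in \Phi_{\on}}\}$ directly, with no projection needed; your offline Matrix Chernoff bound controls the first term (using $\Lambda_h \succeq \Lambda_{\off,h}$), and OPTCOV with the stated tolerance controls the second. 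Swapping in this observation for your projection step makes the sketch go through.
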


\begin{proof}
    By Lemma \ref{lem:cov-linear}, for any partition, we have that
    $$c_{\on}(\gX_{\on}) = \inf_\pi \max_{\phi_h \in \Phi_{\on}} \phi_h^{\top}\E_{\bar\phi_h \sim d_h^\pi}[\bar\phi_h\bar\phi_h^{\top}]^{-1}\phi_h \leq d_{\on},$$

    Applying Matrix Chernoff, we have that with probability at least $1-\delta$, 
    $$\max_{\phi_h \in \Phi_{\off}} \phi_h^{\top}\Lambda_{h,\off}^{-1}\phi_h \leq \max_{\phi_h \in \Phi_{\off}} \phi_h^{\top}\E_{\bar\phi_h \sim \mu_h}[\bar\phi_h\bar\phi_h^{\top}+N_{\off}^{-1}\mathbf{I}]^{-1}\phi_h N^{-1}_{\off} \left(1-\sqrt{\frac{2}{N_{\text {off }}} \log \left(\frac{4d}{\delta}\right)}\right)^{-1},$$
    and similarly for $c_{\on}(\gX_{\on})$ we also have that  
    $$\inf_\pi \max_{\phi_h \in \Phi_{\on}} \phi_h^{\top}\Lambda_{h,\pi}^{-1}\phi_h \leq \inf_\pi \max_{\phi_h \in \Phi_{\on}} \phi_h^{\top}\E_{\bar\phi_h \sim \mu_h}[\bar\phi_h\bar\phi_h^{\top}]^{-1}\phi_h N^{-1}_{\on} \left(1-\sqrt{\frac{2}{N_{\on}} \log \left(\frac{4d}{\delta}\right)}\right)^{-1}.$$

    As $\Lambda_{h,\off} + \Lambda_{h,\on} = \Lambda_h$, we have
    \begin{align*}
        \max_{\phi_h \in \Phi} \phi_h^{\top}\Lambda_{h}^{-1}\phi_h 
        &= \max\left\{\max_{\phi_h \in \Phi_{\off}} \phi_h^{\top}\Lambda_{h}^{-1}\phi_h, \max_{\phi_h \in \Phi_{\on}} \phi_h^{\top}\Lambda_{h}^{-1}\phi_h \right\}\\
        &\lesssim \max\left\{c_{\off}(\gX_{\off})/N_{\off}, \max_{\phi_h \in \Phi_{\on}} \phi_h^{\top}\Lambda_{h}^{-1}\phi_h\right\}\nonumber,
    \end{align*} 
    where the last step follows from the choice of partition. So it suffices to run OPTCOV with tolerance $\tilde{O}(\max\{d_{\on}/N_{\on}, c_{\off}(\gX_{\off})/N_{\off}\}),$

    to find that there exists at least one partition such that
    $$\max_{\phi_h \in \Phi} \phi_h^{\top}\Lambda_{h}^{-1}\phi_h 
        \lesssim \max\left\{c_{\off}(\gX_{\off})/N_{\off}, d_{\on}/N_{\on}\right\}.$$
\end{proof}

\begin{lem}[Coverability Coefficient Is Bounded In Tabular MDPs]
    \label{lem:cov-tabular}
    If the underlying MDP is tabular, for any partition $\gX_{\off}, \gX_{\on}$, we have that $c_{\on}(\gX_{\on}) \leq d_{\on}$.
\end{lem}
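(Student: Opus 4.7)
The plan is to observe that Lemma \ref{lem:cov-tabular} is essentially an immediate corollary of Lemma \ref{lem:cov-linear}, which was already proved for the general linear MDP setting. So the main task is to verify that the tabular setting fits cleanly into the linear MDP framework and that the dimension parameter $d_{\on}$ is interpreted consistently.

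First, I would recall that any tabular MDP with finite $\gS$ and $\gA$ is a linear MDP in the sense of Assumption \ref{aspt:linear-mdp} under the one-hot feature map $\phi(s,a)=\mathbf{1}_{(s,a)}\in\R^{|\gS||\gA|}$, with ambient dimension $d=|\gS||\gA|$. Indeed, one can take $\mu_h(\cdot)=\bigl(\prob_h(\cdot\mid s,a)\bigr)_{(s,a)\in\gS\times\gA}$ and $\theta_h=\bigl(r_h(s,a)\bigr)_{(s,a)}$ to satisfy the required factorization, and the norm bounds $\|\phi\|\le 1$ and $\|\mu_h(\gS)\|,\|\theta_h\|\le\sqrt{d}$ are immediate. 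Under this feature map, the subspace $\Phi_{\on}=\mathrm{Span}(\phi(\gX_{\on,h}))_{h\in[H]}$ is exactly the coordinate subspace indexed by the state-action pairs in $\gX_{\on}$, and its dimension $d_{\on}$ equals the number of such distinct pairs (taking the union across $h$, or max over $h$ according to the convention in the excerpt).

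Next, I would invoke Lemma \ref{lem:cov-linear} directly: for any partition $\gX_{\off},\gX_{\on}$ of a linear MDP, $c_{\on}(\gX_{\on})\le d_{\on}$. Since the tabular MDP is a linear MDP with the one-hot features above, and both sides of the inequality are defined intrinsically in terms of the projection $\gP_{\on}$ onto $\Phi_{\on}$ and the occupancy measures $d_h^\pi$ of the MDP itself, the bound transfers verbatim. No further work is required.

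If a self-contained argument is preferred (say, to avoid chaining through Lemma \ref{lem:cov-linear}), the same Kiefer-Wolfowitz step can be carried out directly in the diagonal tabular setting: $\E_{d_h^\pi}[(\gP_{\on}\phi_h)(\gP_{\on}\phi_h)^{\top}]$ becomes a diagonal matrix with entries $d_h^\pi(s,a)$ on the coordinates in $\gX_{\on,h}$, and choosing the design policy to be the Kiefer-Wolfowitz mixture over optimal per-coordinate exploration policies (using the convexity of the set of achievable occupancy measures in an episodic MDP) yields $\lambda_{d_{\on}}\ge 1/d_{\on}$. The only subtle point is this convexity/mixture step; everything else is bookkeeping. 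I do not expect any real obstacle since this is a direct specialization of an already-proved result.
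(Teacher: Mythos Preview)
Your reduction to Lemma~\ref{lem:cov-linear} is correct: tabular MDPs are linear MDPs under the one-hot feature map, and Lemma~\ref{lem:cov-linear} already gives $c_{\on}(\gX_{\on})\le d_{\on}$ for any linear MDP and any partition, so the tabular statement is an immediate corollary. No gap.

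However, the paper does \emph{not} argue this way. Instead of chaining through Kiefer--Wolfowitz, it gives a self-contained tabular proof via densities: since the projected covariance $\E_{d_h^\pi}[(\gP_{\on}\phi_h)(\gP_{\on}\phi_h)^\top]$ is diagonal with entries $d_h^\pi(s,a)$ on the $\gX_{\on}$-coordinates, one has $c_{\on}(\gX_{\on})\le \min_\pi\max_{h,(s,a)\in\gX_{\on}}1/d_h^\pi(s,a)$. The paper then applies the coverability construction of \cite{xie2022role} (their Lemma~3), choosing the mixture policy with occupancy proportional to $\sup_{\pi'}d_h^{\pi'}(s,a)$, which yields the bound $\sum_{(s,a)\in\gX_{\on}}\sup_{\pi'}d_h^{\pi'}(s,a)/\sup_{\pi'}d_h^{\pi'}(s,a)\le d_{\on}$. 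Your route is shorter and eliminates the redundancy of proving the tabular case separately; the paper's route is more elementary (no G-optimal design machinery) and connects the result directly to the tabular coverability literature, which may be the intended pedagogical point of stating Lemma~\ref{lem:cov-tabular} separately at all.
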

\begin{proof}
    First, we write the concentrability coefficient in terms of densities.
    \begin{align*}
        c_{\on}(\gX_{\on}) 
        &= \min_\pi \max_h\frac{1}{\lambda_{d_{\on}}(\E_{d^\pi_h}[(\gP_{\on}\phi_h)(\gP_{\on}\phi_h)^{\top}])} \\
        &\leq \min_\pi \max_h\frac{\mathbbm{1}_{\gX_{on}}}{\min_{s,a}  d_h^\pi(s,a)\mathbbm{1}_{\gX_{on}}}\\
        &\leq \min_\pi \max_{h,s,a} \frac{\mathbbm{1}_{\gX_{on}}}{d_h^\pi(s,a)\mathbbm{1}_{\gX_{on}}}.
    \end{align*}

    By the same trick that \cite{xie2022role} use in their Lemma 3, 
    \begin{align*}
        \frac{\mathbbm{1}_{\gX_{\on}}}{d_h^{\pi}(s,a)\mathbbm{1}_{\gX_{\on}}}
        &\leq \frac{\mathbbm{1}_{\gX_{\on}}}{\sup_{\pi{''}} d_h^{\pi''}(s, a)\mathbbm{1}_{\gX_{\on}} / \sum_{s{'}, a{'}} \sup_{\pi{'} } d_h^{\pi'}\left(s', a'\right)\mathbbm{1}_{\gX_{\on}}}\\
        &\leq \frac{\sum_{s, a} \sup_{\pi} d_h^{\pi}\left(s, a\right)\mathbbm{1}_{\gX_{\on}}}{\sup_{\pi} d_h^{\pi}\left(s, a\right)\mathbbm{1}_{\gX_{\on}}} \\
        &\leq d_{\on}.
    \end{align*}
    
\end{proof}

\section{Proofs for Algorithm \ref{alg:hyrule}}

\label{app:hybridonline}



\subsection{Setup}

We consider the same state-action space splitting framework of \cite{tan2024natural}.  Let $\gX_{\on} \cup \gX_{\off} = [H]\times \gS \times \gA$. Then, their images under the feature map $\Phi_{\off} = \text{Span}(\phi(\gX_{\off,h}))_{h\in [H]} \subseteq \R^d$ and $\Phi_{\on} = \text{Span}(\phi(\gX_{\on,h}))_{h\in [H]}\subseteq \R^d$ are subspaces of $\gX$ with dimension $d_{\off}$ and $d_{\on}$, respectively. We denote $\gP_{\off}, \gP_{\on}$ as the orthogonal projection operators onto these subspaces respectively. The partial offline all-policy concentrability coefficient 
$$c_{\off}(\gX_{\off}) = \max_h\frac{1}{\lambda_{d_{\off}}(\E_{\mu_h}[(\gP_{\off}\phi_h)(\gP_{\off}\phi_h)^{\top}])},$$
is bounded by the inverse of the $d_{\off}$-th largest eigenvalue of the covariance matrix of the projected feature maps onto the offline partition, where $\lambda_k$ is the $k$-th largest eigenvalue. Write $\mathbbm{1}_{\gX_{\on}}$ as shorthand for $\mathbbm{1}((s,a,h) \in \gX_{\on})$, and similarly for $\mathbbm{1}_{\gX_{\off}}$.

Now, we work through the analysis of \cite{he2023nearly} to ensure that their result holds in our setting, where the regret decomposes into online part $\|\boldsymbol{\Sigma}_{t,h}^{-1/2}\phi_h(s_h^{(t)},a_h^{(t)})\mathbbm{1}_{\gX_{\on}}\|_2$ and offline part $\|\boldsymbol{\Sigma}_{t,h}^{-1/2}\phi_h(s_h^{(t)},a_h^{(t)})\mathbbm{1}_{\gX_{\off}}\|_2$ respectively, instead of $\|\boldsymbol{\Sigma}_{t,h}^{-1/2}\phi_h(s_h^{(t)},a_h^{(t)})\|_2.$

\subsection{High-probability events}
We define several ``high probability" events which are similar to those defined in \cite{he2023nearly}.
\begin{itemize}
  \item We define $\widetilde{w}_{t,h}$ as the solution of the weighted ridge regression problem for the squared value function
  \begin{align}
  \widetilde{w}_{t,h} = \mathbf{\Sigma}_{t, h}^{-1} \sum_{i=1}^{t-1} \bar{\sigma}_{i, h}^{-2} \boldsymbol{\phi}(s_h^{(i)}, a_h^{(i)}) V^2_{t, h+1}(s_{h+1}^{(i)}).
  \end{align}
  \item We define $\mathcal{E}$ as the event where the following inequalities hold for all $s, a, t, h \in \mathcal{S} \times \mathcal{A} \times[T] \times[H]$:

\begin{align}
\left|\widehat{\mathbf{w}}_{t, h}^{\top} \boldsymbol{\phi}(s, a)-\left[\mathbb{P}_h V_{t, h+1}\right](s, a)\right| &\leq \bar{\beta} \sqrt{\boldsymbol{\phi}(s, a)^{\top} \boldsymbol{\Sigma}_{t, h}^{-1} \boldsymbol{\phi}(s, a)}, \label{equ:coarse-1}\\
\left|\widetilde{\mathbf{w}}_{t, h}^{\top} \boldsymbol{\phi}(s, a)-\left[\mathbb{P}_h V_{t, h+1}^2\right](s, a)\right| &\leq \widetilde{\beta} \sqrt{\boldsymbol{\phi}(s, a)^{\top} \boldsymbol{\Sigma}_{t, h}^{-1} \boldsymbol{\phi}(s, a)}, \label{equ:coarse-2}\\
\left|\widecheck{\mathbf{w}}_{t, h}^{\top} \boldsymbol{\phi}(s, a)-\left[\mathbb{P}_h \widecheck{V}_{t, h+1}\right](s, a)\right| &\leq \bar{\beta} \sqrt{\boldsymbol{\phi}(s, a)^{\top} \boldsymbol{\Sigma}_{t, h}^{-1} \boldsymbol{\phi}(s, a)},\label{equ:coarse-3}
\end{align}
$$\widetilde{\beta}=O\left(H^2 \sqrt{d \lambda}+\sqrt{d^3 H^4 \log ^2(d H N /(\delta \lambda))}\right), \bar{\beta}=O\left(H \sqrt{d \lambda}+\sqrt{d^3 H^2 \log ^2(d H N /(\delta \lambda))}\right).$$
This is the ``coarse event'' as mentioned in their paper, where concentration holds for the value and squared value function with all three estimators.

\item We define $\widetilde{\mathcal{E}}_h$ as the event that for all episodes $t \in[T]$, stages $h \leq h^{\prime} \leq H$ and state-action pairs $(s, a) \in \mathcal{S} \times \mathcal{A}$, the weight vector $\widehat{\mathbf{w}}_{t, h}$ satisfies
\begin{align}\label{equ:weight-concentration}
  \left|\widehat{\mathbf{w}}_{t, h^{\prime}}^{\top} \boldsymbol{\phi}(s, a)-\left[\mathbb{P}_h V_{t, h^{\prime}+1}\right](s, a)\right| \leq \beta \sqrt{\boldsymbol{\phi}(s, a)^{\top} \boldsymbol{\Sigma}_{t, h^{\prime}}^{-1} \boldsymbol{\phi}(s, a)},
\end{align}
where 
$$\beta=O\left(H \sqrt{d \lambda}+\sqrt{d \log ^2(1+d N H /(\delta \lambda))}\right).$$ Furthermore,  let $\widetilde{\mathcal{E}}=\widetilde{\mathcal{E}}_1$ denotes the event that~(\ref{equ:weight-concentration}) holds for all stages $h \in[H]$. This is the fine event where concentration for $\bf{\hat{w}}$ is tighter than that required in~(\ref{equ:coarse-1}) to~(\ref{equ:coarse-3}).
\end{itemize}

Equipped with these definitions, we recall the following lemmas from~\cite{he2023nearly}:
\begin{lem}[Lemma B.1, \cite{he2023nearly}]
    $\mathcal{E} \text { holds with probability at least } 1-7 \delta$.
    \label{lem:he-B1-concentration}
\end{lem}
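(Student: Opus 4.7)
My plan is to prove each of the three concentration bounds (\ref{equ:coarse-1})--(\ref{equ:coarse-3}) that define $\mathcal{E}$ by self-normalized martingale concentration on the variance-weighted regression, and then take a union bound. I describe the argument for (\ref{equ:coarse-1}); the other two are structurally identical up to changes in the payoff range.

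Let $\mathbf{w}_{t,h}^{\star}$ satisfy $(\mathbf{w}_{t,h}^{\star})^\top \boldsymbol{\phi}(s,a) = [\mathbb{P}_h V_{t,h+1}](s,a)$, which exists with $\|\mathbf{w}_{t,h}^{\star}\|_2 \leq H\sqrt{d}$ by Assumption \ref{aspt:linear-mdp} and $V_{t,h+1}\in[0,H]$. A standard decomposition of the weighted ridge estimator gives
\begin{align*}
\widehat{\mathbf{w}}_{t,h}^{\top}\boldsymbol{\phi}(s,a) - [\mathbb{P}_h V_{t,h+1}](s,a)
= \boldsymbol{\phi}(s,a)^{\top}\boldsymbol{\Sigma}_{t,h}^{-1}\!\Big(\textstyle\sum_{i} \bar{\sigma}_{i,h}^{-2}\boldsymbol{\phi}_{i,h}\,\eta_i\Big)
- \lambda\,\boldsymbol{\phi}(s,a)^{\top}\boldsymbol{\Sigma}_{t,h}^{-1}\mathbf{w}_{t,h}^{\star},
\end{align*}
with $\eta_i := V_{t,h+1}(s_{h+1}^{(i)}) - [\mathbb{P}_h V_{t,h+1}](s_h^{(i)},a_h^{(i)})$. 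The bias is controlled by Cauchy--Schwarz and supplies the $H\sqrt{d\lambda}$ term in $\bar{\beta}$.

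For the stochastic term, since $V_{t,h+1}\in[0,H]$ and $\bar{\sigma}_{i,h}\geq\sqrt{H}$ (by the floor in Algorithm \ref{alg:hyrule}), each summand $\bar{\sigma}_{i,h}^{-1}\eta_i$ is bounded by $\sqrt{H}$ and is a martingale difference relative to the filtration built from the offline transitions followed by the online transitions; the hybrid initialization $\boldsymbol{\Sigma}_{0,h}=\boldsymbol{\Sigma}_{\off}+\lambda\mathbf{I}$ is harmless because the offline residuals are mean-zero conditional on the offline features by linearity of $\mathbb{P}_h$. For a \emph{fixed} $V_{t,h+1}$, the Abbasi-Yadkori--P\'al--Szepesv\'ari self-normalized Hoeffding inequality then bounds $\|\sum_i \bar{\sigma}_{i,h}^{-2}\boldsymbol{\phi}_{i,h}\eta_i\|_{\boldsymbol{\Sigma}_{t,h}^{-1}}$ by $O\bigl(\sqrt{H\log(\det\boldsymbol{\Sigma}_{t,h}/(\lambda^d\delta))}\bigr)$ with probability $1-\delta$.

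The main obstacle, and the source of the $\sqrt{d^3H^2}$ term in $\bar{\beta}$, is that $V_{t,h+1}$ is data-dependent. I would resolve this by the standard $\varepsilon$-net argument: the lazy-update schedule $\det(\boldsymbol{\Sigma}_{t,h})\geq 2\det(\boldsymbol{\Sigma}_{t_{\text{last}},h})$ ensures $V_{t,h+1}$ lies in a function class parameterized by $(\widehat{\mathbf{w}},\widecheck{\mathbf{w}},\boldsymbol{\Sigma})$ with parameters in bounded sets, admitting an $\varepsilon$-cover of size $(dNH/\lambda)^{O(d)}$ (mirroring Lemmas D.1--D.6 of \cite{he2023nearly}). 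Uniform concentration over the cover, plus Lipschitz control of $V$ in its parameters, yields (\ref{equ:coarse-1}) uniformly in $(s,a,t,h)$. The identical template with $V$ replaced by $V^2$ (range $[0,H^2]$, hence the extra $H$ in $\widetilde{\beta}$) gives (\ref{equ:coarse-2}), and with $V$ replaced by $\widecheck{V}$ gives (\ref{equ:coarse-3}). Finally, union-bounding over the three inequalities, over the $\varepsilon$-net, and over auxiliary events that bound the parameter norms of $\widehat{\mathbf{w}}_{t,h},\widetilde{\mathbf{w}}_{t,h},\widecheck{\mathbf{w}}_{t,h}$ (used to restrict the cover), we absorb all constants into $7\delta$.
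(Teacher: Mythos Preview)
The paper does not give its own proof of this lemma; it is quoted verbatim from \cite{he2023nearly} and invoked as a black box. Your sketch follows essentially the same route that \cite{he2023nearly} take in their Appendix D: decompose the weighted ridge error into a regularization bias plus a self-normalized martingale term, apply the Abbasi-Yadkori--P\'al--Szepesv\'ari bound for a fixed target, and then pass to data-dependent $V_{t,h+1}$ via an $\varepsilon$-net over the optimistic/pessimistic value-function class (whose log-covering number scales like $d^2$, producing the $\sqrt{d^3H^2}$ term after the $\sqrt{d}$ from the determinant ratio). The only minor remark is that the constant $7$ in $7\delta$ arises from the specific collection of auxiliary events \cite{he2023nearly} isolate (norm bounds on $\widehat{\mathbf{w}},\widecheck{\mathbf{w}},\widetilde{\mathbf{w}}$ and the three concentration statements), not from anything structural, so your ``absorb into $7\delta$'' is fine as a sketch but would need to be matched against their exact accounting if you were reproducing the proof in full.
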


\begin{lem}[Lemma B.2, \cite{he2023nearly}]
    On the event $\mathcal{E}$ and $\widetilde{\mathcal{E}}_{h+1}$, for each episode $t \in[T]$ and stage $h$, the estimated variance satisfies
$$
\begin{aligned}
& \left|\left[\overline{\mathbb{V}}_h V_{t, h+1}\right]\left(s_h^{(t)}, a_h^{(t)}\right)-\left[\mathbb{V}_h V_{t, h+1}\right]\left(s_h^{(t)}, a_h^{(t)}\right)\right| \leq E_{t, h}, \\
& \left|\left[\overline{\mathbb{V}}_h V_{t, h+1}\right]\left(s_h^{(t)}, a_h^{(t)}\right)-\left[\mathbb{V}_h V_{h+1}^*\right]\left(s_h^{(t)}, a_h^{(t)}\right)\right| \leq E_{t, h}+D_{t, h} .
\end{aligned}
$$
\label{lem:he-B2}
\end{lem}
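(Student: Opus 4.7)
The plan is to establish both inequalities by treating $\overline{\mathbb{V}}_h V_{t,h+1}$ as the plug-in estimator of variance formed from the ridge-regression estimates $\widetilde{w}_{t,h}$ (for the squared value) and $\widehat{w}_{t,h}$ (for the value itself), and then exploiting the coarse event $\mathcal{E}$ from \eqref{equ:coarse-1}--\eqref{equ:coarse-3}. The first inequality is a direct concentration statement; the second will require an additional pointwise value-gap argument via optimism and pessimism.

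For the first bound, I would write $\overline{\mathbb{V}}_h V_{t,h+1}(s,a) = [\widetilde{w}_{t,h}^\top \phi(s,a)]_{[0,H^2]} - ([\widehat{w}_{t,h}^\top \phi(s,a)]_{[0,H]})^2$ and compare it term by term to $\mathbb{V}_h V_{t,h+1}(s,a) = \mathbb{P}_h V_{t,h+1}^2(s,a) - (\mathbb{P}_h V_{t,h+1}(s,a))^2$. Triangle inequality splits the discrepancy into the squared-value part, which by \eqref{equ:coarse-2} is at most $\widetilde{\beta}\|\Sigma_{t,h}^{-1/2}\phi(s_h^{(t)},a_h^{(t)})\|_2$, and the squared-mean part, which by the identity $|a^2-b^2|\le 2H|a-b|$ together with \eqref{equ:coarse-1} is at most $2H\bar{\beta}\|\Sigma_{t,h}^{-1/2}\phi(s_h^{(t)},a_h^{(t)})\|_2$. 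The outer $\min\{\cdot,H^2\}$ truncations are automatic since both the estimator and the true variance lie in $[0,H^2]$, thanks to the clipping used in the definition and the boundedness of $V_{t,h+1},V^*_{h+1}\in[0,H]$. Summing yields $E_{t,h}$.

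For the second bound, I would apply the triangle inequality as $|\overline{\mathbb{V}}_h V_{t,h+1} - \mathbb{V}_h V^*_{h+1}| \le |\overline{\mathbb{V}}_h V_{t,h+1} - \mathbb{V}_h V_{t,h+1}| + |\mathbb{V}_h V_{t,h+1} - \mathbb{V}_h V^*_{h+1}|$, recognize the first summand as at most $E_{t,h}$, and focus on the second. Using $|V_{t,h+1}^2 - V_{h+1}^{*2}|\le 2H|V_{t,h+1}-V^*_{h+1}|$ bounds this by $2H \cdot \mathbb{P}_h|V_{t,h+1}-V^*_{h+1}|(s_h^{(t)},a_h^{(t)})$. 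Under the high-probability event, optimism/pessimism give $\widecheck{V}_{t,h+1}\le V^*_{h+1}\le V_{t,h+1}$ pointwise, so $|V_{t,h+1}-V^*_{h+1}|\le V_{t,h+1}-\widecheck{V}_{t,h+1}$, which by \eqref{equ:coarse-1} and \eqref{equ:coarse-3} can be replaced with $\widehat{w}_{t,h}^\top\phi(s,a) - \widecheck{w}_{t,h}^\top\phi(s,a) + 2\bar{\beta}\|\Sigma_{t,h}^{-1/2}\phi(s,a)\|_2$ after paying a factor absorbed into the $d^3H^2$ prefactor coming from the linear-MDP representability constants (every $\mathbb{P}_h f$ with $\|f\|_\infty \le H$ has a feature representation with norm $O(\sqrt{d}H)$, iterated across stages). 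The outer $\min\{\cdot, d^3H^3\}$ is automatic from $|\mathbb{V}_h V_{t,h+1} - \mathbb{V}_h V^*_{h+1}|\le H^2\le d^3H^3$.

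The main obstacle I anticipate is justifying the specific $4d^3H^2$ prefactor in $D_{t,h}$: translating the integrated gap $\mathbb{P}_h(V_{t,h+1}-\widecheck{V}_{t,h+1})$ at the fixed state-action $(s_h^{(t)},a_h^{(t)})$ into the pointwise linear form $\widehat{w}_{t,h}^\top\phi - \widecheck{w}_{t,h}^\top\phi$ requires invoking the Bellman-completeness of linear MDPs and carefully tracking the norm inflation across the regression steps. All other pieces are routine applications of the coarse event and truncations.
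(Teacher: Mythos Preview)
The paper does not actually prove this lemma; it is simply recalled verbatim from \cite{he2023nearly} as part of the block of cited results preceding the regret decomposition. So there is no ``paper's own proof'' to compare against. That said, your proposal follows the standard argument used in \cite{he2023nearly} and is essentially correct for both inequalities.

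One point of confusion worth flagging: your anticipated ``main obstacle'' regarding the $4d^3H^2$ prefactor in $D_{t,h}$ is not an obstacle at all, and your proposed resolution (iterating linear-MDP representability constants across stages) is both unnecessary and not the actual source of that factor. Your own chain already gives
\[
\bigl|\mathbb{V}_h V_{t,h+1}-\mathbb{V}_h V^*_{h+1}\bigr|\;\le\;4H\,\mathbb{P}_h\bigl(V_{t,h+1}-\widecheck{V}_{t,h+1}\bigr)(s_h^{(t)},a_h^{(t)})\;\le\;4H\Bigl[(\widehat{w}_{t,h}-\widecheck{w}_{t,h})^{\top}\phi+2\bar\beta\|\Sigma_{t,h}^{-1/2}\phi\|\Bigr],
\]
where the last step is a single application of \eqref{equ:coarse-1} and \eqref{equ:coarse-3} at the fixed point $(s_h^{(t)},a_h^{(t)})$; no norm inflation or iteration over stages occurs. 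This is already tighter than $D_{t,h}$ since $4H\le 4d^3H^2$ and $H^2\le d^3H^3$, so the inequality in the lemma follows a fortiori. The extra $d^3H$ slack in the algorithmic definition of $D_{t,h}$ is there because $D_{t,h}$ also has to dominate the conditional-variance term in Lemma~\ref{lem:he-B3}, not because the present bound requires it.
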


\begin{lem}[Lemma B.3, \cite{he2023nearly}]
    On the event $\mathcal{E}$ and $\widetilde{\mathcal{E}}_{h+1}$, for any episode $t$ and $i>t$, we have
$$
\left[\mathbb{V}_h\left(V_{i, h+1}-V_{h+1}^*\right)\right]\left(s_h^{(t)}, a_h^{(t)}\right) \leq D_{t, h} /\left(d^3 H\right) .
$$
\label{lem:he-B3}
\end{lem}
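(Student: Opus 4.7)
The plan is to bound the variance by an expectation of the absolute difference, then exploit the optimism/pessimism sandwich maintained by the algorithm. First, by the elementary inequality $\mathrm{Var}(X) \leq \|X\|_{\infty} \cdot \mathbb{E}|X|$ together with $V_{i,h+1}, V^*_{h+1} \in [0,H]$,
$$\left[\mathbb{V}_h(V_{i,h+1} - V^*_{h+1})\right](s_h^{(t)}, a_h^{(t)}) \leq H \cdot \left[\mathbb{P}_h |V_{i,h+1} - V^*_{h+1}|\right](s_h^{(t)}, a_h^{(t)}),$$
and the same boundedness yields the trivial ceiling $\left[\mathbb{V}_h(V_{i,h+1} - V^*_{h+1})\right] \leq H^2$, which will cover the case where the truncation in $D_{t,h}$ picks $d^3 H^3$.

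Next, on the event $\mathcal{E} \cap \widetilde{\mathcal{E}}_{h+1}$ the algorithm's iterates sandwich the optimum, $\widecheck{V}_{i,h+1}(s) \leq V^*_{h+1}(s) \leq V_{i,h+1}(s)$, which is a standard optimism/pessimism argument obtained by applying~(\ref{equ:coarse-1}) and~(\ref{equ:coarse-3}) inside the Bellman backup and doing backward induction in $h$. Moreover, the $\min/\max$ updates in Algorithm \ref{alg:hyrule} force $V_{\cdot,h+1}$ to be non-increasing and $\widecheck{V}_{\cdot,h+1}$ to be non-decreasing in the episode index. Hence for $i > t$,
$$|V_{i,h+1} - V^*_{h+1}| \leq V_{i,h+1} - \widecheck{V}_{i,h+1} \leq V_{t,h+1} - \widecheck{V}_{t,h+1}.$$

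The final step is to replace the expected difference by its linear estimate. Applying~(\ref{equ:coarse-1}) and~(\ref{equ:coarse-3}) at stage $h$, with $\boldsymbol{\phi} := \boldsymbol{\phi}(s_h^{(t)}, a_h^{(t)})$ as shorthand,
$$\left[\mathbb{P}_h(V_{t,h+1} - \widecheck{V}_{t,h+1})\right](s_h^{(t)}, a_h^{(t)}) \leq \widehat{\mathbf{w}}_{t,h}^{\top} \boldsymbol{\phi} - \widecheck{\mathbf{w}}_{t,h}^{\top} \boldsymbol{\phi} + 2\bar{\beta}\,\|\boldsymbol{\phi}\|_{\boldsymbol{\Sigma}_{t,h}^{-1}}.$$
Multiplying by $H$ and intersecting with the $H^2$ ceiling yields $\min\{H^2, H \cdot (\widehat{\mathbf{w}}_{t,h}^{\top} \boldsymbol{\phi} - \widecheck{\mathbf{w}}_{t,h}^{\top} \boldsymbol{\phi} + 2\bar{\beta}\,\|\boldsymbol{\phi}\|_{\boldsymbol{\Sigma}_{t,h}^{-1}})\}$, which is at most $\min\{H^2, 4H \cdot (\cdots)\} = D_{t,h}/(d^3 H)$ using the pointwise fact $\min\{A,B\} \leq \min\{A,cB\}$ for any $c \geq 1$ and non-negative $B$.

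The main subtlety I anticipate is formalizing the monotonicity claim $V_{i,h+1} \leq V_{t,h+1}$ and $\widecheck{V}_{i,h+1} \geq \widecheck{V}_{t,h+1}$. The lazy update rule refreshes $Q_{t,h}$ and $\widecheck{Q}_{t,h}$ only when some determinant $\det(\boldsymbol{\Sigma}_{t,h'})$ at least doubles, so these quantities can be stale across many consecutive episodes; one must carefully unroll the $\min$/$\max$ updates across both refreshed and stale episodes between $t$ and $i$ to verify the chain of inequalities still holds, and in doing so confirm that the $\widehat{\mathbf{w}}_{t,h}, \widecheck{\mathbf{w}}_{t,h}$ appearing in the final bound are indeed the ones attached to episode $t$ rather than any intermediate index.
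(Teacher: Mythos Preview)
Your proposal is correct and mirrors the standard argument from \cite{he2023nearly}; note that the present paper does not give its own proof of this lemma but simply cites it. One minor imprecision: the optimism inequality $V_{i,h+1}\geq V^*_{h+1}$ does not follow from the coarse bounds~(\ref{equ:coarse-1}) and~(\ref{equ:coarse-3}) alone, since the optimistic bonus uses the smaller radius $\beta<\bar\beta$; it requires the fine event $\widetilde{\mathcal{E}}_{h+1}$ (i.e.,~(\ref{equ:weight-concentration})), which is exactly why the lemma assumes both $\mathcal{E}$ and $\widetilde{\mathcal{E}}_{h+1}$---so simply invoke Lemma~\ref{lem:he-B4-optimism} at stage $h+1$ rather than rederiving the sandwich.
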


\begin{lem}[Lemma B.4, \cite{he2023nearly}]
    On the event $\mathcal{E}$ and $\widetilde{\mathcal{E}}_h$, for all episodes $t \in [T]$ and stages $h \leq h^{\prime} \leq H$, we have $Q_{t, h}(s, a) \geq Q_h^*(s, a) \geq$ $\widecheck{Q}_{t, h}(s, a)$. In addition, we have $V_{t, h}(s) \geq V_h^*(s) \geq \widecheck{V}_{t, h}(s)$.
    \label{lem:he-B4-optimism}
\end{lem}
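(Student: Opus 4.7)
The plan is to prove the statement by a joint induction: an outer (strong) induction on the episode index $t \in \{1,\dots,T\}$, and within each $t$, a backward induction on the stage $h$ from $H+1$ down to $1$. Throughout, we condition on the intersection of the high-probability events $\mathcal{E}$ and $\widetilde{\mathcal{E}}_h$. At $h = H+1$, $Q_{H+1}^{*}$ and the running estimates $Q_{t,H+1}, \widecheck{Q}_{t,H+1}$ are identically zero (by convention, with value functions capped by $H$), so the base case in $h$ is immediate.

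For the inductive step on optimism, the inner hypothesis gives $V_{t,h+1}(s) \geq V_{h+1}^{*}(s)$ pointwise, hence $[\mathbb{P}_h V_{t,h+1}](s,a) \geq [\mathbb{P}_h V_{h+1}^{*}](s,a)$. Using the concentration event $\widetilde{\mathcal{E}}_h$ from \eqref{equ:weight-concentration},
\[ r_h(s,a) + \widehat{\mathbf{w}}_{t,h}^{\top} \phi(s,a) + \beta \sqrt{\phi(s,a)^{\top} \boldsymbol{\Sigma}_{t,h}^{-1}\phi(s,a)} \geq r_h(s,a) + [\mathbb{P}_h V_{t,h+1}](s,a) \geq Q_h^{*}(s,a). \]
The outer induction on $t$ guarantees $Q_{t-1,h}(s,a) \geq Q_h^{*}(s,a)$, and combined with the trivial bound $H \geq Q_h^{*}$, all three arguments of the $\min$ defining $Q_{t,h}$ dominate $Q_h^{*}$. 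Hence $Q_{t,h}(s,a) \geq Q_h^{*}(s,a)$, and consequently $V_{t,h}(s) = \max_a Q_{t,h}(s,a) \geq \max_a Q_h^{*}(s,a) = V_h^{*}(s)$. The lazy-update case in which $Q_{t,h} \equiv Q_{t-1,h}$ (because the determinant-doubling criterion fails) is handled directly by the outer hypothesis.

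For pessimism, I would apply the analogous argument, now using the concentration bound \eqref{equ:coarse-3} inside $\mathcal{E}$, which controls $\widecheck{\mathbf{w}}_{t,h}^{\top} \phi$ against $[\mathbb{P}_h \widecheck{V}_{t,h+1}]$ at the coarser scale $\bar{\beta}$ matching the bonus used in $\widecheck{Q}_{t,h}$. The inner hypothesis $\widecheck{V}_{t,h+1}(s) \leq V_{h+1}^{*}(s)$ gives $[\mathbb{P}_h \widecheck{V}_{t,h+1}] \leq [\mathbb{P}_h V_{h+1}^{*}]$, so
\[ r_h(s,a) + \widecheck{\mathbf{w}}_{t,h}^{\top} \phi(s,a) - \bar{\beta} \sqrt{\phi(s,a)^{\top} \boldsymbol{\Sigma}_{t,h}^{-1}\phi(s,a)} \leq r_h(s,a) + [\mathbb{P}_h V_{h+1}^{*}](s,a) = Q_h^{*}(s,a). \]
Combined with $\widecheck{Q}_{t-1,h} \leq Q_h^{*}$ (outer hypothesis) and $0 \leq Q_h^{*}$, the $\max$ defining $\widecheck{Q}_{t,h}$ stays below $Q_h^{*}$, whence $\widecheck{V}_{t,h}(s) = \max_a \widecheck{Q}_{t,h}(s,a) \leq \max_a Q_h^{*}(s,a) = V_h^{*}(s)$.

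The main obstacle will be cleanly handling the base case at $t=1$, since HYRULE initializes $\widehat{\mathbf{w}}_{1,h}$, $\widecheck{\mathbf{w}}_{1,h}$, $Q_{1,h}$, $\widecheck{Q}_{1,h}$ from the offline dataset $\gD_{\off}$ rather than from a trivially optimistic or pessimistic choice. One has to verify that $\mathcal{E}$ and $\widetilde{\mathcal{E}}_h$, stated with the combined (offline $+$ online) covariance matrices, already control these offline-initialized estimates at $t=1$; this is ensured by defining the events for all $t \geq 1$, with the offline data folded into $\boldsymbol{\Sigma}_{1,h} = \boldsymbol{\Sigma}_{\off} + \lambda \mathbf{I}$, so that the concentration inequalities apply uniformly without a separate offline-only argument. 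A secondary subtlety is that under the lazy-update rule the inequality $Q_{t-1,h} \geq Q_h^{*}$ (resp.\ $\widecheck{Q}_{t-1,h} \leq Q_h^{*}$) transfers directly to $Q_{t,h}$ (resp.\ $\widecheck{Q}_{t,h}$) because these bounds do not depend on the current covariance $\boldsymbol{\Sigma}_{t,h}$, so no additional monotonicity argument across $\boldsymbol{\Sigma}_{t,h}$ is needed.
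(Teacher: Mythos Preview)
The paper does not supply its own proof of this lemma; it is simply recalled verbatim as Lemma~B.4 of \cite{he2023nearly} without argument. Your double-induction scheme (outer induction on $t$, backward induction on $h$, invoking $\widetilde{\mathcal{E}}_h$ for the optimistic bound and the coarse event \eqref{equ:coarse-3} within $\mathcal{E}$ for the pessimistic bound) is precisely the standard argument used in \cite{he2023nearly}, and your treatment of the HYRULE-specific offline initialization at $t=1$---by observing that the events $\mathcal{E}$ and $\widetilde{\mathcal{E}}_h$ are stated for all $t\ge 1$ with $\boldsymbol{\Sigma}_{1,h}=\boldsymbol{\Sigma}_{\off}+\lambda\mathbf{I}$ already absorbing the offline data---correctly dispatches the only nontrivial adaptation needed here.
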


\begin{lem}[Lemma B.5, \cite{he2023nearly}]
    $\text {  On event }\mathcal{E} \text {, event } \widetilde{\mathcal{E}} \text { holds with probability at least } 1-\delta \text {. }$
\end{lem}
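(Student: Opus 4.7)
The plan is to establish $\widetilde{\mathcal{E}}$ by backward induction on $h$, using the tighter variance-weighted self-normalized concentration bound for $\widehat{\mathbf{w}}_{t,h}$. Under the linear MDP assumption, there exists $\mathbf{w}^*_{t,h} = \int V_{t,h+1}(s')\, d\mu_h(s')$ with $\|\mathbf{w}^*_{t,h}\|_2 \leq H\sqrt{d}$, such that $[\mathbb{P}_h V_{t,h+1}](s,a) = \phi(s,a)^\top \mathbf{w}^*_{t,h}$. The standard decomposition gives
\begin{align*}
\widehat{\mathbf{w}}_{t,h} - \mathbf{w}^*_{t,h} = \boldsymbol{\Sigma}_{t,h}^{-1} \sum_{i=1}^{t-1} \bar{\sigma}_{i,h}^{-2}\, \phi_{i,h}\, \epsilon_{t,h,i} \;-\; \lambda\, \boldsymbol{\Sigma}_{t,h}^{-1}\, \mathbf{w}^*_{t,h},
\end{align*}
where $\epsilon_{t,h,i} := V_{t,h+1}(s_{h+1}^{(i)}) - [\mathbb{P}_h V_{t,h+1}](s_h^{(i)}, a_h^{(i)})$. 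By Cauchy--Schwarz, for each $(s,a)$, $|\phi(s,a)^\top(\widehat{\mathbf{w}}_{t,h} - \mathbf{w}^*_{t,h})| \leq \|\phi(s,a)\|_{\boldsymbol{\Sigma}_{t,h}^{-1}}(\|\sum_i \bar\sigma_{i,h}^{-2} \phi_{i,h} \epsilon_{t,h,i}\|_{\boldsymbol{\Sigma}_{t,h}^{-1}} + H\sqrt{d\lambda})$, so it remains to bound the martingale term.

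The key gain (from $\bar\beta = \widetilde O(\sqrt{d^3 H^2})$ down to $\beta = \widetilde O(\sqrt{d})$) comes from a Bernstein-type self-normalized tail bound (Theorem 4.3 of Zhou and Gu, 2022, or the version used in Lemma B.1 of He et al., 2023) applied to the scaled noise $\bar\sigma_{i,h}^{-1}\epsilon_{t,h,i}$. The conditional variance of this weighted noise is $\bar\sigma_{i,h}^{-2}[\mathbb{V}_h V_{t,h+1}](s_h^{(i)}, a_h^{(i)})$, and by construction $\bar\sigma_{i,h}^2 \geq [\overline{\mathbb V}_h V_{t,h+1}](s_h^{(i)}, a_h^{(i)}) + E_{t,h} + D_{t,h} + H$. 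Invoking Lemmas~\ref{lem:he-B2} and~\ref{lem:he-B3} (valid on $\mathcal{E}\cap \widetilde{\mathcal{E}}_{h+1}$, which gives the inductive hook), this in turn dominates the true $[\mathbb{V}_h V_{t,h+1}]$ up to additive slack, so the effective conditional variance is bounded by $1$. The weighted magnitude $\bar\sigma_{i,h}^{-1}|\epsilon_{t,h,i}|$ is also bounded by $\sqrt{H}$ because $\bar\sigma_{i,h} \geq \sqrt{H}$. The Bernstein bound then yields the desired $\beta = O(\sqrt{d}\log(\cdot))$ scaling, avoiding the $H\sqrt{d}$ loss that a pure Hoeffding bound would incur.

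The main obstacle is that $V_{t,h+1}$ is data-dependent, so the martingale structure is not immediate. The standard remedy is a covering argument over the function class $\mathcal{V}_{h+1}$ of achievable value functions, parameterized by the weight vector, covariance matrix, and bonus parameters appearing in the algorithm; the log-covering number is $\widetilde O(d^2)$, which is absorbed into the logarithmic factor inside $\beta$. Once a uniform concentration statement over the cover is established for fixed $(t,h)$, a union bound over $t\in[T]$, $h'\geq h$, and the $\varepsilon$-net delivers~(\ref{equ:weight-concentration}) on $\widetilde{\mathcal{E}}_h$ with failure probability $\delta/H$ at each stage. Chaining the backward induction from $h=H$ down to $h=1$ (the base case being trivial since $V_{t,H+1}\equiv 0$) and taking a final union bound over $h\in[H]$ shows that $\widetilde{\mathcal{E}} = \widetilde{\mathcal{E}}_1$ holds with probability at least $1-\delta$ on $\mathcal{E}$, as desired.
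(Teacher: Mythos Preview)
The paper does not prove this lemma; it is quoted directly from \cite{he2023nearly} and used as a black box, so there is no in-paper proof to compare against. Your sketch therefore has to be measured against the actual argument in He et al.

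Your high-level plan (backward induction on $h$, Bernstein-type self-normalized bound on the weighted noise, inductive use of $\mathcal E\cap\widetilde{\mathcal E}_{h+1}$) is right, but there is a genuine gap in the step where you dismiss the data-dependence of $V_{t,h+1}$. You write that the log-covering number is $\widetilde O(d^2)$ and ``is absorbed into the logarithmic factor inside $\beta$.'' It is not: in the Bernstein self-normalized bound, replacing $\delta$ by $\delta/\mathcal N$ makes the leading term scale like $\sqrt{\sigma^2\, d\cdot\log\mathcal N}$. With normalized variance $\sigma^2\le 1$ (which is all you establish) and $\log\mathcal N=\widetilde O(d^2)$, your argument yields a radius of order $\widetilde O(d^{3/2})$ --- i.e.\ at best the coarse bound $\bar\beta$, not the fine bound $\beta=\widetilde O(\sqrt d)$ claimed in~(\ref{equ:weight-concentration}).

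The missing idea is the decomposition $V_{t,h+1}=V^*_{h+1}+(V_{t,h+1}-V^*_{h+1})$. The $V^*_{h+1}$ piece is a fixed, data-independent function, so Bernstein applies with \emph{no} covering penalty and delivers the $\sqrt d$ term directly. The difference $V_{t,h+1}-V^*_{h+1}$ still requires the $\widetilde O(d^2)$ cover, but this is where Lemma~\ref{lem:he-B3} is actually used: it bounds the normalized conditional variance of the difference by $D_{i,h}/(d^3H\,\bar\sigma_{i,h}^2)\le 1/(d^3H)$ rather than merely $1$, so the $d^2$ covering penalty is cancelled and this piece is lower order. The algorithm's design --- the $D_{t,h}$ slack inside $\sigma_{t,h}^2$ and the floor $\bar\sigma_{t,h}\ge 2d^3H^2\|\boldsymbol\phi\|_{\boldsymbol\Sigma_{t,h}^{-1}}^{1/2}$ --- is engineered precisely to make this cancellation go through (and to control the $|\eta_i|\min\{1,\|\mathbf x_i\|_{\mathbf Z_{i-1}^{-1}}\}$ term in the Bernstein bound). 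Your invocation of Lemmas~\ref{lem:he-B2}--\ref{lem:he-B3} only to certify that the normalized variance of the \emph{full} $V_{t,h+1}$ is $\le 1$ misses exactly this point.
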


\subsection{Regret decomposition}
From \cite{he2023nearly}, based on Lemma B.4 of their paper, $Q_{t,h}(s_h^{(t)}, a_h^{(t)}) = V_{t,h}(s_h^{(t)}) \geq V_h^*(s_h^{(t)})$, i.e. optimism holds for all episodes and timesteps. Therefore,
\begin{align*}
    \text{Reg}(T) 
    &\lesssim\sum_{t=1}^{T} \sum_{h=1}^H\left\{\left[\mathbb{P}_h\left(V_{t, h+1}-V_{t, h+1}^{\pi^{(t)}}\right)\right]\left(s_h^{(t)}, a_h^{(t)}\right)-\left(V_{t, h+1}\left(s_{h+1}^{(t)}\right)-V_{t, h+1}^{\pi^{(t)}}\left(s_{h+1}^{(t)}\right)\right)\right\} \\
    &+ \beta\sum_{t=1}^{T}\sum_{h=1}^H\|\boldsymbol{\Sigma}_{t,h}^{-1/2}\phi_h(s_h^{(t)},a_h^{(t)})\|_2.
\end{align*}

Accordingly, given a partition $\gX_{\off}, \gX_{\on}$ of $[H]\times\gS\times\gA$, we can further decompose this into the fraction of episodes where each partition is visited,$$ \sum_{t=1}^{T}\sum_{h=1}^H\|\boldsymbol{\Sigma}_{t,h}^{-1/2}\phi_h(s_h^{(t)},a_h^{(t)})\|_2 = \sum_{h,t}\|\boldsymbol{\Sigma}_{t,h}^{-1/2}\phi_h(s_h^{(t)},a_h^{(t)})\mathbbm{1}_{\gX_{\off}}\|_2 + \sum_{h,t}\|\boldsymbol{\Sigma}_{t,h}^{-1/2}\phi_h(s_h^{(t)},a_h^{(t)})\mathbbm{1}_{\gX_{\on}}\|_2.$$

\cite{he2023nearly} define the events

\begin{align*}
\mathcal{E}_1=\left\{\forall h \in[H], \sum_{t=1}^{T}\right. & \sum_{h^{\prime}=h}^H\left[\mathbb{P}_h\left(V_{t, h+1}-V_{t, h+1}^{\pi^{(t)}}\right)\right]\left(s_h^{(t)}, a_h^{(t)}\right) \\
& \left.-\sum_{t=1}^{T} \sum_{h^{\prime}=h}^H\left(V_{t, h+1}\left(s_{h+1}^{(t)}\right)-V_{t, h+1}^{\pi^{(t)}}\left(s_{h+1}^{(t)}\right)\right) \leq 2 \sqrt{2 H^3 T \log (H / \delta)}\right\},
\end{align*}
\begin{align*}
\mathcal{E}_2=\left\{\forall h \in[H], \sum_{t=1}^{T}\right. & \sum_{h^{\prime}=h}^H\left[\mathbb{P}_h\left(V_{t, h+1}-\widecheck{V}_{t, h+1}\right)\right]\left(s_h^{(t)}, a_h^{(t)}\right) \\
& \left.-\sum_{t=1}^{T} \sum_{h^{\prime}=h}^H\left(V_{t, h+1}\left(s_{h+1}^{(t)}\right)-\widecheck{V}_{t, h+1}\left(s_{h+1}^{(t)}\right)\right) \leq 2 \sqrt{2 H^3 T \log (H / \delta)}\right\},
\end{align*}
which they show that by Azuma-Hoeffding, both hold with probability $1-\delta$ each. As such, we have that
$$\text{Reg}(T)\lesssim \sqrt{H^3 T \log (H / \delta)} + \sum_{h,t}\beta\big\|\boldsymbol{\Sigma}_{t,h}^{-1/2}\phi_h(s_h^{(t)},a_h^{(t)})\mathbbm{1}_{\gX_{\off}}\big\|_2 + \sum_{h,t}\beta\big\|\boldsymbol{\Sigma}_{t,h}^{-1/2}\phi_h(s_h^{(t)},a_h^{(t)})\mathbbm{1}_{\gX_{\on}}\big\|_2.$$
Here, we denote 
$$\text{Reg}_{\off}(T) = \sum_{h,t}\beta\big\|\boldsymbol{\Sigma}_{t,h}^{-1/2}\phi_h(s_h^{(t)},a_h^{(t)})\mathbbm{1}_{\gX_{\off}}\big\|_2,\qquad \text{Reg}_{\on}(T) = \sum_{h,t}\beta\big\|\boldsymbol{\Sigma}_{t,h}^{-1/2}\phi_h(s_h^{(t)},a_h^{(t)})\mathbbm{1}_{\gX_{\on}}\big\|_2,$$ 
as the offline regret and online regret, respectively. 

\subsection{Offline regret control}

Now, we bound the regret on the offline partition. We first perform a similar argument to that in \cite{tan2024natural, xie2022role} to show that the sum of bonuses can be controlled by the maximum eigenvalue of the inverse weighted average covariance matrix in Lemma~\ref{lem:hybridoffline-sumbonuses}. We will then show that the maximum eigenvalue can be nicely bounded in Lemma~\ref{lem:hybridoffline-concentrability}.

\begin{lem}[Sum of Bonuses on Offline Partition]
    For any partition $\gX_{\off}, \gX_{\on}$, we can bound the sum of bonuses on the offline partition with the following:
\begin{align*}
      \text{Reg}_{\off}(T) \lesssim \sum_{h=1}^{H} \sqrt{\frac{dN_{\on}^2}{N_{\off}}\max_{\phi_h \in \Phi_{\off}} \phi_h^{\top}{\boldsymbol{\bar{\Sigma}}}_{\off,h}^{-1}\phi_h},
\end{align*}
    where $\bar{\bf{\Sigma}}_{\off,h} = {(\bf{\Sigma}}_{\off,h}+\lambda\textbf{I})/N_{\off}$ and $T= N_{\on}$.
    \label{lem:hybridoffline-sumbonuses}
\end{lem}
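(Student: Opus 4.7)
My plan is to exploit the fact that in Algorithm~\ref{alg:hyrule}, the design matrix $\boldsymbol{\Sigma}_{t,h}$ is initialized with the offline covariance and only grows thereafter, so on the offline partition each individual bonus can be controlled by the static offline quantity $(\boldsymbol{\Sigma}_{\off,h}+\lambda\mathbf{I})^{-1}$. Combining this with Cauchy--Schwarz across the $T=N_{\on}$ online episodes will yield the claim.

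Concretely, I would first observe from the recursion $\boldsymbol{\Sigma}_{t+1,h}=\boldsymbol{\Sigma}_{t,h}+\bar{\sigma}_{t,h}^{-2}\phi_{t,h}\phi_{t,h}^{\top}$ together with $\boldsymbol{\Sigma}_{1,h}=\boldsymbol{\Sigma}_{\off,h}+\lambda\mathbf{I}$ that
\[
\boldsymbol{\Sigma}_{t,h}\;\succeq\;\boldsymbol{\Sigma}_{\off,h}+\lambda\mathbf{I}\;=\;N_{\off}\,\bar{\boldsymbol{\Sigma}}_{\off,h}\qquad\text{for every }t\in[T],\; h\in[H],
\]
so that $\boldsymbol{\Sigma}_{t,h}^{-1}\preceq (N_{\off})^{-1}\bar{\boldsymbol{\Sigma}}_{\off,h}^{-1}$. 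Applied to $\phi(s_h^{(t)},a_h^{(t)})$ when the tuple $(s_h^{(t)},a_h^{(t)},h)\in\gX_{\off}$ (in which case $\phi(s_h^{(t)},a_h^{(t)})\in\Phi_{\off}$), this gives the per-episode bound
\[
\bigl\|\boldsymbol{\Sigma}_{t,h}^{-1/2}\phi_h(s_h^{(t)},a_h^{(t)})\mathbbm{1}_{\gX_{\off}}\bigr\|_2^2\;\leq\;\frac{1}{N_{\off}}\max_{\phi_h\in\Phi_{\off}}\phi_h^{\top}\bar{\boldsymbol{\Sigma}}_{\off,h}^{-1}\phi_h.
\]

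Next, I would apply Cauchy--Schwarz across the $T=N_{\on}$ episodes, for each fixed $h$:
\[
\sum_{t=1}^{T}\bigl\|\boldsymbol{\Sigma}_{t,h}^{-1/2}\phi_h(s_h^{(t)},a_h^{(t)})\mathbbm{1}_{\gX_{\off}}\bigr\|_2
\;\leq\;\sqrt{T\sum_{t=1}^{T}\bigl\|\boldsymbol{\Sigma}_{t,h}^{-1/2}\phi_h(s_h^{(t)},a_h^{(t)})\mathbbm{1}_{\gX_{\off}}\bigr\|_2^{2}}
\;\leq\;\sqrt{\frac{N_{\on}^{2}}{N_{\off}}\max_{\phi_h\in\Phi_{\off}}\phi_h^{\top}\bar{\boldsymbol{\Sigma}}_{\off,h}^{-1}\phi_h}.
\]
Multiplying by the confidence radius $\beta=\widetilde O(\sqrt{d})$ (from Lemma~\ref{lem:he-B1-concentration}, so that $\beta^{2}\lesssim d$ up to log factors) and summing over $h\in[H]$ produces exactly the target bound
\[
\mathrm{Reg}_{\off}(T)\;\lesssim\;\sum_{h=1}^{H}\sqrt{\frac{d\,N_{\on}^{2}}{N_{\off}}\max_{\phi_h\in\Phi_{\off}}\phi_h^{\top}\bar{\boldsymbol{\Sigma}}_{\off,h}^{-1}\phi_h}.
\]

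The argument is essentially routine, so there is no single hard obstacle; the only subtlety I would double-check is that the indicator $\mathbbm{1}_{\gX_{\off}}$ genuinely forces $\phi(s_h^{(t)},a_h^{(t)})\in\Phi_{\off}$ (which follows from the definition $\Phi_{\off}=\mathrm{Span}(\phi(\gX_{\off,h}))_{h\in[H]}$), so that one is allowed to replace $\phi_h^{\top}\bar{\boldsymbol{\Sigma}}_{\off,h}^{-1}\phi_h$ by its maximum over $\Phi_{\off}$. Note this is the step that limits the bound to involve $\max_{\phi_h\in\Phi_{\off}}\phi_h^{\top}\bar{\boldsymbol{\Sigma}}_{\off,h}^{-1}\phi_h$ rather than the ambient $d$; the subsequent Lemma~\ref{lem:hybridoffline-concentrability} will then convert this maximum into $c_{\off}(\gX_{\off})^{2}H^{3}/N_{\off}$ via a separate concentrability/truncation argument.
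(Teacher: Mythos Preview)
Your proposal is correct and follows essentially the same approach as the paper: both use the monotonicity $\boldsymbol{\Sigma}_{t,h}\succeq\boldsymbol{\Sigma}_{\off,h}+\lambda\mathbf{I}$ to replace each bonus by the static offline quantity, take the maximum over $\Phi_{\off}$, sum over the $N_{\on}$ episodes, and multiply by $\beta=\widetilde O(\sqrt{d})$. The only cosmetic difference is that the paper bounds $\sum_t\|\cdot\|_2\le N_{\on}\sqrt{\max_{\phi_h\in\Phi_{\off}}\phi_h^\top(\boldsymbol{\Sigma}_{\off,h}+\lambda\mathbf{I})^{-1}\phi_h}$ directly, whereas you route through Cauchy--Schwarz on the squares; since every summand is already dominated by the same constant, both give the identical bound.
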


\begin{proof}
It is sufficient to show the following holds true
$$\sum_{t}\beta\big\|\boldsymbol{\Sigma}_{t,h}^{-1/2}\phi_h(s_h^{(t)},a_h^{(t)})\mathbbm{1}_{\gX_{\off}}\big\|_2\leq \sqrt{\frac{dN_{\on}^2}{N_{\off}}\max_{\phi_h \in \Phi_{\off}} \phi_h^{\top}{\boldsymbol{\bar{\Sigma}}}_{\off,h}^{-1}\phi_h} ,$$
then the desired inequality directly follows. With a direct calculation, one may observe that
\begin{align*}
\|\boldsymbol{\Sigma}_{t,h}^{-1/2}\phi_h(s_h^{(t)},a_h^{(t)})\mathbbm{1}_{\gX_{\off}}\|_2  &=  \sqrt{\phi_h^{\top}(s_h^{(t)}, a_h^{(t)})\boldsymbol{\Sigma}_{t,h}^{-1}\phi_h(s_h^{(t)}, a_h^{(t)})\mathbbm{1}_{\gX_{\off}}} \\
    &\lesssim \sqrt{\phi_h^{\top}(s_h^{(t)}, a_h^{(t)})(\boldsymbol{\Sigma}_{\off,h}+\lambda\mathbf{I})^{-1}\phi_h(s_h^{(t)}, a_h^{(t)}) \mathbbm{1}_{\gX_{\off}}},
\end{align*}
where the last inequality holds as $\Sigma_{\off, h} \preceq \Sigma_{t,h}$. As a result, we are able to bound the desired inequality with the maximum eigenvalue of the inverse weighted matrix,
\begin{align*}
\sum_{t }\|\boldsymbol{\Sigma}_{t,h}^{-1/2}\phi_h(s_h^{(t)},a_h^{(t)})\mathbbm{1}_{\gX_{\off}}\|_2
    &\leq N_{\on} \sqrt{\max_{\phi_h \in \Phi_{\off}} \phi_h^{\top}(\boldsymbol{\Sigma}_{\off,h}+\lambda\mathbf{I})^{-1}\phi_h} \\
    & = \sqrt{N_{\on}\frac{N_{\on}}{N_{\off}}\max_{\phi_h \in \Phi_{\off}} \phi_h^{\top}{\boldsymbol{\bar{\Sigma}}}_{\off,h}^{-1}\phi_h},
\end{align*}
where $\bar{\bf{\Sigma}}_{\off,h} = {(\bf{\Sigma}}_{\off,h}+\lambda\textbf{I})/N_{\off}$. As $\beta = \tilde{O}(\sqrt{d})$, we obtain the bound we desired:
\begin{align*}
    &\sum_{t}\beta\|\boldsymbol{\Sigma}_{t,h}^{-1/2}\phi_h(s_h^{(t)},a_h^{(t)})\mathbbm{1}_{\gX_{\off}}\|_2 \leq \sqrt{dN_{\on}\frac{N_{\on}}{N_{\off}}\max_{\phi_h \in \Phi_{\off}} \phi_h^{\top}{\boldsymbol{\bar{\Sigma}}}_{\off,h}^{-1}\phi_h}.    
    \end{align*}
\end{proof}

\begin{lem}[Partial Concentrability Bound]
    For any partition $\gX_{\off}, \gX_{\on}$, we have that $$\sum_{h=1}^{H}\max_{\phi_h \in \Phi_{\off}} \sqrt{\phi_h^{\top}{\boldsymbol{\bar{\Sigma}}}_{\off,h}^{-1}\phi_h }\lesssim {\sqrt{c_{\off}(\gX_{\off})^2 H^3}},$$
    when $N_{\on}, N_{\off} \geq \tilde{\Omega}(d^{13}H^{14}),$ where we define $\bar{\bf{\Sigma}}_{\off,h} = {(\bf{\Sigma}}_{\off,h}+\lambda\textbf{I})/N_{\off}.$
    \label{lem:hybridoffline-concentrability}
\end{lem}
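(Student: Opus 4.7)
Since $\max_{\phi_h\in\Phi_{\off}}\phi_h^\top\bar{\boldsymbol{\Sigma}}_{\off,h}^{-1}\phi_h$ is, up to the technical issue discussed below, controlled by $1/\lambda_{\min}(\gP_{\off}\bar{\boldsymbol{\Sigma}}_{\off,h}\gP_{\off})$, the plan is to establish the per-step bound
\[
\max_{\phi_h\in\Phi_{\off}}\phi_h^\top\bar{\boldsymbol{\Sigma}}_{\off,h}^{-1}\phi_h \;\lesssim\; c_{\off}(\gX_{\off})^2\,\bar\sigma_h^2,\qquad \bar\sigma_h^2 := \tfrac{1}{N_{\off}}\textstyle\sum_n \bar\sigma_{n,h}^2,
\]
and then aggregate over $h$ via Cauchy--Schwarz plus the total variance lemma. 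The three ingredients are (i) a Markov truncation that removes trajectories with anomalously large variance weight, (ii) Matrix Chernoff applied to the \emph{unweighted} offline feature covariance after projecting onto $\Phi_{\off}$, and (iii) Lemma \ref{lem:total-variance-concentration} to pass from the empirical $\sum_h \bar\sigma_h^2$ to Lemma C.5 of \cite{jin2018q}.

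Concretely, I would fix $h$ and define the ``bad'' set $\mathcal{E}_h := \{n:\bar\sigma_{n,h}^2 > C c_{\off}(\gX_{\off})\bar\sigma_h^2\}$; by Markov, $|\mathcal{E}_h|/N_{\off}\le 1/(Cc_{\off}(\gX_{\off}))$. Matrix Chernoff on $\gP_{\off}\phi_{n,h}\phi_{n,h}^\top\gP_{\off}$ (operator norm $\le 1$), which requires only $N_{\off}\gtrsim c_{\off}(\gX_{\off})d_{\off}\log(d/\delta)$ and is therefore implied by the $\tilde\Omega(d^{13}H^{14})$ burn-in, together with the definition of $c_{\off}(\gX_{\off})$, gives $\gP_{\off}\tfrac{1}{N_{\off}}\sum_n\phi_{n,h}\phi_{n,h}^\top\gP_{\off}\succeq \tfrac{1}{2c_{\off}(\gX_{\off})}\gP_{\off}$. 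Subtracting the contribution of $\mathcal{E}_h$ costs at most $1/(Cc_{\off}(\gX_{\off}))$ in operator norm, so for $C$ a sufficiently large absolute constant the surviving trajectories still satisfy $\gP_{\off}\tfrac{1}{N_{\off}}\sum_{n\notin\mathcal{E}_h}\phi_{n,h}\phi_{n,h}^\top\gP_{\off}\succeq \tfrac{1}{4c_{\off}(\gX_{\off})}\gP_{\off}$. On these trajectories $\bar\sigma_{n,h}^{-2}\ge 1/(Cc_{\off}(\gX_{\off})\bar\sigma_h^2)$, which combines with the previous display to give $\gP_{\off}\bar{\boldsymbol{\Sigma}}_{\off,h}\gP_{\off}\succeq \tfrac{1}{4Cc_{\off}(\gX_{\off})^2\bar\sigma_h^2}\gP_{\off}$, establishing the per-step bound.

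Aggregating by Cauchy--Schwarz and then invoking Lemma \ref{lem:total-variance-concentration} to swap the empirical average variance for the total variance bound $\sum_h\E_\mu[\mathbb{V}_h V^*_{h+1}]\lesssim H^2$ from Lemma C.5 of \cite{jin2018q} (valid because the burn-in forces the extra $E_{t,h}, D_{t,h}, \sqrt{H}$ terms inside $\bar\sigma_{n,h}$ to contribute only lower-order corrections) yields
\[
\sum_{h=1}^H \max_{\phi_h\in\Phi_{\off}}\sqrt{\phi_h^\top\bar{\boldsymbol{\Sigma}}_{\off,h}^{-1}\phi_h}\;\lesssim\;c_{\off}(\gX_{\off})\sqrt{H\textstyle\sum_h\bar\sigma_h^2}\;\lesssim\;\sqrt{c_{\off}(\gX_{\off})^2 H^3}.
\]

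The main obstacle is twofold. First, the coupling between $\bar\sigma_{n,h}^{-2}$ and $\phi_{n,h}$: one must rule out that most of the coverage on $\Phi_{\off}$ comes precisely from the high-variance trajectories stripped out by the truncation, and this is exactly what the Matrix Chernoff bound on the \emph{unweighted} covariance guarantees. Second, passing from a restricted eigenvalue lower bound on $\gP_{\off}\bar{\boldsymbol{\Sigma}}_{\off,h}\gP_{\off}$ to an upper bound on the max quadratic form of $\bar{\boldsymbol{\Sigma}}_{\off,h}^{-1}$ over $\Phi_{\off}$ is not automatic by Schur complement; closing this gap requires either exploiting the approximate block-diagonal structure of $\bar{\boldsymbol{\Sigma}}_{\off,h}$ relative to $\Phi_{\off}\oplus\Phi_{\off}^\perp$ (since offline features project mostly onto $\Phi_{\off}$ by the definition of the partition) or handling the cross block by a Schur-complement argument that uses the $\lambda\mathbf{I}/N_{\off}$ regularization to bound $BC^{-1}B^\top$ by a negligible additive term.
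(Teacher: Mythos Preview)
Your approach is essentially the paper's: Matrix Chernoff on the unweighted offline covariance to get $\max_{\phi_h\in\Phi_{\off}}\phi_h^\top\bar{\boldsymbol{\Lambda}}_{\off,h}^{-1}\phi_h\lesssim c_{\off}(\gX_{\off})$, a Markov truncation on the variance weights (which the paper packages as Lemma~\ref{lem:total-variance-concentration}), and the total variance lemma to aggregate over $h$. Your per-step bound $\max_{\Phi_{\off}}\phi^\top\bar{\boldsymbol\Sigma}_{\off,h}^{-1}\phi\lesssim c_{\off}^2\bar\sigma_h^2$ followed by Cauchy--Schwarz is exactly what Lemma~\ref{lem:total-variance-concentration} outputs.

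The one place you are vague is precisely where the $\tilde\Omega(d^{13}H^{14})$ burn-in enters, and it is not the $E_{t,h},D_{t,h},\sqrt H$ terms you name. The term that matters is the clip $\bar\sigma_{n,h}^2\ge 4d^6H^4\|\phi_{n,h}\|_{\boldsymbol\Sigma_{n,h}^{-1}}$ in the definition of $\bar\sigma_{n,h}$; this is not touched by the total variance lemma (which only controls $\sum\sigma_{n,h}^2$), and for early $n$ it can be as large as $4d^6H^4/\lambda$, so your claim $\sum_h\bar\sigma_h^2\lesssim H^2$ does not follow as written. The paper removes these trajectories first: it defines $\gI_2=\{n:\exists h,\ \bar\sigma_{n,h}^2>\max(\sigma_{n,h}^2,H)\}$, observes that every $n\in\gI_2$ has $\max_h\|\phi_{n,h}\|_{\boldsymbol\Sigma_{n,h}^{-1}}\gtrsim 1/(d^{12}H^4)$, and uses an elliptical-potential count to get $|\gI_2|\lesssim d^{13}H^5\log N$, which is $\tilde o(N_{\off})$ under the burn-in. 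The Markov truncation and total-variance steps are then run on the ``good'' set $\gI_1$ with weights $\max(\sigma_{n,h}^2,H)$. As for your Schur-complement concern, the paper sidesteps it by stating Lemma~\ref{lem:total-variance-concentration} under the hypothesis that all feature vectors $\phi_{h,n}$ lie in the subspace $\Phi$, which makes $\Lambda_h$ and $\Sigma_h$ block-diagonal in the $\Phi\oplus\Phi^\perp$ decomposition and renders the restricted-eigenvalue passage an equality; your proposed ``approximate block-diagonal'' workaround is therefore unnecessary once you restrict to features in $\Phi_{\off}$.
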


\begin{proof}
Similar to the definition of $\bar{\boldsymbol{\Sigma}}_{\off,h}$, we define $\bar{\boldsymbol{\Lambda}}_{\off,h} = {(\boldsymbol{\Lambda}}_{\off,h}+\lambda\textbf{I})/N_{\off}$ in a similar way. Then, one may observe that
\begin{align*}
    \max_{\phi_h \in \Phi_{\off}} \left(\phi_h^{\top}\bar{\bf{\Lambda}}_{\off,h}^{-1}
    \phi_h\right)
    &=\max_{\phi_h \in \Phi_{\off}} \left(\phi_h^{\top}\left(\frac{1}{N_{\off}}\left(\sum_{n=1}^{N_{\off}} \phi_{n,h}\phi_{n,h}^{\top}+\lambda\bf{I}\right)\right)^{-1}\phi_h\right) \\
    &\leq \max_{\phi_h \in \Phi_{\off}} \phi_h^{\top}\E_{\mu_h}[\bar{\bf{\Lambda}}_{\off,h}]^{-1}\phi_h \left(1-\sqrt{\frac{2}{N_{\text {off }}} \log \left(\frac{4d}{\delta}\right)}\right)^{-1},
\end{align*}
where the last line holds by an application of the Matrix Chernoff inequality. Then, we may further bound the quantity with the partial offline all-policy concentrability coefficient,

\begin{align*}
\max_{\phi_h \in \Phi_{\off}} \left(\phi_h^{\top}\bar{\bf{\Lambda}}_{\off,h}^{-1}
    \phi_h\right)
    &\lesssim \inf_{\gX_{\off}, \gX_{\on}} \max_h\frac{1}{\lambda_{d_{\off}}\left(\E_{\mu}(\gP_{\off}\phi_{h})(\gP_{\off}\phi_h)^{\top}\right)} \\
    &= \inf_{\gX_{\off}, \gX_{\on}} \max_h\frac{1}{\lambda_{\min}\left(\E_{\mu}(U_{\off}^{\top}\phi_h)(U_{\off}^{\top}\phi_h)^{\top}\right)} \\
    &= c_{\off}(\gX_{\off}).
\end{align*}

To tighten the dependence of the regret of the offline partition on $H$, we again employ a truncation argument that used in Lemma~\ref{lem:hybridoffline-H3}. Recall that in Section B of the appendix in \cite{he2023nearly}, by the total variance lemma of \cite{jin2019provably}, it holds that $$\sum_{t=1}^{T} \sum_{h=1}^H \sigma_{t, h}^2 \leq \widetilde{O}\left(H^2 T+d^{10.5} H^{16}\right).$$

Again, recall that we have
\begin{align*}
    &\sum_{h,t }\|\boldsymbol{\Sigma}_{t,h}^{-1/2}\phi_h(s_h^{(t)},a_h^{(t)})\mathbbm{1}_{\gX_{\off}}\|_2  \\
    &\lesssim \sqrt{H^2N_{\on}\frac{N_{\on}}{N_{\off}}\max_{\phi_h \in \Phi_{\off}} \left(\phi_h^{\top}\left(\frac{1}{N_{\off}}\left(\sum_{n=1}^{N_{\off}} \bar{\sigma}_{n,h}^{-2}\phi_{n,h}\phi_{n,h}^{\top}+\lambda\bf{I}\right)\right)^{-1}\phi_h\right)}.
\end{align*}

As $\bar{\sigma}_{n,h}^2 = \max\left\{\sigma_{n,h}^2, H, 4d^6H^4||\phi_{n,h}||_{\Sigma_{n,h}^{-1}}\right\}$. Consider the sets 
\begin{align*}
    \gI_{1} &= \big\{n \in [N_{\off}] : \forall h:\; \bar{\sigma}_{n,h}^2 = \max(\sigma_{n,h}^2,H)\big\},\qquad \gI_{2} = \gI_1^c.
\end{align*}

Here, $\gI_2$ roughly correspond to the ``bad'' set of trajectories where there exists some timestep $h$ such that $\bar{\sigma}_{n,h}^2 > \max\{\sigma_{n,h}^2,H\},$ and $\gI_1$ to be the ``good'' set of trajectories where the monotonic variance estimator is controlled.

We need to bound the cardinality of the latter before employing our truncation argument on the estimated variances. As we note that for all $n \in \gI_{2}$ we have that $\max_{h\in[H]}\sqrt{\phi_{n,h}^{\top}\Sigma_{n,h}^{-1}\phi_{n,h}}\geq 1/(4d^6H^2)$, which indicates that
\begin{align*}
  \sum_{h=1}^{H} \min\big\{1,16d^{12}H^4\phi_{n,h}^{\top}\Sigma_{n,h}^{-1}\phi_{n,h}\big\}\geq 1,
\end{align*}
and so we can conclude that
\begin{align*}
    |\gI_{2}| 
    \leq \sum_{h=1}^{H}\sum_{n=1}^{N_{\off}} \min\big\{1,16d^{12}H^4\phi_{n,h}^{\top}\Sigma_{n,h}^{-1}\phi_{n,h}\big\} \lesssim d^{13}H^5\log(1+N/d),
\end{align*}
by Lemma D.5 of \cite{zhou2022computationally} and the fact that $||\phi_{n,h}/\bar{\sigma}_{n,h}||^2 \leq 1/H^2$. 
As we require in Theorem \ref{thm:hybridonline} that $N_{\on}, N_{\off} = \tilde{\Omega}(d^{13}H^{14})$, we come to the following result
$$|\gI_{2}|/N_{\off} \lesssim 8d^{13}H^5\log(1+N/d)/N_{\off} = \tilde{o}(1), \quad|\gI_1|/N_{\off} = 1-\tilde{o}(1).$$

Informally, this means that the proportion of trajectories in the ``bad set'' $\gI_{2}$ is asymptotically zero, and the proportion in the ``good set'' $\gI_{1}$ is asymptotically one. As for every $n \in \gI_{1}$ we have that for any $h\in[H]$,
\begin{align*}
    &\max_{\phi_h \in \Phi_{\off}}\left(\phi_h^{\top} \bar{\boldsymbol{\Sigma}}_{\off, h}^{-1} \phi_h\right)\\
    &=\max_{\phi_h \in \Phi_{\off}} \left(\phi_h^{\top}\left(\frac{1}{N_{\off}}\left(\sum_{n=1}^{N_{\off}} \bar{\sigma}_{n,h}^{-2}\phi_{n,h}\phi_{n,h}^{\top}+\lambda\bf{I}\right)\right)^{-1}\phi_h\right) \\
    &=\max_{\phi_h \in \Phi_{\off}} N_{\off}\left(\phi_h^{\top}\left(\sum_{n=1}^{N_{\off}} \bar{\sigma}_{n,h}^{-2}\phi_{n,h}\phi_{n,h}^{\top}+\lambda\bf{I}\right)^{-1}\phi_h\right) \\
    &\leq \max_{\phi_h \in \Phi_{\off}} N_{\off}\left(\phi_h^{\top}\left(\sum_{n\in \gI_{1}} \frac{\phi_{n,h}\phi_{n,h}^{\top}}{\sigma_{n,h}^2 + H}+\lambda\bf{I}\right)^{-1}\phi_h\right).
\end{align*}


Now we invoke the total variance lemma. Recall that in Section B of the appendix in \cite{he2023nearly}, by the total variance lemma of \cite{jin2019provably}, if $N_{\off} \geq  \tilde{\Omega}(d^{10.5}H^{14})$, it holds that $$\frac{1}{N_{\off}}\sum_{n=1}^{N_{\off}} \sum_{h=1}^H \sigma_{n, h}^2 = \widetilde{O}\left(H^2+d^{10.5} H^{16}/N_{\off}\right) =  \widetilde{O}\left(H^2\right).$$

With a direct application of Lemma~\ref{lem:total-variance-concentration}, as we set $T=\tilde{O}(H)$ and $\gamma = c_{\off}(\gX_{\off})/N_{\off}$, we will then get to

$$\sum_{h=1}^{H}\max_{\phi_h \in \Phi_{\off}} \sqrt{\phi_h^{\top}{\boldsymbol{\Sigma}}_{\off,h}^{-1}\phi_h }\lesssim \frac{c_{\off}(\gX_{\off})H}{N_{\off}}\sqrt{N_{\off}H} = \sqrt{\frac{c_{\off}(\gX_{\off})H^3}{N_{\off}}},
$$
which indicates that 
$$\sum_{h=1}^{H}\max_{\phi_h \in \Phi_{\off}} \sqrt{\phi_h^{\top}{\boldsymbol{\bar{\Sigma}}}_{\off,h}^{-1}\phi_h }\lesssim\sqrt{c_{\off}(\gX_{\off})^2 H^3}. $$
 \end{proof}



Now, from Lemmas \ref{lem:hybridoffline-sumbonuses} and \ref{lem:hybridoffline-concentrability}, for any partition 
$\gX_{\off}, \gX_{\on}$, the offline regret satisfies
\begin{align*}
    \text{Reg}_{\off}(T)  
    \lesssim \sum_{h=1}^{H}\sqrt{dN_{\on}\frac{N_{\on}}{N_{\off}}\max_{\phi_h\in \Phi_{\off}} \phi_h^{\top}{\boldsymbol{\bar{\Sigma}}}_{\off,h}^{-1}\phi_h }\lesssim \sqrt{c_{\off}(\gX_{\off})^2dH^3N_{\on}\frac{N_{\on}}{N_{\off}}}.
\end{align*}
\subsection{Online regret control}


We will then bound the online term, $\text{Reg}_{\on}(T)$. \cite{he2023nearly} show in Lemma E.1 that it is possible to use Cauchy-Schwarz to bound this by
$$\text{Reg}_{\on}(T) =  \widetilde{O}\left(d^4 H^8+\beta d^7 H^5+\beta \sqrt{d H T+d H \sum_{t=1}^{T} \sum_{h=1}^H \sigma_{t, h}^2}\right),$$
and in Section B of the appendix, state that by the total variance lemma of \cite{jin2019provably}, $$\sum_{t=1}^{T} \sum_{h=1}^H \sigma_{t, h}^2 \leq \widetilde{O}\left(H^2 T+d^{10.5} H^{16}\right)$$

We will seek to use the online partition to tighten the dimensional dependence in the first result accordingly.

\begin{lem}[Modified Lemma E.1 in \cite{he2023nearly}]
\label{lem:hybridonline-sumbonuses-xon}
    For any parameters $\beta^{\prime} \geq 1$ and $C \geq 1$, and any partition $\gX_{\off}, \gX_{\on}$, the summation of bonuses on the online partition is upper bounded by
    \begin{align*}
        &\sum_{t=1}^{T} \min \left(\beta^{\prime} \sqrt{\phi\left(s_h^{(t)}, a_h^{(t)}\right)^{\top} \boldsymbol{\Sigma}_{t, h}^{-1} \phi\left(s_h^{(t)}, a_h^{(t)}\right)}\mathbbm{1}_{\gX_{\on}}, C\right) \\
        &\leq 4 d^4 H^6 C \iota+10 \beta^{\prime} d_{\on}^5 H^4 \iota+2 \beta^{\prime} \sqrt{2 d_{\on} \iota \sum_{t=1}^{T}\left(\sigma_{t, h}^2+H\right)}
    \end{align*}
where $\iota=\log (1+N /(d \lambda))$.
\end{lem}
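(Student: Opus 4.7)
The plan is to adapt the proof of Lemma E.1 in He et al.\ (2023) while carefully inserting the projection $\mathcal{P}_{\on}$ to reduce the ambient dimension from $d$ to $d_{\on}$ wherever possible. The key tool is Lemma \ref{lem:zhou2022-B1-modified}, a sharper projected variant of Lemma B.1 of Zhou and Gu (2022), which bounds $\sum_t \bar{\sigma}_{t,h}^{-2}\,\phi_{t,h}^{\top}\boldsymbol{\Sigma}_{t,h}^{-1}\phi_{t,h}\,\mathbbm{1}_{\gX_{\on}}$ by $2d_{\on}\iota$ rather than $2d\iota$, by exploiting the fact that the active part of $\phi_{t,h}$ lies in the $d_{\on}$-dimensional subspace $\Phi_{\on}$.

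First I would decompose $[T]$ based on which factor attains the maximum in $\bar{\sigma}_{t,h}^2=\max\{\sigma_{t,h}^2,H,4d^6H^4\|\phi_{t,h}\|_{\boldsymbol{\Sigma}_{t,h}^{-1}}\}$: let $\mathcal{I}_2$ be the indices where the third term dominates and $\mathcal{I}_1=[T]\setminus\mathcal{I}_2$. On $\mathcal{I}_2$ one has $\|\phi_{t,h}\|_{\boldsymbol{\Sigma}_{t,h}^{-1}}\gtrsim 1/(d^3H^2)$, so a standard (non-projected) elliptical potential bound yields $|\mathcal{I}_2|\lesssim d^4H^6\iota$, contributing at most $4d^4H^6 C\iota$ to the total — this is the first term. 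I would then further split $\mathcal{I}_1$ according to whether $\beta'\|\phi_{t,h}\|_{\boldsymbol{\Sigma}_{t,h}^{-1}}\mathbbm{1}_{\gX_{\on}}$ exceeds $C$ or not. On the ``large bonus'' subset, I would apply Lemma \ref{lem:zhou2022-B1-modified} to $\mathcal{P}_{\on}\phi_{t,h}$ to cap the cardinality of that subset (with a $d_{\on}^5 H^4\iota$ bound replacing the original $d^5 H^4\iota$), producing the second term $10\beta'd_{\on}^5 H^4\iota$.

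For the remaining ``small bonus'' indices in $\mathcal{I}_1$, the plan is the Cauchy–Schwarz argument of He et al.: write
\begin{equation*}
\sum_{t}\beta'\|\phi_{t,h}\|_{\boldsymbol{\Sigma}_{t,h}^{-1}}\mathbbm{1}_{\gX_{\on}} = \sum_{t} \beta'\bar{\sigma}_{t,h}\cdot\bar{\sigma}_{t,h}^{-1}\|\phi_{t,h}\|_{\boldsymbol{\Sigma}_{t,h}^{-1}}\mathbbm{1}_{\gX_{\on}} \leq \beta'\sqrt{\sum_{t}\bar{\sigma}_{t,h}^{2}}\;\sqrt{\sum_{t}\bar{\sigma}_{t,h}^{-2}\|\phi_{t,h}\|_{\boldsymbol{\Sigma}_{t,h}^{-1}}^{2}\mathbbm{1}_{\gX_{\on}}}.
\end{equation*}
On $\mathcal{I}_1$ we have $\bar{\sigma}_{t,h}^{2}=\max(\sigma_{t,h}^{2},H)\leq \sigma_{t,h}^{2}+H$, which handles the first factor. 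The second factor is exactly where Lemma \ref{lem:zhou2022-B1-modified} pays off: because $\phi_{t,h}\mathbbm{1}_{\gX_{\on}}\in\Phi_{\on}$ is effectively $d_{\on}$-dimensional, the projected log-determinant/elliptical-potential inequality yields $2d_{\on}\iota$, giving the third term $2\beta'\sqrt{2d_{\on}\iota\sum_{t}(\sigma_{t,h}^{2}+H)}$. Summing the three contributions completes the argument.

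The main obstacle will be establishing Lemma \ref{lem:zhou2022-B1-modified} itself: although the projected features live in a $d_{\on}$-dimensional subspace, the covariance matrix $\boldsymbol{\Sigma}_{t,h}$ is formed from all features (both partitions), so one cannot simply replace $\boldsymbol{\Sigma}_{t,h}$ with its restriction. The trick will be to interlace the projection $\mathcal{P}_{\on}$ into the Zhou–Gu log-determinant argument, using $\mathcal{P}_{\on}\boldsymbol{\Sigma}_{t,h}\mathcal{P}_{\on}^{\top}$ as a surrogate and exploiting $\mathcal{P}_{\on}\boldsymbol{\Sigma}_{t,h}^{-1}\mathcal{P}_{\on}^{\top}\preceq(\mathcal{P}_{\on}\boldsymbol{\Sigma}_{t,h}\mathcal{P}_{\on}^{\top})^{-1}$-type inequalities, so that the resulting telescoping bound scales with the rank $d_{\on}$ of the projected subspace rather than the ambient $d$.
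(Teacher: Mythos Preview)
Your proposal is correct and follows essentially the same route as the paper: split off the large-bonus indices to get the $4d^4H^6C\iota$ term (the paper does this via the crude split $\min(\beta' x, C)\le \beta'\min(x,1)+C\mathbbm{1}\{x\ge 1\}$ and then defers to He et al.'s original argument for the indicator sum), and on the remainder invoke the projected elliptical-potential Lemma~\ref{lem:zhou2022-B1-modified}, which internally performs exactly the $\mathcal{J}_1/\mathcal{J}_2$ case split and Cauchy--Schwarz you describe. Your decomposition inlines the sub-lemma rather than calling it as a black box, but the substance is the same.

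The one place where the paper proceeds more simply than you anticipate is the ``main obstacle'' in your last paragraph. You worry that $\boldsymbol{\Sigma}_{t,h}$ is built from \emph{all} features (both partitions) and propose a Schur-complement-type inequality $\mathcal{P}_{\on}\boldsymbol{\Sigma}_{t,h}^{-1}\mathcal{P}_{\on}^{\top}\preceq(\mathcal{P}_{\on}\boldsymbol{\Sigma}_{t,h}\mathcal{P}_{\on}^{\top})^{-1}$. The paper avoids this entirely: it first uses the monotonicity
\[
\boldsymbol{\Sigma}_{t,h}\;\succeq\;\sum_{n}\bar\sigma_{n,h}^{-2}\bigl(\phi_{n,h}\mathbbm{1}_{\gX_{\on}}\bigr)\bigl(\phi_{n,h}\mathbbm{1}_{\gX_{\on}}\bigr)^{\top}+\lambda\mathbf{I},
\]
i.e.\ it simply \emph{drops} all off-partition feature contributions (each dropped summand is PSD), so the quadratic form only increases. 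After this step every feature appearing in the surrogate covariance already lives in $\Phi_{\on}$, and then Lemma~\ref{lem:hole-digging} gives an exact equality $v^{\top}U_{\on}U_{\on}^{\top}(\cdot)^{-1}U_{\on}U_{\on}^{\top}v = v^{\top}U_{\on}(\cdot)^{-1}U_{\on}^{\top}v$ that pushes the $U_{\on}$ inside the inverse, reducing the ambient dimension to $d_{\on}$ in the log-determinant bound. This is both cleaner and tighter than the inequality you propose, and sidesteps any worry about invertibility of the projected covariance.
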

\begin{proof}
    For each horizon $h\in[H]$, we first note that the summation can be bounded by the sum of two terms, where the first term is tight-bounded and the second term stands for a tail event where $\phi^{T}\Sigma^{-1}\phi$ gets large.
    \begin{align*}
& \sum_{t=1}^{T} \min \left(\beta^{\prime} \sqrt{{\phi}\left(s_h^{(t)}, a_h^{(t)}\right)^{\top} \boldsymbol{\Sigma}_{t, h}^{-1} {\phi}\left(s_h^{(t)}, a_h^{(t)}\right)}\mathbbm{1}_{\gX_{\on}}, C\right) \\
& \leq \sum_{t=1}^{T} \beta^{\prime} \min \left(\sqrt{\phi\left(s_h^{(t)}, a_h^{(t)}\right)^{\top} \boldsymbol{\Sigma}_{t, h}^{-1} {\phi}\left(s_h^{(t)}, a_h^{(t)}\right)}\mathbbm{1}_{\gX_{\on}}, 1\right)\\
&\qquad + C \sum_{t=1}^{T} \mathbbm{1}\left\{\sqrt{{\phi}\left(s_h^{(t)}, a_h^{(t)}\right)^{\top} \boldsymbol{\Sigma}_{t, h}^{-1} {\phi}\left(s_h^{(t)}, a_h^{(t)}\right)}\mathbbm{1}_{\gX_{\on}} \geq 1\right\}.
\end{align*}

We first bound $\sum_{t=1}^{T} \beta^{\prime} \min \left(\sqrt{\phi\left(s_h^{(t)}, a_h^{(t)}\right)^{\top} \boldsymbol{\Sigma}_{t, h}^{-1} {\phi}\left(s_h^{(t)}, a_h^{(t)}\right)}\mathbbm{1}_{\gX_{\on}}, 1\right)$, using a variant of Lemma B.1 from \cite{zhou2022computationally} in Lemma \ref{lem:zhou2022-B1-modified}. With this, we have that
\begin{align*}
    &\sum_{t=1}^{T} \beta^{\prime} \min \left(\sqrt{\phi\left(s_h^{(t)}, a_h^{(t)}\right)^{\top} \boldsymbol{\Sigma}_{t, h}^{-1} {\phi}\left(s_h^{(t)}, a_h^{(t)}\right)}\mathbbm{1}_{\gX_{\on}}, 1\right)\\
    &\leq \sum_{t=1}^{T} \beta^{\prime} \min \left(\sqrt{\phi\left(s_h^{(t)}, a_h^{(t)}\right)^{\top} \left(\sum_{n=1}^{N_{\off}+t}(\phi_{n,h}\mathbbm{1}_{\gX_{\on}})(\phi_{n,h}\mathbbm{1}_{\gX_{\on}})^{\top}+\lambda\mathbf{I}_d\right)^{-1} {\phi}\left(s_h^{(t)}, a_h^{(t)}\right)}\mathbbm{1}_{\gX_{\on}}, 1\right)\\
    &\leq 10 \beta^{\prime} d_{\on}^5 H^4 \iota+2 \beta^{\prime} \sqrt{2 d_{\on} \iota \sum_{k=1}^K\left(\sigma_{k, h}^2+H\right)},
\end{align*}
where $\iota = \log(1+N/(d\lambda))$. 

From this, it suffices to follow the rest of the proof of Lemma E.1 from \cite{he2023nearly} to bound the remaining term by 
\begin{align*}
    \sum_{t=1}^{T} \mathbbm{1}\left\{\sqrt{\boldsymbol{\phi}\left(s_h^{(t)}, a_h^{(t)}\right)^{\top} \boldsymbol{\Sigma}_{t, h}^{-1} \boldsymbol{\phi}\left(s_h^{(t)}, a_h^{(t)}\right)}\mathbbm{1}_{\gX_{\on}} \geq 1\right\} \leq 4d^4H^6C\iota.
\end{align*}
\end{proof}

As a result, we obtain the following bound for the online regret
$$\text{Reg}_{\on}(T)\lesssim d^7 H^9 + \beta\sqrt{d_{\on}d H^3 T}.$$

\subsection{Putting everything together}
Combining our results in E.4 and E.5, we come to the bound of total regret  that
\begin{align*}
  \text{Reg}(N_{\on})\lesssim \sqrt{H^3N_{\on}\log(H/\delta)} + \sqrt{c_{\off}(\gX_{\off})^2dH^3N_{\on}\frac{N_{\on}}{N_{\off}}} +  \sqrt{d_{\on}d H^3N_{\on}} + d^7 H^9.
\end{align*}
When we set $N_{\on}, N_{\off} = \tilde{\Omega}(d^{13}H^{14})$ and choose $\gX_{\off}$, $\gX_{\on}$ be the partition that minimize the right hand side, we have
\begin{align*}
     \text{Reg}(N_{\on})\lesssim \inf_{\gX_{\off}, \gX_{\on}} \left(\sqrt{c_{\off}(\gX_{\off})^2dH^3N_{\on}\frac{N_{\on}}{N_{\off}}} + \sqrt{d_{\on}d H^3N_{\on}}\right),
\end{align*}
proving Theorem \ref{thm:hybridonline}.



\section{OPTCOV from \cite{wagenmaker2023instancedependent}}

We lean on the OPTCOV algorithm from \cite{wagenmaker2023leveraging} for reward-agnostic exploration , first proposed in \cite{wagenmaker2023instancedependent}, as well as the Frank-Wolfe subroutine used, for completeness. 

\begin{algorithm}[H]
    \caption{Collection of Optimal Covariates (OPTCOV), \cite{wagenmaker2023leveraging}}
    \begin{algorithmic}[1]
        \State \textbf{Input:} functions to optimize $(f_i)_i$, constraint tolerance $\epsilon$, confidence $\delta$.
        \For{$i=1,2,3,...$}
            \State Set the number of iterates $T_i \leftarrow 2^i$, episodes per iterate $K_i \leftarrow 2^i$.
            \State Play any policy for $K_i$ episodes to collect covariates $\boldsymbol{\Gamma}_0$ and data $\mathfrak{D}_0$.
            \State Initialize covariance matrix $\boldsymbol{\Lambda}_1 \gets \boldsymbol{\Gamma}_0/K$.
            \For{$t=1,...,T_i$}
                \State Run FORCE \citep{wagenmaker2022firstorder} or another regret-minimizing algorithm on the exploration-focused synthetic reward $g_h^{(t)}(s,a) \propto \text{tr}(-\nabla_{\boldsymbol{\Lambda}} f_i(\boldsymbol{\Lambda})|_{\boldsymbol{\Lambda}=\boldsymbol{\Lambda}_t \phi(s,a)\phi(s,a)^{\top}})$.
                \State Collect covariates $\boldsymbol{\Gamma}_t$, data $\mathfrak{D}_t$. 
                \State Perform Frank-Wolfe update: $\boldsymbol{\Gamma}_{t+1} \gets (1-\frac{1}{t+1})\boldsymbol{\Lambda}_{t} + \frac{1}{t+1}\boldsymbol{\Gamma}_{t}/{K_i}$.
            \EndFor
            \State Assign $\widehat{\boldsymbol{\Lambda}_i} \gets \boldsymbol{\Lambda}_{T_i+1}, \mathfrak{D}_i \gets \cup_{t=0}^{T_i}\mathfrak{D}_t$.
            \If{$f_i(\widehat{\boldsymbol{\Lambda}_i})\leq K_iT_i\epsilon$}
                \State{\bfseries Return:} $\widehat{\Lambda}, K_i T_i, \mathfrak{D}_i$.
            \EndIf
        \EndFor
    \end{algorithmic}
    \label{alg:optcov}
\end{algorithm}

The algorithm essentially performs the doubling trick to determine how many samples to collect, terminating when the minimum eigenvalue of the covariance matrix is above the set tolerance. 

\cite{wagenmaker2023leveraging} then prove the following guarantee for OPTCOV in the hybrid setting:
\begin{lem}[Termination of OPTCOV, Lemma C.2 \citep{wagenmaker2023leveraging}]
\label{lem:optcov-lemma}
Let
$$
f_i(\boldsymbol{\Lambda})=\frac{1}{\eta_i} \log \left(\sum_{\phi \in \Phi} e^{\eta_i\|\boldsymbol{\phi}\|_{\mathbf{A}_i(\boldsymbol{\Lambda})^{-1}}^2}\right), \quad \mathbf{A}_i(\boldsymbol{\Lambda})=\boldsymbol{\Lambda}+\frac{1}{T_i K_i} \boldsymbol{\Lambda}_{0, i}+\frac{1}{T_i K_i} \boldsymbol{\Lambda}_{\off }
$$
for some $\boldsymbol{\Lambda}_{0, i}$ satisfying $\boldsymbol{\Lambda}_{0, i} \succeq \boldsymbol{\Lambda}_0$ for all $i$, and $\eta_i=2^{2 i / 5}$. Let $\left(\beta_i, M_i\right)$ denote the smoothness and magnitude constants for $f_i$. Let $(\beta, M)$ be some values such that $\beta_i \leq \eta_i \beta, M_i \leq M$ for all $i$.
Then, if we run OPTCOV on $\left(f_i\right)_i$ with constraint tolerance $\epsilon$ and confidence $\delta$, we have that with probability at least $1-\delta$, it will run for at most
$$
\begin{gathered}
\max \left\{\min _N 16 \boldsymbol{N} \quad \text { s.t. } \quad \inf _{\boldsymbol{\Lambda} \in \boldsymbol{\Omega}} \max _{\phi \in \Phi} \phi^{\top}\left(N \boldsymbol{\Lambda}+\boldsymbol{\Lambda}_0+\boldsymbol{\Lambda}_{\off }\right)^{-1} \phi \leq \frac{\epsilon}{6},\right. \\
\left.\frac{\operatorname{poly}(\beta, d, H, M, \log 1 / \delta)}{\epsilon^{4 / 5}}\right\} .
\end{gathered}
$$
episodes, and will return data $\left\{\phi_\tau\right\}_{\tau=1}^N$ with covariance $\widehat{\boldsymbol{\Sigma}}_N=\sum_{\tau=1}^N \phi_\tau \phi_\tau^{\top}$ such that
$$
f_{\hat{i}}\left(N^{-1} \widehat{\boldsymbol{\Sigma}}_N\right) \leq N \epsilon
$$
where $\widehat{i}$ is the iteration on which OPTCOV terminates.
\end{lem}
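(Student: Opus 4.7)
The plan is to prove this by combining three ingredients: (i) the log-sum-exp surrogate $f_i$ approximates $\max_{\phi \in \Phi}\phi^{\top}\mathbf{A}_i(\boldsymbol{\Lambda})^{-1}\phi$ up to an additive $\eta_i^{-1}\log|\Phi|$ slack and is smooth with parameter $\beta_i = O(\eta_i \beta)$; (ii) the Frank-Wolfe inner loop, when fed approximate linear-minimization oracles, converges at rate $1/T_i$ in optimality gap; and (iii) each call to the regret-minimizing subroutine (FORCE) on the synthetic linear reward $g_h^{(t)}$ supplies an approximate Frank-Wolfe step with error that decays at a known rate in $K_i$. The overall goal is to show that once $T_i K_i$ exceeds the instance-dependent quantity $N^\star(\epsilon)$ defined by the sample-complexity problem in the statement, the candidate covariance $\widehat{\boldsymbol{\Lambda}}_i$ satisfies the stopping test $f_i(\widehat{\boldsymbol{\Lambda}}_i) \leq K_i T_i \epsilon$.

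First I would verify that $f_i$ is convex and compute its smoothness: since $f_i(\boldsymbol{\Lambda}) = \eta_i^{-1}\log\sum_\phi \exp(\eta_i \|\phi\|_{\mathbf{A}_i(\boldsymbol{\Lambda})^{-1}}^2)$, its Hessian with respect to $\boldsymbol{\Lambda}$ can be bounded in operator norm by $O(\eta_i \beta)$ after using $\|\phi\| \leq 1$ and $\mathbf{A}_i(\boldsymbol{\Lambda}) \succeq (T_iK_i)^{-1}\boldsymbol{\Lambda}_{0,i}$, which gives a uniform lower bound on $\lambda_{\min}(\mathbf{A}_i)$. Then I would run the standard Frank-Wolfe analysis over the compact convex set $\boldsymbol{\Omega}$ of achievable covariance matrices: at step $t$, playing the policy that greedily maximizes the linear synthetic reward $-\nabla_{\boldsymbol{\Lambda}} f_i$ minimizes the linearized objective exactly, but the regret-minimizing subroutine only does so up to an $O(\sqrt{K_i})$ regret in the reward. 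Pushing this regret through the Frank-Wolfe step size $1/(t+1)$ yields an optimality gap of order $\beta_i/T_i + \mathrm{poly}(\beta,d,H,M)/\sqrt{K_i}$, and substituting $T_i = K_i = 2^i$ gives $f_i(\widehat{\boldsymbol{\Lambda}}_i) - f_i^\star = O(\eta_i \beta \cdot 2^{-i} + \mathrm{poly}/2^{i/2})$.

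Next I would combine the surrogate slack with the Frank-Wolfe bound to relate $f_i(\widehat{\boldsymbol{\Lambda}}_i)$ to the true covariance-optimal value. Using the log-sum-exp envelope, $f_i(\boldsymbol{\Lambda}) \leq \max_\phi \phi^\top \mathbf{A}_i(\boldsymbol{\Lambda})^{-1}\phi + \eta_i^{-1}\log|\Phi|$, so when $T_i K_i \geq N^\star(\epsilon/6)$ the optimal achievable value of the surrogate is at most $\epsilon/3$ per sample. Picking $\eta_i = 2^{2i/5}$ balances the surrogate slack against the Frank-Wolfe error and gives the $\epsilon^{-4/5}$ burn-in that appears in the statement. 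The doubling trick then guarantees that the first index $\hat i$ at which the condition $f_i(\widehat{\boldsymbol{\Lambda}}_i) \leq K_i T_i \epsilon$ triggers satisfies $K_{\hat i}T_{\hat i} \leq 2 \cdot \max\{16 N^\star(\epsilon/6), \mathrm{poly}/\epsilon^{4/5}\}$, absorbing constants into the $16$ in the statement.

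The hard part will be two-fold. First, the regret-to-Frank-Wolfe reduction requires controlling bias from the synthetic reward being renormalized by $\|\nabla f_i\|$, and one must argue uniformly over $i$ that FORCE's regret bound (which depends on the magnitude of the rewards) remains $\mathrm{poly}(\beta, d, H, M, \log 1/\delta)$; a union bound over the logarithmically many phases $i$ provides the $\log 1/\delta$ factor. Second, ensuring that the infeasibility barrier caused by $\boldsymbol{\Lambda}_{0,i}$ and $\boldsymbol{\Lambda}_{\mathrm{off}}$ inside $\mathbf{A}_i$ does not inflate the sample complexity requires showing $\boldsymbol{\Lambda}_{0,i}$ can be chosen so that its contribution vanishes at the target rate $1/(T_iK_i)$, which is exactly why OPTCOV is restarted with doubled horizons rather than simply running Frank-Wolfe once.
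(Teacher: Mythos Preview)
The paper does not actually prove this lemma: it is imported verbatim as Lemma C.2 of \cite{wagenmaker2023leveraging} and used as a black box in the proof of Lemma~\ref{lem:optcov-correct}. There is therefore no ``paper's own proof'' to compare against. Your proposal is a reconstruction of what the original argument in \cite{wagenmaker2023leveraging, wagenmaker2023instancedependent} looks like, and at the level of an outline it is faithful: the log-sum-exp surrogate for $\max_\phi \phi^\top \mathbf{A}_i(\boldsymbol{\Lambda})^{-1}\phi$, Frank--Wolfe over the set $\boldsymbol{\Omega}$ of achievable covariances with an inexact linear-minimization oracle supplied by a regret minimizer, and the doubling schedule $T_i=K_i=2^i$ are exactly the three pillars of that proof.

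Two places where your sketch would need to be tightened to become a proof rather than a plan. First, your smoothness computation is underspecified: the constant $\beta_i$ depends on $\lambda_{\min}(\mathbf{A}_i(\boldsymbol{\Lambda}))^{-1}$ raised to a power, and since $\mathbf{A}_i(\boldsymbol{\Lambda}) \succeq (T_iK_i)^{-1}\boldsymbol{\Lambda}_{0,i}$ this lower bound \emph{shrinks} with $i$, so $\beta_i$ grows polynomially in $T_iK_i$; you need to track this growth to verify that the Frank--Wolfe error $\beta_i/T_i$ is still controlled. Second, your balancing argument for $\eta_i = 2^{2i/5}$ is stated but not executed: the surrogate slack $\eta_i^{-1}\log|\Phi|$ is additive in $f_i$, whereas the target $K_iT_i\epsilon$ scales with $2^{2i}$, so the relevant per-sample slack is $\eta_i^{-1}\log|\Phi|/(K_iT_i)$, and you should check that this, together with the Frank--Wolfe and regret terms, actually yields the claimed $\epsilon^{-4/5}$ exponent after equating rates. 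These are bookkeeping issues rather than conceptual gaps; the architecture of your argument matches the source.
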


We use this to obtain a modified guarantee for OPTCOV that does not require a call to the CONDITIONEDCOV algorithm of \cite{wagenmaker2023instancedependent}.

\begin{lem}[Modified Bound on OPTCOV, Theorem 4, \cite{wagenmaker2023leveraging}]
\label{lem:optcov-correct}
Consider running OPTCOV with some $\epsilon_{\exp }>0$ and functions $f_i$ as defined in Lemma \ref{lem:optcov-lemma}, instantiated with the regularization $\bar{\lambda} \geq 0$.
Then with probability $1-\delta$, this procedure will collect at most
$$
\max \left\{\min _N C \cdot N \text { s.t. } \inf _{\boldsymbol{\Lambda} \in \boldsymbol{\Omega}} \max _{\phi \in \Phi} \boldsymbol{\phi}^{\top}\left(N(\boldsymbol{\Lambda}+\bar{\lambda} I)+\boldsymbol{\Lambda}_{\mathrm{off}}\right)^{-1} \boldsymbol{\phi} \leq \frac{\epsilon_{\mathrm{exp}}}{6},\frac{\operatorname{poly}\left(d, H, c_{\on}(\gX_{\on}),\log 1 / \delta\right)}{\epsilon_{\exp }^{4 / 5}}\right\} 
$$
episodes, and will produce covariates $\widehat{\boldsymbol{\Sigma}}$ such that
$$
\max _{\boldsymbol{\phi}_h \in \Phi}\phi_h\left(\widehat{\boldsymbol{\Sigma}}+\bar{\lambda}\textbf{I}+\boldsymbol{\Lambda}_{\mathrm{off}}\right)^{-1}\phi_h \leq \epsilon_{\exp }.
$$
\end{lem}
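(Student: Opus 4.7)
The plan is to reduce this statement to Lemma \ref{lem:optcov-lemma} by making specific choices of $\boldsymbol{\Lambda}_{0,i}$ and by controlling the smoothness and magnitude parameters of the log-sum-exp surrogates $f_i$, then converting the termination guarantee into the stated quadratic-form bound.

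First, I would instantiate Lemma \ref{lem:optcov-lemma} with $\boldsymbol{\Lambda}_{0,i} = T_iK_i\bar{\lambda}\mathbf{I}$, which trivially satisfies $\boldsymbol{\Lambda}_{0,i} \succeq \boldsymbol{\Lambda}_0$ for the natural base choice, and for which $\mathbf{A}_i(\boldsymbol{\Lambda}) = \boldsymbol{\Lambda} + \bar{\lambda}\mathbf{I} + (T_iK_i)^{-1}\boldsymbol{\Lambda}_{\off}$. This aligns the smoothed design matrix inside each $f_i$ with the regularized, offline-augmented covariance that appears in the conclusion of the lemma.

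Next, I would bound the smoothness constant $\beta_i$ and the magnitude bound $M_i$ uniformly by $\eta_i \cdot \mathrm{poly}(d,H,c_{\on}(\gX_{\on}))$ and $\mathrm{poly}(d,H,c_{\on}(\gX_{\on}))$ respectively. Smoothness of the log-sum-exp of quadratic forms follows from direct differentiation together with $\|\phi\|\leq 1$ (Assumption \ref{aspt:linear-mdp}). The nontrivial piece is controlling $M_i$, i.e.\ showing that there is a feasible covariance matrix making $f_i$ polynomial-sized: here I invoke Assumption \ref{aspt:full-rank-proj-covariates} and Lemma \ref{lem:cov-linear}, which guarantee the existence of an exploration policy whose expected design satisfies $\lambda_{\min}(\mathbb{E}_{d_h^\pi}[(\gP_{\on}\phi_h)(\gP_{\on}\phi_h)^\top]) \geq 1/d_{\on}$, so that $c_{\on}(\gX_{\on}) \leq d_{\on} < \infty$ provides the needed quantitative witness. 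Substituting into Lemma \ref{lem:optcov-lemma} yields the desired sample complexity with $\mathrm{poly}(d,H,c_{\on}(\gX_{\on}),\log 1/\delta)/\epsilon_{\exp}^{4/5}$ as the burn-in term.

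Finally, I would convert the termination guarantee $f_{\hat{i}}(N^{-1}\widehat{\boldsymbol{\Sigma}}_N) \leq N\epsilon_{\exp}$ into the stated bound. By the log-sum-exp upper bound on the max,
\[
\max_{\phi \in \Phi} \phi^\top \mathbf{A}_{\hat{i}}(N^{-1}\widehat{\boldsymbol{\Sigma}}_N)^{-1}\phi \;\leq\; f_{\hat{i}}(N^{-1}\widehat{\boldsymbol{\Sigma}}_N) \;\leq\; N\epsilon_{\exp}.
\]
With $N = T_{\hat{i}}K_{\hat{i}}$ and the choice of $\boldsymbol{\Lambda}_{0,i}$ above, $\mathbf{A}_{\hat{i}}(N^{-1}\widehat{\boldsymbol{\Sigma}}_N) = N^{-1}(\widehat{\boldsymbol{\Sigma}}_N + N\bar{\lambda}\mathbf{I} + \boldsymbol{\Lambda}_{\off})$, and rescaling by $N$ gives $\max_\phi \phi^\top(\widehat{\boldsymbol{\Sigma}}_N + \bar{\lambda}\mathbf{I} + \boldsymbol{\Lambda}_{\off})^{-1}\phi \leq \epsilon_{\exp}$ after reabsorbing the factor of $N$ into the rescaled $\bar{\lambda}$; a single redefinition of the regularization constant produces the form stated in the lemma. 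The main obstacle here is the careful bookkeeping of normalization between $N^{-1}\widehat{\boldsymbol{\Sigma}}_N$ inside $f_{\hat{i}}$ and the unnormalized $\widehat{\boldsymbol{\Sigma}}$ in the conclusion, together with ensuring that the per-iteration doubling schedule ($T_i,K_i=2^i$) causes no more than a constant factor blowup relative to the informational lower bound $\min_N N$ appearing in the sample-complexity expression.
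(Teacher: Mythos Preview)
Your proposal follows essentially the same skeleton as the paper's proof: instantiate Lemma~\ref{lem:optcov-lemma} with a regularized $\boldsymbol{\Lambda}_{0,i}$, bound the smoothness and magnitude constants of the $f_i$, apply the lemma, and then convert the termination certificate $f_{\hat i}(N^{-1}\widehat{\boldsymbol{\Sigma}}_N)\le N\epsilon_{\exp}$ into the quadratic-form bound via the log-sum-exp $\ge \max$ inequality. Two places where your execution diverges from the paper are worth noting.

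First, the paper does not use explorability to bound $M_i$. It simply cites Lemma~D.5 of \cite{wagenmaker2023instancedependent} to obtain $L_i=1/\bar\lambda^2$, $\beta_i=\tfrac{2}{\bar\lambda^3}(1+\eta_i/\bar\lambda)$, $M_i=1/\bar\lambda^2$; these follow because $\mathbf{A}_i(\boldsymbol{\Lambda})\succeq \bar\lambda\mathbf{I}$ uniformly over the feasible set, so all derivatives of $f_i$ are controlled by powers of $1/\bar\lambda$. Your route via Assumption~\ref{aspt:full-rank-proj-covariates} establishes only that there exists a feasible $\boldsymbol{\Lambda}$ making $f_i$ polynomial, which is not what $M_i$ requires (it is a uniform magnitude bound). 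The regularization argument is both simpler and what is actually needed; the $c_{\on}(\gX_{\on})$ factor in the statement enters instead through the bound on $\hat i$ (the termination index), which the paper handles by the doubling argument $\hat i\le \tfrac14\log_2\bar N$.

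Second, your normalization step lands on $\max_\phi\phi^\top(\widehat{\boldsymbol{\Sigma}}_N+N\bar\lambda\mathbf{I}+\boldsymbol{\Lambda}_{\off})^{-1}\phi\le\epsilon_{\exp}$ and then proposes to ``reabsorb $N$ into $\bar\lambda$,'' but the inequality goes the wrong way for a direct monotonicity argument (larger regularization means smaller inverse). The paper dispatches this by citing Lemma~D.8 of \cite{wagenmaker2023instancedependent} alongside Lemma~D.1 for the upper bound; effectively the choice $\boldsymbol{\Lambda}_{\hat i,0}\succeq\boldsymbol{\Lambda}_0=\bar\lambda\mathbf{I}$ together with the termination-index bound is what reconciles the scaling. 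Your bookkeeping concern is real, and the clean fix is to invoke that external lemma rather than rescale by hand.
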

\begin{proof}
This is essentially the proof of Theorem 4 in \cite{wagenmaker2023leveraging}, except where we chase around a few terms that differ in the analysis. By Lemma D.5 of \cite{wagenmaker2023instancedependent}, it suffices to bound the smoothness constants of $f_i(\boldsymbol{\Lambda})$ by
$$
L_i=\frac{1}{\bar{\lambda}^2}, \quad \beta_i= \frac{2}{\bar{\lambda}^3} \left(1+\frac{\eta_i}{\bar{\lambda}}\right), \quad M_i=\frac{1}{\bar{\lambda}^2}.
$$    

Assume that the termination condition of OPTCOV is met for $\widehat{i}$ satisfying
$$
\widehat{i} \leq \log \left(\operatorname{poly}\left(\frac{1}{\epsilon_{\exp }}, d, H, \log 1 / \delta, c_{\on}(\gX_{\on}), \bar{\lambda}\right)\right) .
$$

We assume this holds and justify it at the conclusion of the proof. For notational convenience, define
$$
\iota:=\operatorname{poly}\left(\log \frac{1}{\epsilon_{\exp }}, d, H, \log 1 / \delta, c_{\on}(\gX_{\on}), \bar{\lambda}\right) .
$$

Given this upper bound on $\widehat{i}$, set
$$
L=M:=\frac{1}{\bar{\lambda}^2}, \quad \beta:=\iota .
$$

With this choice of $L, M, \beta$, we have $L_i \leq L, M_i \leq M, \beta_i \leq \eta_i \beta$ for all $i \leq \widehat{i}$.

Now apply Lemma \ref{lem:optcov-lemma} with $\boldsymbol{\Lambda}_0=\bar{\lambda} \cdot \textbf{I}$ and get that, with probability at least $1-\delta$, OPTCOV terminates after at most
$$
\begin{gathered}
\max \left\{\min _N 16 N \quad \text { s.t. } \quad \inf _{\boldsymbol{\Lambda} \in \boldsymbol{\Omega}} \max _{\boldsymbol{\phi} \in \Phi} \boldsymbol{\phi}^{\top}\left(N \boldsymbol{\Lambda}+\bar{\lambda} \cdot I+\boldsymbol{\Lambda}_{\mathrm{off}}\right)^{-1} \boldsymbol{\phi} \leq \frac{\epsilon_{\exp }}{6}\right. \\
\left.\frac{\operatorname{poly}\left(d, H, \underline{\lambda}, c_{\off}(\gX_{\off}), \log 1 / \epsilon_{\exp }, \log 1 / \delta\right)}{\epsilon_{\exp }^{4 / 5}}\right\}
\end{gathered}
$$
episodes, and returns data $\left\{\boldsymbol{\phi}_\tau\right\}_{\tau=1}^N$ with covariance $\widehat{\boldsymbol{\Sigma}}=\sum_{\tau=1}^N \boldsymbol{\phi}_\tau \boldsymbol{\phi}_\tau^{\top}$ such that
$$
f_{\hat{i}}\left(N^{-1} \widehat{\boldsymbol{\Sigma}}\right) \leq N \epsilon_{\mathrm{exp}}
$$
where $\widehat{i}$ is the iteration on which OPTCOV terminates.

By Lemma D.1 of \cite{wagenmaker2023instancedependent} we have
$$
N \cdot \max _{\phi_h \in \Phi} \phi_h \left(\widehat{\boldsymbol{\Sigma}}+\boldsymbol{\Lambda}_{\hat{i}, 0}+\boldsymbol{\Lambda}_{\mathrm{off}}\right)^{-1} \phi_h \leq f_{\hat{i}}\left(N^{-1} \widehat{\boldsymbol{\Sigma}}\right),
$$
and the upper bound on the tolerance follows from Lemma D.8 of \cite{wagenmaker2023instancedependent}.

It remains to justify the bound on $\widehat{i}$. We do so with the same argument that \cite{wagenmaker2023leveraging} use. Note that by the definition of OPTCOV, if we run for a total of $\bar{N}$ episodes, we can bound $\widehat{i} \leq \frac{1}{4} \log _2(\bar{N})$. However, we see that the bound on $\widehat{i}$ given above upper bounds $\frac{1}{4} \log _2(\bar{N})$ for $\bar{N}$ the upper bound on the number of samples collected by OPTCOV stated above. Thus, the bound on $\hat{i}$ is valid.
\end{proof}

\section{Miscellanous lemmas}
\label{app:misc}
\begin{lem}
    \label{lem:total-variance-concentration}
    Let $\Phi\subset\mathbb{R}^{d}$ be a linear subspace. Suppose $\{\phi_{h,n}\}_{h\in[H],n\in[N]}\in\Phi$ be a collection of unit vectors and $\{\sigma_{h,n}\}_{h\in[H],n\in[N]}\in\mathbb{R}_{+}$ be a collection of positive real numbers with mean
    $\bar{\sigma}=(NH)^{-1}\sum_{h,n}\sigma_{h,n}$.
Suppose it holds that $\max_{h\in[H]}\max_{\phi_h\in\Phi}(\phi_h^{T}\Lambda_h^{-1}\phi_h)\leq \gamma,$
then the following result satisfies
$$ \sum_{h=1}^{H}\max_{\phi_h\in\Phi}\sqrt{\phi_h^{T}\Sigma_h^{-1}\phi_h}\lesssim \gamma H\sqrt{N\bar{\sigma}},$$
with 
$$ \Lambda_h = \sum_{n=1}^{N}\phi_{h,n}\phi_{h,n}^T + \lambda I_d,\qquad \Sigma_h = \sum_{n=1}^{N}\frac{\phi_{h,n}\phi_{h,n}^T}{\sigma_{h,n}} + \lambda I_d. $$
\end{lem}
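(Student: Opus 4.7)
The plan is to run a per-horizon truncation argument that isolates the ``bad'' indices with atypically large $\sigma_{h,n}$, combine it with the assumed spectral lower bound on $\Lambda_h|_\Phi$, and finally aggregate across $h$ via Cauchy--Schwarz. Write $S_h := N^{-1}\sum_{n=1}^N \sigma_{h,n}$, so that $\sum_{h=1}^H S_h = H\bar\sigma$, and fix a truncation level $t_h>0$ to be chosen. Define the ``bad'' index set at horizon $h$ as
\begin{equation*}
\mathcal{E}_h(t_h) \;=\; \{\, n\in[N] : \sigma_{h,n} \geq t_h \,\}.
\end{equation*}
By Markov's inequality $|\mathcal{E}_h(t_h)| \leq N S_h / t_h$.

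The first step is a matrix lower bound for $\Sigma_h$ on $\Phi$. Dropping the indices in $\mathcal{E}_h(t_h)$ only makes $\Sigma_h$ smaller, and on the complement $\sigma_{h,n}\leq t_h$, so
\begin{equation*}
\Sigma_h \;\succeq\; \frac{1}{t_h}\sum_{n\notin \mathcal{E}_h(t_h)} \phi_{h,n}\phi_{h,n}^\top \;=\; \frac{1}{t_h}\Bigl(\Lambda_h - \lambda I_d - \sum_{n\in\mathcal{E}_h(t_h)}\phi_{h,n}\phi_{h,n}^\top\Bigr).
\end{equation*}
Next I would take $\phi_h\in\Phi$ with $\|\phi_h\|=1$, sandwich both sides, and use (i) the hypothesis $\min_{\phi\in\Phi,\|\phi\|=1}\phi^\top\Lambda_h\phi \geq 1/\gamma$ (which is equivalent to $\max_{\phi\in\Phi}\phi^\top\Lambda_h^{-1}\phi\leq\gamma$), and (ii) the trivial bound $\phi_h^\top\phi_{h,n}\phi_{h,n}^\top\phi_h\leq 1$ since each $\phi_{h,n}$ is a unit vector. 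This yields
\begin{equation*}
\min_{\phi\in\Phi,\|\phi\|=1}\phi^\top \Sigma_h\,\phi \;\geq\; \frac{1}{t_h}\Bigl(\frac{1}{\gamma} - \lambda - \frac{NS_h}{t_h}\Bigr).
\end{equation*}

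The second step is to optimize $t_h$. Choosing $t_h = 4\gamma N S_h$ (and absorbing the negligible $\lambda$ term, which is small compared with $1/\gamma$ by assumption) makes $NS_h/t_h = 1/(4\gamma)$, so the bracket is at least $1/(2\gamma)$, giving
\begin{equation*}
\min_{\phi\in\Phi,\|\phi\|=1}\phi^\top \Sigma_h\,\phi \;\gtrsim\; \frac{1}{\gamma^2 N S_h},
\qquad\text{hence}\qquad
\max_{\phi_h\in\Phi}\sqrt{\phi_h^\top \Sigma_h^{-1}\phi_h} \;\lesssim\; \gamma\sqrt{N S_h}.
\end{equation*}

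Finally, sum over $h$ and apply Cauchy--Schwarz to $\sum_h\sqrt{S_h}$:
\begin{equation*}
\sum_{h=1}^H \max_{\phi_h\in\Phi}\sqrt{\phi_h^\top \Sigma_h^{-1}\phi_h} \;\lesssim\; \gamma\sqrt{N}\sum_{h=1}^H\sqrt{S_h} \;\leq\; \gamma\sqrt{N}\cdot\sqrt{H}\cdot\sqrt{\textstyle\sum_{h=1}^H S_h} \;=\; \gamma H\sqrt{N\bar\sigma},
\end{equation*}
which is the desired inequality. The main obstacle is getting a clean, dimension-free lower bound on $\sum_{n\notin\mathcal{E}_h}\phi_{h,n}\phi_{h,n}^\top$ restricted to $\Phi$: one cannot use a naive operator-norm subtraction, since that would cost a factor of $|\mathcal{E}_h|$ in the spectrum of $\Lambda_h$; the key realization is that the Weyl-type bound $\min_{\phi\in\Phi,\|\phi\|=1}\phi^\top A\phi \geq \min_{\phi\in\Phi,\|\phi\|=1}\phi^\top(A+B)\phi - \max_{\phi\in\Phi,\|\phi\|=1}\phi^\top B\phi$ applied with $B=\sum_{n\in\mathcal{E}_h}\phi_{h,n}\phi_{h,n}^\top + \lambda I_d$ only loses $|\mathcal{E}_h|+\lambda$ (not $|\mathcal{E}_h|\gamma$), which makes the optimization over $t_h$ go through.
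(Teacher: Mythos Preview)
Your proposal is correct and follows essentially the same approach as the paper: define the per-horizon bad set by a Markov-type threshold, lower bound $\Sigma_h$ on $\Phi$ via the Weyl-type inequality $\min_\phi \phi^\top A\phi \geq \min_\phi \phi^\top(A+B)\phi - \max_\phi \phi^\top B\phi$, optimize the threshold to balance $\gamma^{-1}$ against $|\mathcal{E}_h|$, and finish with Cauchy--Schwarz over $h$. The only cosmetic difference is that the paper parametrizes the threshold multiplicatively as $C_h\bar\sigma_h$ with $C_h=\Theta(N\gamma)$ rather than directly as $t_h$, which amounts to the same choice.
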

\begin{proof}
      First, we denote
      $\bar{\sigma}_h = N^{-1}\sum_{n}\sigma_{h,n}.$
      Informally, this implies that most individuals of $\sigma_{h,\cdot}$ is asymptotically on the order of $\bar{\sigma}_h $, with only a small amount of individuals being higher in order. To rule out the effect of the ``large'' ones, we group them into the following collection of sets:
    $$\mathcal{E}_h(C_h) = \{n\in[N]: \sigma_{h,n}\geq C_h\bar{\sigma}_h  \}.$$
    Here, we leave the choice of the truncation level $C_h$ open for now, but note that we allow the truncation levels $C_h$ vary across different timesteps $h$ and related to $\bar{\sigma}_h $.
    It follows by definition that $\sum_{h=1}^{H} \bar{\sigma}_h  = H\bar{\sigma}.$
    From an application of Markov's Inequality, the cardinality of set $\mathcal{E}_h(C_h)$ can be upper bounded as
    $$|\mathcal{E}_h(C_h)|\leq \frac{N }{C_h}.$$

    
    We now choose the truncation level $C_h$. To do so, we follow the steps below to quantify the effect induced by the trajectories with high variance (i.e. those that belong to $\mathcal{E}_h(C_h)$):
    \begin{align}
      \min_{\phi_h\in\Phi}\phi_h^{\top}{\Sigma_h^{\star}}\phi_h & \geq \min_{\phi_h\in\Phi}\phi_h^{\top}\left(\sum_{n=1}^{N}\frac{\phi_{h,n}\phi_{h,n}^{T}}{\sigma_{h,n}}\right)\phi_h\nonumber\\
      &\geq \min_{\phi_h\in\Phi}\phi_h^{\top}\bigg(\sum_{n\in[N]\backslash \mathcal{E}_h(C_h)}\frac{\phi_{h,n}\phi_{h,n}^{T}}{\sigma_{h,n} }\bigg)\phi_h\nonumber\\
      &\geq \frac{1}{C_h \bar{\sigma}_h}\min_{\phi_h\in\Phi}\phi_h^{\top}\bigg(\sum_{n\in[N]\backslash \mathcal{E}_h(C_h)}\phi_{h,n}\phi_{h,n}^{T}\bigg)\phi_h.\nonumber
    \end{align}
     We now utilize a basic matrix inequality that for any matrix $A,B$, we have $$\min_{\phi_h\in\Phi}\phi_h^{\top}A\phi_h\geq \min_{\phi_h\in\Phi}\phi_h^{\top}(A+B)\phi_h-\max_{\phi_h\in\Phi}\phi_h^{\top}B\phi_h,$$ 
     which allows us to further bound $\min_{\phi_h\in\Phi}\phi_h^{\top}{\Sigma_h^{\star}}\phi_h$ as
     \begin{align}
       \min_{\phi_h\in\Phi}\phi_h^{\top}{\Sigma_h^{\star}}\phi_h & \geq \frac{1}{C_h \bar{\sigma}_h}\min_{\phi_h\in\Phi}\phi_h^{\top}\bigg(\sum_{n=1}^{N}\phi_{h,n}\phi_{h,n}^{T} + \lambda I_d\bigg)\phi_h \nonumber\\ 
       &\qquad - \frac{1}{C_h \bar{\sigma}_h}\max_{\phi_h\in\Phi}\phi_h^{\top}\bigg(\sum_{n\in\mathcal{E}_h(C_h)}\phi_{h,n}\phi_{h,n}^{T} + \lambda I_d\bigg)\phi_h \nonumber\\
       &\gtrsim \frac{1}{C_h \bar{\sigma}_h}\bigg(\gamma^{-1} - \frac{N}{C_h}-\lambda\bigg),\nonumber
     \end{align}
    This leads to the following result:
    $$\min_{\phi_h\in\Phi}\phi_h^{\top} \Lambda_h \phi_h = \min_{\phi_h\in\Phi}(\phi_h^{\top} \Lambda_h^{-1} \phi_h)^{-1}  \gtrsim \left\{\max\left(\frac{c_{\off}(\gX_{\off})}{N_{\off}},\frac{d_{\on}}{N_{\on}}\right)\right\}^{-1} = \gamma^{-1}, $$
    where the first equality holds because $\Lambda_h$ is a linear transformation on the subspace $\Phi$. Equivalently, this holds from the variational characterization of the eigenvalues and the fact that the largest absolute eigenvalue is equal to the inverse of the smallest absolute eigenvalue of the inverse. As a result, in order to rule out the effect of the ``high variance trajectories",  we select the truncation level $\delta_h$ such that 
    $N/C_h = \Theta(\gamma^{-1}),$
     implying $C_h = \Theta(N\gamma)$. Hence, we obtain the following lower bound: $$ \min_{\phi_h\in\Phi}\phi_h^{\top}{\Sigma_h^{\star}}\phi_h\gtrsim \frac{1}{\gamma^2N \bar{\sigma}_h}.$$
     
     Finally, we note that
     \begin{align}
     \sum_{h=1}^{H}\max_{\phi_h\in\Phi}\sqrt{\phi_h^{\top}{\Sigma_h^{\star}}^{-1}\phi_h}&=\sum_{h=1}^{H}\bigg(\min_{\phi_h\in\Phi}\sqrt{\phi_h^{\top}\Sigma_h^{\star}\phi_h}\bigg)^{-1}\lesssim \gamma\sqrt{N}\sum_{h=1}^{H}\sqrt{\bar{\sigma}_h}\nonumber\leq\gamma H\sqrt{N\bar{\sigma}}.
     \end{align}
\end{proof}


\begin{lem}[Modified Lemma B.1 from \cite{zhou2022computationally}]
    \label{lem:zhou2022-B1-modified}
    Let $\gX_{\off}, \gX_{\on}$ be a partition of $\gS \times\gA \times[H]$, such that their images under the feature map, $\Phi_{\off}, \Phi_{\on}$ are subspaces of dimension $d_{\off}, d_{\on}$ respectively. Let $\left\{\sigma_k, \beta_k\right\}_{k \geq 1}$ be a sequence of non-negative numbers, $\alpha, \gamma>0,\left\{\mathbf{x}_k\right\}_{k \geq 1} \subset \mathbb{R}^d$ and $\left\|\mathbf{x}_k\right\|_2 \leq L$. Let $\left\{\mathbf{Z}_k\right\}_{k \geq 1}$ and $\left\{\bar{\sigma}_k\right\}_{k \geq 1}$ be recursively defined as follows: $\mathbf{Z}_1=\lambda \mathbf{I} + \mathbf{Z}_{\off}$ for some symmetric matrix $\mathbf{Z}_{\off}$, where $N=N_{\off}+K$, and we have
$$
\forall k \geq 1, \bar{\sigma}_k=\max \left\{\sigma_k, \alpha, \gamma\left\|\mathbf{x}_k\right\|_{\mathbf{z}_k^{-1}}^{1 / 2}\right\}, \mathbf{Z}_{k+1}=\mathbf{Z}_k+\mathbbm{1}_{\gX_{\on}}\mathbf{x}_k \mathbf{x}_k^{\top} / \bar{\sigma}_k^2
$$

Let $\iota=\log \left(1+N L^2 /\left(d \lambda \alpha^2\right)\right)$. Then we have
$$
\sum_{k=1}^K \min \left\{1, \beta_k\left\|\mathbf{x}_k\right\|_{\mathbf{z}_k^{-1}}\mathbbm{1}_{\gX_{\on}}\right\} \leq 2 d_{\on} \iota+2 \max _{k \in[K]} \beta_k \gamma^2 d_{\on} \iota+2 \sqrt{d_{\on} \iota} \sqrt{\sum_{k=1}^K \beta_k^2\left(\sigma_k^2+\alpha^2\right)}.
$$
\end{lem}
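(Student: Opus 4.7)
The plan is to follow the proof of Lemma B.1 of Zhou and Gu (2022) essentially verbatim, making a single surgical modification at the elliptical-potential step so that the rank of the cumulative updates (rather than the ambient dimension) controls the log-determinant. The structural observation enabling this is that every nonzero update to $\mathbf{Z}_k$ is gated by $\mathbbm{1}_{\gX_{\on}}$ and therefore supported on $\Phi_{\on}$, a subspace of dimension $d_{\on}$. Let $U_{\on}\in\mathbb{R}^{d\times d_{\on}}$ be an orthonormal basis of $\Phi_{\on}$, and for any $k$ with $\mathbbm{1}_{\gX_{\on}}=1$ write $\mathbf{x}_k=U_{\on}\tilde{\mathbf{x}}_k$ with $\tilde{\mathbf{x}}_k\in\mathbb{R}^{d_{\on}}$ and $\|\tilde{\mathbf{x}}_k\|_2\leq L$. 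The LHS summand vanishes whenever $\mathbbm{1}_{\gX_{\on}}=0$, so it suffices to control the sub-sum over indices $k\in\gX_{\on}$.

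The crux of the proof is a dimension-reduced log-determinant bound. Writing $\mathbf{Z}_{K+1}=\mathbf{Z}_1+U_{\on}S\,U_{\on}^\top$ with $S:=\sum_{k\in\gX_{\on}}\tilde{\mathbf{x}}_k\tilde{\mathbf{x}}_k^\top/\bar\sigma_k^2$, the matrix-determinant lemma gives
\begin{align*}
\log\frac{\det\mathbf{Z}_{K+1}}{\det\mathbf{Z}_1}
=\log\det\!\bigl(I_{d_{\on}}+U_{\on}^\top\mathbf{Z}_1^{-1}U_{\on}\,S\bigr)
\leq d_{\on}\,\iota,
\end{align*}
where the inequality uses $\|\mathbf{Z}_1^{-1}\|_{\mathrm{op}}\leq 1/\lambda$, $\bar\sigma_k\geq\alpha$, $\|\tilde{\mathbf{x}}_k\|_2\leq L$, and AM--GM on eigenvalues (any $\log(d/d_{\on})$ gap between this bound and the stated $\iota$ can be absorbed into the polylogarithmic factor the paper carries throughout). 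Crucially, the offline initialization $\mathbf{Z}_{\off}$ sits entirely inside $\mathbf{Z}_1$ and cancels in the determinant ratio; it does not inflate the effective rank. The elliptical potential lemma then yields $\sum_k\min\{1,\|\mathbf{x}_k\|_{\mathbf{Z}_k^{-1}}^2\mathbbm{1}_{\gX_{\on}}\}\lesssim d_{\on}\iota$---the Zhou--Gu potential bound with $d$ replaced by $d_{\on}$.

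With this improved potential in hand, the remainder is a direct transcription of the Zhou--Gu argument: partition the indices according to which term saturates $\bar\sigma_k=\max\{\sigma_k,\alpha,\gamma\|\mathbf{x}_k\|_{\mathbf{Z}_k^{-1}}^{1/2}\}$; on the set where $\gamma\|\mathbf{x}_k\|_{\mathbf{Z}_k^{-1}}^{1/2}$ dominates, bound the summand by $\beta_k\gamma^2\|\mathbf{x}_k/\bar\sigma_k\|_{\mathbf{Z}_k^{-1}}^2$ and sum via the improved potential to obtain the $2\max_k\beta_k\gamma^2 d_{\on}\iota$ term; on the complement, $\bar\sigma_k\leq\max\{\sigma_k,\alpha\}$ and Cauchy--Schwarz against the potential delivers the $2\sqrt{d_{\on}\iota}\sqrt{\sum_k\beta_k^2(\sigma_k^2+\alpha^2)}$ term, together with the leading $2d_{\on}\iota$. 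The main obstacle, and the only place where genuinely new work is needed, is the log-determinant calculation above; it rests entirely on the $\mathbbm{1}_{\gX_{\on}}$ gating forcing every update into the rank-$d_{\on}$ subspace $\Phi_{\on}$---without this gating, only the ambient $d\iota$ bound of the original lemma would be available.
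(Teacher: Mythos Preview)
Your proposal is correct and follows the same high-level plan as the paper: reproduce the Zhou--Gu argument verbatim except at the elliptical-potential step, where the fact that every update is supported on $\Phi_{\on}$ is exploited to replace $d$ by $d_{\on}$.

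The one place your argument genuinely differs from the paper is in \emph{how} that dimension reduction is carried out. The paper first drops $\mathbf{Z}_{\off}$ (using $\mathbf{Z}_k\succeq \lambda I+\sum_{n<k}\mathbbm{1}_{\gX_{\on}}\mathbf{x}_n\mathbf{x}_n^\top/\bar\sigma_n^2$ to upper-bound the quadratic form), then invokes a projection identity (its Lemma~\ref{lem:hole-digging}) to show that for $\mathbf{x}_k\in\Phi_{\on}$ the quadratic form against the $d$-dimensional inverse equals that against the $d_{\on}$-dimensional projected inverse, and finally applies the standard potential lemma in $\mathbb{R}^{d_{\on}}$. You instead keep $\mathbf{Z}_{\off}$ inside $\mathbf{Z}_1$ and control the log-determinant ratio directly via the matrix-determinant lemma, using that $\mathbf{Z}_{K+1}-\mathbf{Z}_1=U_{\on}SU_{\on}^\top$ has rank $\le d_{\on}$. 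Your route is slightly cleaner: it avoids introducing an auxiliary linear-algebra lemma and does not need to discard $\mathbf{Z}_{\off}$, since the offline term cancels in the determinant ratio automatically. The paper's route, in exchange, makes the reduction to a genuine $d_{\on}$-dimensional elliptical-potential problem more explicit. Both arrive at the same bound, and both inherit the same harmless $\log(d/d_{\on})$ slack you flagged. Beyond this single step, the $\mathcal{I}_1/\mathcal{I}_2$ and $\mathcal{J}_1/\mathcal{J}_2$ decompositions and the Cauchy--Schwarz finish are identical to the paper.
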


\begin{proof}
    The proof roughly follows that of Lemma B.1 in \cite{zhou2022computationally}, except that we have to make modifications as necessary to tighten the dimension dependence to $d_{\on}$ and incorporate the offline data. 
    
    Decompose the set $[K]$ into a union of two disjoint subsets $[K]=\mathcal{I}_1 \cup \mathcal{I}_2$,
    $$\mathcal{I}_1=\left\{k \in[K]:\left\|\mathbf{x}_k / \bar{\sigma}_k\right\|_{\mathbf{Z}_k^{-1}}\mathbbm{1}_{\gX_{\on}} \geq 1\right\}, \mathcal{I}_2=[K] \backslash \mathcal{I}_1 .$$

    Then the following upper bound of $|\gI_{1}|$ holds, where the projector $\gP_{\on}$ onto $\Phi_{\on}$ has the decomposition $\gP_{\on}= U_{\on}U_{\on}^{\top}$ by the thin SVD, and we write $\mathbf{u}_k = U_{\on}^{\top}\mathbf{x}_k$:
    \begin{align*}
        |\gI_{1}| &= \sum_{k \in \gI_1} \min\left\{1, ||\mathbf{x}_k/\bar{\sigma}_k||^2_{\mathbf{Z}_k^{-1}}\mathbbm{1}_{\gX_{\on}} \right\}\\
        &\leq \sum_{k=1}^K \min\left\{1, ||\mathbf{x}_k/\bar{\sigma}_k||^2_{\mathbf{Z}_k^{-1}}\mathbbm{1}_{\gX_{\on}} \right\} \\
        &\leq \sum_{k=1}^K \min\left\{1, \bar{\sigma}_k^{-2}\mathbf{x}_k^{\top}\mathbf{Z}_k^{-1}\mathbf{x}_k\mathbbm{1}_{\gX_{\on}} \right\}\\
        &= \sum_{k=1}^K \min\left\{1,  (U_{\on}U_{\on}^{\top}\mathbf{x}_k)^{\top}\mathbf{Z}_k^{-1}(U_{\on}U_{\on}^{\top}\mathbf{x}_k)\mathbbm{1}_{\gX_{\on}}\right\}\\
        &= \sum_{k=1}^K \min\left\{1,   \mathbf{x}_k^{\top}U_{\on}U_{\on}^{\top}\mathbf{Z}_k^{-1}U_{\on}U_{\on}^{\top}\mathbf{x}_k\mathbbm{1}_{\gX_{\on}}\right\}\\
        &=  \sum_{k=1}^K \min\left\{1,    \mathbf{x}_k^{\top}U_{\on}U_{\on}^{\top}\left(\sum_{n=1}^{k}\mathbbm{1}_{\gX_{\on}}\bar{\sigma}_n^{-2}\mathbf{x}_n\mathbf{x}_n^{\top} + \lambda\mathbf{I}_{d}\right)^{-1} U_{\on}U_{\on}^{\top}\mathbf{x}_k\mathbbm{1}_{\gX_{\on}}\right\}.
    \end{align*}

    By Lemma \ref{lem:hole-digging}, we can take the $U_{\on}$ inside the inverse and conclude that
    \begin{align*}
        &\sum_{k=1}^K \min\left\{1,    \mathbf{x}_k^{\top}U_{\on}U_{\on}^{\top}\left(\sum_{n=1}^{k}\mathbbm{1}_{\gX_{\on}}\bar{\sigma}_n^{-2}\mathbf{x}_n\mathbf{x}_n^{\top} + \lambda\mathbf{I}_{d}\right)^{-1} U_{\on}U_{\on}^{\top}\mathbf{x}_k\mathbbm{1}_{\gX_{\on}}\right\}\\
        & = \sum_{k=1}^K \min\left\{1,    \mathbf{x}_k^{\top}U_{\on}\left(\sum_{n=1}^{k}\mathbbm{1}_{\gX_{\on}}\bar{\sigma}_n^{-2}U_{\on}^{\top}\mathbf{x}_n\mathbf{x}_n^{\top}U_{\on} + \lambda\mathbf{I}_{d_{\on}}\right)^{-1} U_{\on}^{\top}\mathbf{x}_k\mathbbm{1}_{\gX_{\on}}\right\}.
    \end{align*}
    Intuitively, this is because all the $\mathbbm{1}_{\gX_{\on}}U_{\on}^{\top}\mathbf{x}_{n}$ and $\mathbbm{1}_{\gX_{\on}}\mathbf{x}_{n}$ are both in $\Phi_{\on}$, and in that case the projection is just the identity.

  Writing $\mathbf{u}_n = U_{\on}^{\top}\mathbf{x}_n$, and invoking Lemma D.5 of \cite{zhou2022computationally} (which is a restatement of Lemma 11 of \cite{abbas2011improved}) and the fact that $\left\|\mathbf{x}_k / \bar{\sigma}_k\right\|_2 \leq L / \alpha$, it holds that
    \begin{align*}
        &\sum_{k=1}^K \min\left\{1,    \mathbf{x}_k^{\top}U_{\on}\left(\sum_{n=1}^{k}\mathbbm{1}_{\gX_{\on}}\bar{\sigma}^{-2}U_{\on}^{\top}\mathbf{x}_n\mathbf{x}_n^{\top}U_{\on} + \lambda\mathbf{I}_{d_{\on}}\right)^{-1} U_{\on}^{\top}\mathbf{x}_k\mathbbm{1}_{\gX_{\on}}\right\}\\
        &\sum_{k=1}^K \min\left\{1,  \mathbf{u}_k^{\top}\left(\sum_{n=1}^{k}\mathbbm{1}_{\gX_{\on}}\bar{\sigma}^{-2}\mathbf{u}_n\mathbf{u}_n^{\top} + \lambda\mathbf{I}_{d_{\on}}\right)^{-1} \mathbf{u}_k\mathbbm{1}_{\gX_{\on}}\right\}\\
        &\leq 2d_{\on}\iota,
    \end{align*}
    as desired, and conclude that
    $|\gI_{1}| \leq 2d_{\on}\iota.$

    The rest of the proof follows \cite{zhou2022computationally} more closely. By the same argument that \cite{zhou2022computationally} use, 
    \begin{align*}
    \sum_{k \in[K]} \min \left\{1, \beta_k\left\|\mathbf{x}_k\right\|_{\mathbf{z}_k^{-1}}\mathbbm{1}_{\gX_{\on}}\right\} \leq 2 d_{\on} \iota+\sum_{k \in \mathcal{I}_2} \beta_k \bar{\sigma}_k\left\|\mathbf{x}_k / \bar{\sigma}_k\right\|_{\mathbf{z}_k^{-1}}\mathbbm{1}_{\gX_{\on}}.
    \end{align*}

    Decompose $\mathcal{I}_2=\mathcal{J}_1 \cup \mathcal{J}_2$, where
    $$
    \mathcal{J}_1=\left\{k \in \mathcal{I}_2: \bar{\sigma}_k=\sigma_k \cup \bar{\sigma}_k=\alpha\right\}, \mathcal{J}_2=\left\{k \in \mathcal{I}_2: \bar{\sigma}_k=\gamma \sqrt{\left\|\mathbf{x}_k\right\|_{\mathbf{z}_k^{-1}}}\mathbbm{1}_{\gX_{\on}}\right\} .
    $$

    Similar to \cite{zhou2022computationally},
    $$
    \begin{aligned}
    \sum_{k \in \mathcal{J}_1} \beta_k \bar{\sigma}_k\left\|\mathbf{x}_k / \bar{\sigma}_k\right\|_{\mathbf{z}_k^{-1}}\mathbbm{1}_{\gX_{\on}} & \leq \sum_{k \in \mathcal{J}_1} \beta_k\left(\sigma_k+\alpha\right) \mathbbm{1}_{\gX_{\on}}\min \left\{1,\left\|\mathbf{x}_k / \bar{\sigma}_k\right\|_{\mathbf{z}_k^{-1}}\mathbbm{1}_{\gX_{\on}}\right\} \\
    & \leq \sum_{k=1}^K \beta_k\left(\sigma_k+\alpha\right) \min \left\{1,\left\|\mathbf{x}_k / \bar{\sigma}_k\right\|_{\mathbf{z}_k^{-1}}\mathbbm{1}_{\gX_{\on}}\right\} \\
    & \leq \sqrt{2 \sum_{k=1}^K\left(\sigma_k^2+\alpha^2\right) \beta_k^2} \sqrt{\sum_{k=1}^K \min \left\{1,\left\|\mathbf{x}_k / \bar{\sigma}_k\right\|_{\mathbf{z}_k^{-1}}\mathbbm{1}_{\gX_{\on}}\right\}^2} \\
    & \leq 2 \sqrt{\sum_{k=1}^K \beta_k^2\left(\sigma_k^2+\alpha^2\right)} \sqrt{d_{\on} \iota},
    \end{aligned}
    $$
    and as for $k\in \gJ_2$ we have that $\bar{\sigma}_k = \gamma^2\left\|\mathbf{x}_k / \bar{\sigma}_k\right\|_{\mathbf{Z}_k^{-1}}\mathbbm{1}_{\gX_{\on}}$,
    \begin{align*}
        \sum_{k \in \mathcal{J}_2} \beta_k \bar{\sigma}_k\left\|\mathbf{x}_k / \bar{\sigma}_k\right\|_{\mathbf{Z}_k^{-1}}\mathbbm{1}_{\gX_{\on}}&=\gamma^2 \cdot \sum_{k \in \mathcal{J}_1} \beta_k\left\|\mathbf{x}_k / \bar{\sigma}_k\right\|_{\mathbf{Z}_k^{-1}}^2\mathbbm{1}_{\gX_{\on}}\\
        &=\gamma^2 \cdot \sum_{k=1}^K \beta_k \min \left\{1,\left\|\mathbf{x}_k / \bar{\sigma}_k\right\|_{\mathbf{Z}_k^{-1}}^2\mathbbm{1}_{\gX_{\on}}\right\} \leq 2 \max _{k \in[K]} \beta_k \gamma^2 d_{\on} \iota.
    \end{align*}
    
    Therefore,
    $$
    \sum_{k=1}^K \min \left\{1, \beta_k\left\|\mathbf{x}_k\right\|_{\mathbf{z}_k^{-1}}\mathbbm{1}_{\gX_{\on}}\right\} \leq 2 d_{\on} \iota+2 \max _{k \in[K]} \beta_k \gamma^2 d_{\on} \iota+2 \sqrt{d_{\on} \iota} \sqrt{\sum_{k=1}^K \beta_k^2\left(\sigma_k^2+\alpha^2\right)}.
    $$
\end{proof}

\begin{lem}[Modified Version of Theorem 4.3,  \cite{zhou2022computationally}]
    \label{lem:4.1}
    Let $\left\{\mathcal{G}_n\right\}_{n=1}^{N}$ be a filtration, and $\left\{\mathbf{x}_n, \eta_n\right\}_{n=1}^N$ be a stochastic process such that $\mathbf{x}_n \in \mathbb{R}^d$ is $\mathcal{G}_n$-measurable and $\eta_n \in \mathbb{R}$ is $\mathcal{G}_{n+1}$-measurable. Let $L, \sigma, \lambda, \epsilon>0, \boldsymbol{\mu}^* \in \mathbb{R}^d$. Arrange the datapoints from the offline and online samples as follows, $1,...,N_{\off}, N_{\off}+1,...,N_{\off}+N_{\on}$. For $n =1,...,N$, let $y_n=\left\langle\boldsymbol{\mu}^*, \mathbf{x}_n\right\rangle+\eta_n$ and suppose that $\eta_n, \mathbf{x}_n$ also satisfy
$$
\mathbb{E}\left[\eta_n \mid \mathcal{G}_n\right]=0, \mathbb{E}\left[\eta_n^2 \mid \mathcal{G}_n\right] \leq \sigma^2,\left|\eta_n\right| \leq R,\left\|\mathbf{x}_n\right\|_2 \leq L.
$$

For $n =1,...,N$, let $\mathbf{Z}_n=\lambda \mathbf{I}+\sum_{i=1}^n \mathbf{x}_i \mathbf{x}_i^{\top}, \mathbf{b}_n=\sum_{i=1}^n y_i \mathbf{x}_i, \boldsymbol{\mu}_n=\mathbf{Z}_n^{-1} \mathbf{b}_n$, and

\begin{align*}
\beta_n= & 12 \sqrt{\sigma^2 d \log \left(1+n L^2 /(d \lambda)\right) \log \left(32(\log (R / \epsilon)+1) n^2 / \delta\right)} \\
& +24 \log \left(32(\log (R / \epsilon)+1) n^2 / \delta\right) \max _{1 \leq i \leq n}\left\{\left|\eta_i\right| \min \left\{1,\left\|\mathbf{x}_i\right\|_{\mathbf{z}_{i-1}^{-1}}\right\}\right\}\\
&+6 \log \left(32(\log (R / \epsilon)+1) n^2 / \delta\right) \epsilon .
\end{align*}

Then, for any $0<\delta<1$, we have with probability at least $1-\delta$ that,
$$
\forall n =1,...,N,\left\|\sum_{i=1}^n \mathbf{x}_i \eta_i\right\|_{\mathbf{z}_n^{-1}} \leq \beta_n,\left\|\boldsymbol{\mu}_n-\boldsymbol{\mu}^*\right\|_{\mathbf{z}_n} \leq \beta_n+\sqrt{\lambda}\left\|\boldsymbol{\mu}^*\right\|_2
$$
\end{lem}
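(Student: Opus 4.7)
The plan is to treat this as essentially a restatement of Theorem 4.3 of \cite{zhou2022computationally}, with the only change being the bookkeeping needed to accommodate the fact that the index $n$ runs first through the $N_{\off}$ offline samples and then through the $N_{\on}$ online samples. The proof reduces to two standard steps: a deterministic decomposition of the ridge estimate, and a self-normalized concentration bound on a vector-valued martingale.

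First I would carry out the standard ridge decomposition. From the definitions $y_i = \langle \boldsymbol{\mu}^*, \mathbf{x}_i\rangle + \eta_i$, $\mathbf{b}_n = \sum_{i=1}^n y_i \mathbf{x}_i$, and $\mathbf{Z}_n = \lambda \mathbf{I} + \sum_{i=1}^n \mathbf{x}_i \mathbf{x}_i^{\top}$, one computes
\begin{align*}
\boldsymbol{\mu}_n - \boldsymbol{\mu}^*
= \mathbf{Z}_n^{-1}\!\left(\sum_{i=1}^n \mathbf{x}_i \eta_i - \lambda \boldsymbol{\mu}^*\right).
\end{align*}
Taking $\mathbf{Z}_n$-norms and using the triangle inequality $\|\mathbf{Z}_n^{-1} \mathbf{v}\|_{\mathbf{Z}_n} = \|\mathbf{v}\|_{\mathbf{Z}_n^{-1}}$ together with $\|\lambda \boldsymbol{\mu}^*\|_{\mathbf{Z}_n^{-1}} \leq \sqrt{\lambda}\,\|\boldsymbol{\mu}^*\|_2$ (since $\mathbf{Z}_n \succeq \lambda \mathbf{I}$) reduces the second claim of the lemma to the first: it suffices to show $\|\sum_{i=1}^n \mathbf{x}_i \eta_i\|_{\mathbf{Z}_n^{-1}} \leq \beta_n$ uniformly in $n$ with probability at least $1-\delta$.

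Second, I would verify that the hypotheses of Theorem 4.3 of \cite{zhou2022computationally} hold on the concatenated stream. The sequence $\{\mathbf{x}_i \eta_i\}$ is a martingale difference sequence with respect to $\{\mathcal{G}_i\}$: $\mathbf{x}_i$ is $\mathcal{G}_i$-measurable, $\mathbb{E}[\eta_i \mid \mathcal{G}_i] = 0$, $\mathbb{E}[\eta_i^2 \mid \mathcal{G}_i] \leq \sigma^2$, $|\eta_i| \leq R$, and $\|\mathbf{x}_i\|_2 \leq L$. Crucially, these conditions do not care whether the $i$-th sample originated from the offline or the online dataset: one only needs to choose $\mathcal{G}_i$ to contain all information collected up through step $i-1$ and the exogenous covariate $\mathbf{x}_i$. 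For $i \leq N_{\off}$ this is the filtration generated by the offline trajectories, and for $N_{\off} < i \leq N_{\off} + N_{\on}$ it is extended by the online history; in either case the conditional moment bounds on $\eta_i$ are inherited from the hypothesis of the lemma.

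Third, I would invoke the Zhou--Gu self-normalized Freedman-type bound as a black box. Their Theorem 4.3 is proved by (i) bounding $\|\sum_{i=1}^n \mathbf{x}_i \eta_i\|_{\mathbf{Z}_n^{-1}}$ using their Theorem 4.1, a Bernstein-type self-normalized inequality whose truncation level depends on a free parameter, and (ii) running a dyadic peeling argument on the value of $M_n := \max_{1 \leq i \leq n} |\eta_i|\min\{1, \|\mathbf{x}_i\|_{\mathbf{Z}_{i-1}^{-1}}\}$, discretized on the log-scale $[\epsilon, R]$, together with a union bound over at most $\log(R/\epsilon)+1$ levels and over $n$. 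This produces exactly the three-term bound $\beta_n$ in the statement, with the $\sqrt{\sigma^2 d \log(\cdot)\log(\cdot)}$ term coming from the Freedman variance, the $\max_i |\eta_i|\min\{1,\|\mathbf{x}_i\|_{\mathbf{Z}_{i-1}^{-1}}\}$ term from the data-dependent truncation level, and the $\epsilon$ term from the smallest dyadic bucket.

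The main obstacle, such as it is, is conceptual rather than technical: one must confirm that the hybrid indexing does not break the martingale property. Since the result is stated in purely sequential terms and the filtration can be enlarged arbitrarily between offline and online phases, no change to the original argument of \cite{zhou2022computationally} is required; the rest of the proof is essentially a citation.
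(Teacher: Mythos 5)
Your proposal is correct and takes essentially the same route as the paper: the paper's proof is likewise a thin wrapper that orders the offline episodes first, appends the online ones, checks that the filtration/martingale structure is preserved across the concatenation, and then invokes Theorem 4.3 of \cite{zhou2022computationally} directly. Your added ridge-decomposition step and the sketch of the internal peeling argument are harmless extra detail already contained in the cited theorem.
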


\begin{proof}
    The proof is merely a small wrapper over Theorem 4.3 of \cite{zhou2022computationally}, where we adapt this to our setting in the same way that \cite{tan2024natural} do in Lemma 1 of their paper. That is, we pre-append the offline data to the online data, and generate the $\bf{Z}_n, \bf{b}_n, \bf{\mu}_n, \beta_n$ accordingly. 

    As in Lemma 1 of \cite{tan2024natural}, let $N=N_{\off}+N_{\on}$. Order the $N_{\off}$ offline episodes arbitrarily, to form episodes $1,...,N_{\off}$, and then begin the online episodes from episode $N_{\off}+1,...,N$. Then, we can directly apply Theorem 4.3 of \cite{zhou2022computationally} to recover the desired result. 
\end{proof}

\begin{lem}
  Suppose that $W = \mathbb{R}^m$ and $V = \mathbb{R}^n$, where $n<m$. Let ${\bm U}: W\mapsto V$ be a linear transformation and that $S = ({\bm U}^{\top} {\bm U}) W$. As ${\bm v}, {\bm v}_1,\ldots, {\bm v}_n\in S$, we have
  $${\bm v}^{\top} {\bm U}^{\top} {\bm U}\Big(\sum_{j=1}^{k}{\bm v}_i {\bm v}_i^{\top} + \lambda {\bm I}_{m}\Big)^{-1} {\bm U}^{\top} {\bm U} {\bm v} = {\bm v}^{\top} {\bm U}^{\top} \Big(\sum_{j=1}^{k}{\bm U} {\bm v}_i {\bm v}_i^{\top} {\bm U}^{\top} + \lambda {\bm I}_{n}\Big)^{-1} {\bm U} {\bm v} $$
  \label{lem:hole-digging}
\end{lem}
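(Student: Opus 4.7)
The plan is to exploit the fact that ${\bm v}, {\bm v}_1, \ldots, {\bm v}_k \in S = ({\bm U}^\top {\bm U})W$ implies ${\bm U}^\top {\bm U}{\bm v} = {\bm v}$ and ${\bm U}^\top {\bm U}{\bm v}_i = {\bm v}_i$. In the context where this lemma is invoked, ${\bm U}^\top {\bm U}$ is the orthogonal projection onto $\Phi_{\on}$ (taking ${\bm U} = U_{\on}^\top$ with $U_{\on}$ having orthonormal columns), so implicitly ${\bm U}{\bm U}^\top = I_n$. Under this structural assumption, every element of $S$ can be written as ${\bm U}^\top {\bm w}$ for some ${\bm w} \in \mathbb{R}^n$; concretely, ${\bm v} = {\bm U}^\top {\bm w}$ with ${\bm w} := {\bm U}{\bm v}$, and similarly ${\bm v}_i = {\bm U}^\top {\bm w}_i$ with ${\bm w}_i := {\bm U}{\bm v}_i$. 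I would then introduce the compact $n \times n$ Gram matrix $B := \sum_i {\bm w}_i {\bm w}_i^\top$; observe that $\sum_i {\bm v}_i {\bm v}_i^\top = {\bm U}^\top B {\bm U}$ and, for the RHS, $\sum_i {\bm U}{\bm v}_i {\bm v}_i^\top {\bm U}^\top = B$.

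Next, I would identify the action of $(\sum_i {\bm v}_i {\bm v}_i^\top + \lambda I_m)^{-1}$ on ${\bm v}$ by a direct ansatz. Setting ${\bm u} := {\bm U}^\top (B + \lambda I_n)^{-1} {\bm w}$, a short computation gives
\begin{align*}
({\bm U}^\top B {\bm U} + \lambda I_m){\bm u}
&= {\bm U}^\top B \,{\bm U}{\bm U}^\top (B+\lambda I_n)^{-1} {\bm w} + \lambda {\bm U}^\top (B+\lambda I_n)^{-1} {\bm w} \\
&= {\bm U}^\top B (B+\lambda I_n)^{-1}{\bm w} + \lambda {\bm U}^\top (B+\lambda I_n)^{-1}{\bm w} \\
&= {\bm U}^\top (B+\lambda I_n)(B+\lambda I_n)^{-1}{\bm w} = {\bm U}^\top {\bm w} = {\bm v},
\end{align*}
where the crucial cancellation in the middle step uses ${\bm U}{\bm U}^\top = I_n$. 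Since $\sum_i {\bm v}_i {\bm v}_i^\top + \lambda I_m$ is positive definite for $\lambda > 0$, this ${\bm u}$ is the unique solution, so $(\sum_i {\bm v}_i {\bm v}_i^\top + \lambda I_m)^{-1} {\bm v} = {\bm U}^\top (B+\lambda I_n)^{-1} {\bm w}$.

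Plugging in, the LHS becomes ${\bm v}^\top {\bm U}^\top {\bm U}\cdot {\bm U}^\top (B+\lambda I_n)^{-1}{\bm w}$; using ${\bm U}{\bm U}^\top = I_n$ again together with ${\bm v}^\top {\bm U}^\top = {\bm w}^\top$, this collapses to ${\bm w}^\top (B+\lambda I_n)^{-1}{\bm w}$. The RHS, rewritten as ${\bm v}^\top {\bm U}^\top (B+\lambda I_n)^{-1} {\bm U}{\bm v}$, equals the same scalar, completing the identification. The only real subtlety is making the structural hypothesis on ${\bm U}$ explicit — equivalently, that ${\bm U}^\top {\bm U}$ is idempotent, which is what makes $S=({\bm U}^\top {\bm U})W$ a subspace on which ${\bm U}^\top {\bm U}$ acts as the identity and which a simple $2\times 2$ counterexample shows is indispensable; once that is on the table, the proof is a short identity-chase and there is no deeper obstacle.
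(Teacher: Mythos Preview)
Your proof is correct and rests on the same structural hypothesis as the paper's (namely ${\bm U}{\bm U}^\top = I_n$, equivalently that ${\bm U}^\top{\bm U}$ is an orthogonal projection), but the execution differs. The paper performs an explicit orthogonal change of coordinates: it writes ${\bm U} = {\bm D}{\bm Q}$ with ${\bm Q}$ orthogonal and ${\bm D} = (I_n,\,0)$, observes that in the rotated coordinates $\tilde{\bm v} = {\bm Q}{\bm v}$ every vector in $S$ has its last $m-n$ entries equal to zero, and then inverts $\sum_i \tilde{\bm v}_i\tilde{\bm v}_i^\top + \lambda I_m$ as a block-diagonal matrix. You instead bypass the coordinate change entirely by positing the ansatz $(\sum_i {\bm v}_i{\bm v}_i^\top + \lambda I_m)^{-1}{\bm v} = {\bm U}^\top(B+\lambda I_n)^{-1}{\bm w}$ and verifying it directly via ${\bm U}{\bm U}^\top = I_n$. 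Your route is shorter and avoids introducing the auxiliary orthogonal matrix; the paper's route has the minor advantage of producing an explicit formula for the full inverse (not just its action on vectors in $S$), though only the action on $S$ is ever used. You also do well to flag explicitly that the identity fails without the semi-orthogonality assumption on ${\bm U}$, something the paper leaves implicit in its phrase ``for projection matrix ${\bm U}$.''
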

\begin{proof}
    For projection matrix ${\bm U}$, there exists orthogonal matrix ${\bm Q}\in\mathbb{R}^{m\times m}$ and diagonal matrix ${\bm D} = ({\bm I}_{n}, {\bm 0}_{n\times (m-n)})$ such that ${\bm U}= {\bm {DQ}}$. We further define ${\bm u} = {\bm U} {\bm v} $, $\tilde{{\bm v}} = {\bm Q} {\bm v}$, ${\bm u}_i = {\bm U} {\bm v}_i$ and $\tilde{{\bm v}}_i ={\bm Q} {\bm v}_i$ for $1\leq i\leq n$. Then, we note that as ${\bm v}\in S$, we have ${\bm v} = {\bm {U^{\top} U}} {\bm v} = {\bm {Q^{\top} \Lambda Q}} {\bm v}$, where ${\bm \Lambda} = \text{diag}({\bm I}_{n}, {\bm 0}_{m-n})$, which is equivalent to $\tilde{{\bm v}} = \Lambda \tilde{{\bm v}}$. As a result, we may conclude that $\tilde{{\bm v}}^{\top} = ({\bm u}^{\top}, {\bm 0}_{m-n})$.

    Therefore, with a direct calculation, one will see that
    \begin{align}
        \Big(\sum_{j=1}^{k}{\bm v}_i {\bm v}_i^{\top} + \lambda {\bm I}_{m}\Big)^{-1}
       & = \Big(\sum_{j=1}^{k} Q^{\top} \tilde{{\bm v}}_i\tilde{{\bm v}}_i^{\top} Q+ \lambda {\bm I}_{m}\Big)^{-1}\nonumber\\
       & = {\bm Q}^{\top} \Big(\sum_{j=1}^{k}\tilde{{\bm v}}_i \tilde{{\bm v}}_i^{\top} + \lambda {\bm I}_{m}\Big)^{-1} {\bm Q}\nonumber\\
     &  ={\bm Q}^{\top} \begin{pmatrix}
          \sum_{i=1}^{k} {\bm u}_i {\bm u}_i^{\top} + \lambda{\bm I}_n & {\bm 0} \\
          {\bm 0} & \lambda{\bm I}_{m-n}
      \end{pmatrix}^{-1} {\bm Q} \nonumber\\
      & = {\bm Q}^{\top}\begin{pmatrix}
          \Big(\sum_{j=1}^{k}{\bm u}_i {\bm u}_i^{\top} + \lambda {\bm I}_{n}\Big)^{-1} & {\bm 0} \\
          {\bm 0} & \lambda^{-1}{\bm I}_{m-n}
      \end{pmatrix}
       {\bm Q}. \nonumber
    \end{align}
    This will establish
    our desired conclusion
    \begin{align}
      \text{LHS} & = {\bm v}^{\top} \Big(\sum_{j=1}^{k}{\bm v}_i {\bm v}_i^{\top} + \lambda {\bm I}_{m}\Big)^{-1}  {\bm v}  = \tilde{{\bm v}}^{\top} \begin{pmatrix}
          \Big(\sum_{j=1}^{k}{\bm u}_i {\bm u}_i^{\top} + \lambda {\bm I}_{n}\Big)^{-1} & {\bm 0} \\
          {\bm 0} & \lambda^{-1}{\bm I}_{m-n}
      \end{pmatrix} \tilde{{\bm v}} \nonumber \\
      & = {\bm u}^{\top}  \Big(\sum_{j=1}^{k}{\bm u}_i {\bm u}_i^{\top} + \lambda {\bm I}_{n}\Big)^{-1} {\bm u}  = \text{RHS}.\nonumber
    \end{align}
\end{proof}


\end{document}